\DeclarePairedDelimiter{\ceil}{\lceil}{\rceil}
\def\R{\mathbb{R}}
\def\omit#1{}
\newcommand\independent{\protect\mathpalette{\protect\independenT}{\perp}}
\def\independenT#1#2{\mathrel{\rlap{$#1#2$}\mkern2mu{#1#2}}}
\let\emptyset\varnothing
\newtheorem{theorem}{Theorem}
\newtheorem{lemma}{Lemma}[theorem]
\newtheorem{assumption}{Assumption}
\def\X{\mathbf{X}}
\newcommand{\rulesep}{\unskip\ \vrule\ }
\title{Discovering Nonlinear Relations with Minimum Predictive Information Regularization}
\author{%
  Tailin Wu\thanks{Correspondence to tailin@mit.edu. Major work was done as an intern at NVIDIA.} \\
  MIT \\
  \texttt{tailin@mit.edu} 
  \And
  Thomas Breuel \\
  NVIDIA \\
   \texttt{tbreuel@nvidia.com} \\
   \And
   Michael Skuhersky \\
   MIT \\
   \texttt{vex@mit.edu} \\
    \And
   Jan Kautz \\
   NVIDIA \\
   \texttt{jkautz@nvidia.com} \\
}
\begin{document}

\maketitle

\begin{abstract}
Identifying the underlying directional relations from observational time series with nonlinear interactions and complex relational structures is key to a wide range of applications, yet remains a hard problem. In this work, we introduce a novel minimum predictive information regularization method to infer directional relations from time series, allowing deep learning models to discover nonlinear relations. Our method\footnote{The code for the methods and experiments is open-sourced at \href{https://github.com/tailintalent/causal}{github.com/tailintalent/causal}.} substantially outperforms other methods for learning nonlinear relations in synthetic datasets, and discovers the directional relations in a video game environment and a heart-rate vs. breath-rate dataset. 
\end{abstract}

\section{Introduction and Related Work}

Imagine a dataset with tens or hundreds of observational time series. There may exist interesting directional relations between the time series which we want to uncover, but their relation graph may be complicated, and the relation may be nonlinear as we do not know its functional form. How can we discover the underlying relations of those challenging scenarios in an efficient way, or at least identify candidate relations that are worth further investigation by a researcher? Problems of this type are omnipresent and important in a variety of scientific endeavors and applications, e.g., gene regulatory networks \cite{lozano2009grouped}, neuroscience \cite{neves2008synaptic,seth2015granger}, economics \cite{granger1969investigating,stock1989interpreting} and finance \cite{hiemstra1994testing, granger2000bivariate}.

To address this question, the field of causal learning has proposed a large class of methods to discover or quantify causal relations. These methods have certain limitations in regards to capability of handling nonlinearity, and/or scalability and efficiency to large numbers of time series.
Pearl \cite{pearl2002causality,pearl2009causality,pearl2009causal} defines causality in terms of intervention and structural dependence, under the structural equation models (SEM). However, in our problem, where only observational time series is available, Pearl's definition may not be applicable. In his seminal work, Granger \cite{granger1969investigating,granger1980testing} defines causality via prediction: if the prediction of the future Y via a linear model can be improved by including the information of X, then X causes Y in the Granger sense. The original Granger causality is limited to linear causal models. Although later works also extend Granger causality to kernel methods \cite{ancona2004radial,marinazzo2008kernel, marinazzo2008kernel2, sindhwani2012scalable}, they may still be insufficient to model and discover the nonlinear causal relations in real data. On the other hand, the causal measures of transfer entropy \cite{schreiber2000measuring} and causal influence \cite{janzing2013quantifying} are in theory able to handle any nonlinearity. However, both measures require density estimation of the joint distribution for the full $N$ time series ($N$ is the number of time series), which is difficult and data-hungry when $N$ is large. Constraint-based methods \cite{spirtes2000causation, harris2013pc, pearl2002causality, spirtes2000causation} require repetitive conditional independence tests, where the number of tests will grow large when $N$ is large and the underlying causal graph is dense. Score-based methods search for the structure that yields the optimal score w.r.t. the data, generally using greedy search methods, for example GES \cite{chickering2002optimal}, rankGES \cite{nandy2018high} and GIES \cite{hauser2012characterization}. This in general requires $\Theta(N^2)$ steps, and the number of neighboring states may grow very large at each step. Another closely related field is sparse learning/feature selection methods. Some important classes are Lasso \cite{tibshirani1996regression} and elastic net \cite{zou2005regularization}, which are effective but subject to the limitations of linear models. For nonlinear models, although L1 and group L1 regularization \cite{meier2008group,scardapane2017group,tank2018neural} can induce sparsity in the model parameters, they are model and input dependent. 

To handle the nonlinear relations in time series, a promising tool is neural nets. Not only are neural nets universal function approximators \cite{hornik1991approximation}, a deep neural net also provides exponentially large expressive power \cite{rolnick2018the}, making it particularly suitable for modeling the unknown nonlinear relations in time series. Recently there has been an increasing amount of work on learning the dynamic models of interacting systems \cite{battaglia2016interaction,chang2016compositional,guttenberg2016permutation,watters2017visual,hoshen2017vain,van2018relational}. However, their main focus is to make better predictions, using implicit interaction models (e.g. using fully connected graph networks). In this paper, we are mainly interested in discovering the underlying directional relations in an explicit form, utilizing the expressive power of neural nets.

To discover nonlinear directional relations from potentially large number of time series in an efficient way, the contribution of our work is as follows:
\begin{itemize}[topsep=0pt,partopsep=0pt,leftmargin=*]

\item We introduce a novel relational learning with Minimum Predictive Information Regularization (MPIR) method for exploratory discovery of nonlinear directional relations from observational time series. It is based on minimizing a mutual information-regularized risk with learnable input noise of a prediction model, which allows function approximators such as neural nets to learn nonlinear relations, combining the benefits of the Granger causality paradigm with deep learning models. At the minimization of the objective, the minimum predictive information term quantifies the directional predictive strength between each pair of time series given other time series. For discovering the directional relations among $N$ time series, it only has to learn $N$ models, and does not requires density estimation for the joint $N$ time series.

\item We prove that the minimum predictive information is able to differentiate dependence or independence between pairs of time series, which allows for statistical test. Moreover, we prove that the minimum predictive information is invariant to the scaling of input and reparameterization of the model. We further provide intuition that under certain conditions, our method is likely to discover direct relations instead of indirect associations.

\item We demonstrate on nonlinear synthetic datasets that our method outperforms other methods in discovering true causal relations with larger $N$, and discovers the directional relations in video game environment and real-world heart-rate vs. breath-rate datasets. 

\end{itemize}

\section{Method}

\subsection{Problem setup}
\label{sec:problem_definition}

We consider $N$ time series $x^{(1)}, x^{(2)}, ...x^{(N)}$, where each time series $x^{(i)}=(x^{(i)}_1,x^{(i)}_2,...x^{(i)}_t,...)$ and each $x^{(i)}_t\in \R^M$ is an $M$-dimensional vector. Denote $X_{t-1}^{(i)}=(x^{(i)}_{t-K},x^{(i)}_{t-K+1},...x^{(i)}_{t-1})$ with maximum time horizon of $K$, and $\mathbf{X}_{t-1}=\{X_{t-1}^{(i)}\}, i=1,2,...N$. We also denote $\mathbf{X}_{t-1}^{(\hat{j})}=\mathbf{X}_{t-1}\texttt{\textbackslash} X^{(j)}_{t-1}$ ($\mathbf{X}_{t-1}$ excluding $X^{(j)}_{t-1}$).
We assume that $x^{(1)}, x^{(2)}, ...x^{(N)}$ are generated by stationary response functions $h_i$ that are unknown to the learner:
\begin{equation}
\label{eq:response_function}
    \begin{cases}
      x^{(1)}_t:=h_1(\mathbf{X}_{t-1},u_1) \\
      x^{(2)}_t:=h_2(\mathbf{X}_{t-1},u_2) \\
     ...\\
      x^{(N)}_t:=h_N(\mathbf{X}_{t-1},u_N) 
    \end{cases}
  \end{equation}
for $t=K+1,K+2,...$ . 
Here $u_i\in \R^M, i=1,2,...N$ are noise variables that are mutually independent, are independent of any $X^{(i)}_{t-1}, x^{(i)}_t$, $i\in\{1,2,...N\}$.
For any $i,j\in\{1,2,...N\}$, we assume that the variables $(\mathbf{X}_{t-1}^{(\hat{j})}, X_{t-1}^{(j)}, x_{t}^{(i)})$ have probability density function $P(\mathbf{X}_{t-1}^{(\hat{j})}, X_{t-1}^{(j)}, x_{t}^{(i)})$.

Our method is inspired by Granger causality \cite{granger1969investigating,granger1980testing}, which defines causality via predictions, making it especially suitable for relational inference of observational time series. Adapting to our notation:

\textbf{Granger causality} \cite{granger1980testing}:\textit{
Assuming causal sufficiency \cite{peters2017elements}, we say $X^{(j)}_{t-1}, j\neq i$ does not Granger-cause $x^{(i)}_{t}$, if $P(x^{(i)}_{t}|X^{(j)}_{t-1}, \mathbf{X}_{t-1}^{(\hat{j})})=P(x^{(i)}_{t}|\mathbf{X}_{t-1}^{(\hat{j})})$. Otherwise, we say $X^{(j)}_{t-1}$ Granger-causes $x^{(i)}_{t}$.
}

In practice, we say that time series $j$ Granger-causes time series $i$, if it can be shown via significance tests that the null hypothesis of $P(x^{(i)}_{t}|X^{(j)}_{t-1}, \mathbf{X}_{t-1}^{(\hat{j})})=P(x^{(i)}_{t}|\mathbf{X}_{t-1}^{(\hat{j})})$ is rejected, i.e. $X_{t-1}^{(j)}$ provides statistically significant information for predicting $x_t^{(i)}$.

In his original work, Granger \cite{granger1969investigating} investigates causality with linear function predictors. Later works have extended it to kernel methods \cite{ancona2004radial, marinazzo2008kernel, marinazzo2008kernel2, sindhwani2012scalable}, which essentially estimate linear Granger causality on the feature space of the kernel. To learn potentially highly nonlinear response functions, it may be desirable to use expressive and universal function approximators \cite{hornik1991approximation} such as neural nets. Neural nets are much more flexible than linear models, and do not require kernel selection as in kernel methods.

\subsection{Our method}
\label{sec:our_method}
Based on the definition of Granger causality, a naïve way to combine it with neural net is: for each $j\to i$, train two neural nets, one predicting $x_t^{(i)}$ based on $\X_{t-1}^{(\hat{j})}$, another predicting $x_t^{(i)}$ based on the full $\X_{t-1}=(\X_{t-1}^{(\hat{j})}, X_{t-1}^{(j)})$, and test whether former MSE is significantly larger than the latter. This method suffers from two major drawbacks: (1) instability: different training of the neural net may end up in different local minima, so that the two MSEs have large variance, which is observed in our initial explorations; (2) inefficiency: to discover the relations among $N$ time series, it has to train at least $N^2$ models (for each $x^{(i)}_t$, train $N-1$ models with one $X_{t-1}^{(j)}$ removed, and $Q$ models ($Q\ge1$) with full $\X_{t-1}$ for accumulating statistics). On the other hand, these two drawbacks exactly inspire our method. Instead of predicting $x^{(i)}_t$ with one $X_{t-1}^{(j)}$ missing at a time, what if we let each $X_{t-1}^{(j)}$ have \textit{learnable} corruption, and encourage each $X_{t-1}^{(j)}$ to provide as little information to $x^{(i)}_t$ as possible while maintaining good prediction? In this way, we have a \emph{single} shared model that can span the full product space of $[\text{total corruption}, \text{no corruption}]^{\bigotimes N}$ for $N$ input time series, which is more stable and efficient than the removing one $X_{t-1}^{(j)}$ at a time and training $N$ models. To achieve this, we add independent noise with learnable amplitudes to each input $X_{t-1}^{(j)}$, and measure the corruption by the mutual information between the input and the corrupted input. We then define the following risk:

\begin{equation}
\begin{aligned}
\label{eq:learnable_risk}
R_{\mathbf{X},x^{(i)}}[f_\theta,&\boldsymbol{\eta}]=\mathbb{E}_{\mathbf{X}_{t-1},x_t^{(i)},\boldsymbol{\epsilon}}\left[\left(x_t^{(i)}-f_\theta(\tilde{\mathbf{X}}^{(\boldsymbol{\eta})}_{t-1})\right)^2\right]+\lambda\cdot \sum_{j=1}^{N}I(\tilde{X}^{(j)(\eta_j)}_{t-1};X^{(j)}_{t-1})
\end{aligned}
\end{equation}
where $\tilde{\mathbf{X}}^{(\boldsymbol{\eta})}_{t-1}:=\mathbf{X}_{t-1}+\boldsymbol{\eta}\odot\boldsymbol{\epsilon}$ (or element-wise, $\tilde{X}_{t-1}^{(j)(\eta_j)}:=X_{t-1}^{(j)}+\eta_j\cdot \epsilon_j$, $j=1,2,...N$) is the noise-corrupted inputs with \emph{learnable} noise amplitudes $\eta_j\in \R^{KM}$, and $\epsilon_j\sim N(\mathbf{0},\mathbf{I})$. $\lambda >0$ is a positive hyperparameter for the mutual information $I(\cdot, \cdot)$. 
Intuitively, the minimization of the second term $I(\tilde{X}^{(j)(\eta_j)}_{t-1};X^{(j)}_{t-1})$ requires the noise amplitude $\eta_j$ to go up. The minimization of the first term requires the noise amplitude $\eta_j$ to go down, and the larger causal strength from $X_{t-1}^{(j)}$ to $x_t^{(i)}$, the larger this force. The minimization of the two terms strikes a balance, at which point the $I(\tilde{X}^{(j)(\eta_j)}_{t-1};X^{(j)}_{t-1})$ measures the \emph{minimum} number of bits of information the time series $j$ need to provide to the learner, without compromising the prediction. 
\vskip -0.05in
At the minimization of $R_{\mathbf{X},x^{(i)}}[f_\theta,\boldsymbol{\eta}]$, we define $W_{ji}=I\left(\tilde{X}^{(j)(\eta_j^*)}_{t-1};X^{(j)}_{t-1}\right)$, which we term \emph{minimum predictive information}, where $(f_{\theta^*},\boldsymbol{\eta}^*)=\text{argmin}_{(f_{\theta},\boldsymbol{\eta})}R_{\mathbf{X},x^{(i)}}[f_\theta,\boldsymbol{\eta}]$.
Essentially, $W_{ji}$ measures the \emph{predictive strength} of time series $j$ for predicting time series $i$, \textit{conditioned} on all the other observed time series. We have that
$W_{ji}$ satisfies the following properties: 

\begin{enumerate}[label={(\arabic*)}]
\item If $x^{(j)}\independent x^{(i)}$, then $W_{ji}=0$.
\item $W_{ji}$ is invariant to affine transformation of each individual $X_{t-1}^{(k)},k=1,2,...N$.
\item $W_{ji}$ is invariant to reparameterization of $\theta$ in $f_\theta$ (the mapping remains the same).
\end{enumerate}

The proofs are provided in Appendix \ref{app:W_proof}. Property 1 shows that $W_{ji}$ is able to differentiate time series that are dependent or independent with the target time series $i$. Empirically, to perform statistical tests, we can let the null hypothesis be  $x^{(j)}\independent x^{(i)}$. Before training, we append to $\X_{t-1}$ some fake time series $v^{(s)}_{t-1},s=1,2,...S$ (e.g. by randomly permuting $X_{t-1}^{(j)}$) so that $v^{(s)}_{t-1}\independent x^{(i)}_t$. After optimizing w.r.t. to the augmented dataset, the values of $W_{si}$ between $v^{(s)}_{t-1}$ and $x_t^{(i)}$ form a distribution for which we know that the null hypothesis is true. Then if certain $W_{ji}$ is greater than the $1-\alpha$ quantile (e.g. $\alpha=0.05$) of the distribution, we can reject the null hypothesis of independence. Properties 2 and 3 show the benefit of our method which essentially regularizes the \textit{input information}, compared with L1 and group L1 \cite{meier2008group,scardapane2017group,tank2018neural} which regularize the \textit{model} and thus do not satisfy these two properties.

Moreover, in Appendix \ref{app:W_proof} we further provide intuition that under certain conditions, $W_{ji}$ is likely to favor the time series  that \textit{directly} causes time series $i$, compared with the time series that relate to $i$ via the direct causal connections. 
Note that our method is not \emph{guaranteed} to identify \emph{direct causal} relations (in Granger \cite{granger1980testing} or Pearl \cite{pearl2002causality} sense), which is a very hard problem given the potential large number of time series and nonlinearity present. However, our method provides an effective data exploratory tool to identify time series that are \emph{predictive} of one another, \emph{conditioned} on all the other observed time series, whose identified directional relations can be investigated further by a researcher. As stated above, under certain conditions, our method does favor the direct causal relations. And in the experiment section, we will compare the estimated $W_{ji}$ with true causal relations if available.

\begin{algorithm}[t]
   \caption{\textbf{Relational Learning with Minimum Predictive Information Regularization}}
\label{alg:learnable_noise}
\begin{algorithmic}
   \STATE {\bfseries Require} $x^{(i)}_{t}, \mathbf{X}_{t-1}$, for $i\in\{1,2,...N\},t\in \mathbf{T}=\{K+1,K+2,...\}$.
   \STATE {\bfseries Require $\eta_0$}: a small value for initialization of $\boldsymbol{\eta}$.
   \STATE {\bfseries Require $\lambda$}: coefficient for the mutual information term.
   \STATE {\bfseries Require $S$}: number of fake time series.
   \STATE {\bfseries Require $\alpha$}: significance level.
\STATE 1:\ \ \  Randomly select $S$ indices $i_1,i_2,...i_S$ from $\{1,2,...N\}$
\STATE 2:\ \ \  $v_{t-1}^{(s)}\gets \text{Permute-examples}_t(X_{t-1}^{(i_s)})$ \textbf{for} $s=1,2,...S$    \ \ \ \ // \textit{Permuting on the example dimension}
\STATE 3:\ \ \  $\X^{(\text{aug})}_{t-1}\gets[\X_{t-1}, \mathbf{v}_{t-1}]$, where $\mathbf{v}_{t-1}=[v_{t-1}^{(1)},...v_{t-1}^{(S)}]$ and $[\cdot,...,\cdot]$ denotes concatenation along 
\STATE \ \ \ \ \ \ \ \ \ the dimension of $N$ (thus $\X^{(\text{aug})}_{t-1}$ consists of $N+S$ time series)
\STATE 4:\ \ \ \textbf{for} $i$ in $\{1,2,...N\}\ \textbf{do:}$
\STATE 5:\ \ \ \ \ \ \ \ \ \ Initialize function approximator $f_\theta$.
\STATE 6:\ \ \ \ \ \ \ \ \ \ Initialize $\boldsymbol{\eta}=(\eta_1,\eta_2,...\eta_N)=(\eta_0\boldsymbol{1},\eta_0\boldsymbol{1},...\eta_0\boldsymbol{1})$, where each element $\eta_0\boldsymbol{1}$ is a $KM$-   
 \STATE \ \  \ \ \ \ \ \ \ \ \ \ \ \ \ \ dimensional vector, same dimension as $X_{t-1}^{(j)}$.
\STATE 7:\ \ \ \ \ \ \ \ \ \  $(f_{\theta^*},\boldsymbol{\eta}^*)\gets\text{Minimize}_{(f_\theta,\boldsymbol{\eta})}\hat{R}_{\X^{(\text{aug})},x^{(i)},\boldsymbol{\epsilon}}[f_\theta,\boldsymbol{\eta}]$ (Eq. \ref{eq:empirical}) with e.g. gradient descent.
\STATE 8:\ \ \ \ \ \ \ \ \ \  $W_{ji}\gets I(\tilde{X}^{(j)(\eta_j^*)}_{t-1};X^{(j)}_{t-1})$, for $j=1,2,...N, N+1,...N+S$.
\STATE 9:\ \ \ \textbf{end for}
\STATE 10: (Optional) accumulate the values of $W_{si}$ between all $v_{t-1}^{(s)},s=1,2,...S$ and $x_t^{(i)},i=1,2,...N$, and obtain the $1-\alpha$ quantile as the threshold. Zero the $W_{ji}$ elements ($j,i=1,2,...N$) whose value are below the threshold.
\STATE 11: \textbf{return} $W$ \ \ // \textit{Return the main $N\times N$ matrix}
\end{algorithmic}
\end{algorithm}

Empirically, we minimize the following empirical risk:
\begin{equation}
\label{eq:empirical}
\begin{aligned}
\hat{R}_{\mathbf{X},x^{(i)},\boldsymbol{\epsilon}}[f_\theta,&\boldsymbol{\eta}]=\frac{1}{|\mathbf{T}|}\sum_{t\in \mathbf{T}}\left(x_t^{(i)}-f_\theta(\tilde{\mathbf{X}}^{(\boldsymbol{\eta})}_{t-1})\right)^2+\lambda \sum_{j=1}^{N}I(\tilde{X}^{(j)(\eta_j)}_{t-1};X^{(j)}_{t-1})
\end{aligned}
\end{equation}

In general, it may be inefficient to estimate the mutual information $I(\tilde{X}^{(j)(\eta_j)}_{t-1};X^{(j)}_{t-1})$ with large dimension of $X^{(j)}_{t-1}$ such that the expression is also differentiable w.r.t. $\eta_j$. Utilizing the property of Gaussian channels, in Appendix \ref{app:Gaussian_channel_upper_bound} we prove that $I(\tilde{X}^{(j)(\eta_j)}_{t-1};X^{(j)}_{t-1})\leq \frac{1}{2}\sum_{l=1}^{KM}\text{log}\left(1+\frac{\text{Var}(X_{t-1,l}^{(j)})}{\eta_{j,l}^2}\right)$, where $l$ denotes the $l^{\text{th}}$ element of a vector, and $\text{Var}(X_{t-1,l}^{(j)})$ is the variance of $X_{t-1,l}^{(j)}$ across $t$. Therefore, in practice to improve efficiency, we can optimize an \emph{upper bound} of the risk:
\begin{equation}
\label{eq:empirical_upper_bound}
\begin{aligned}
\hat{R}^{\text{upper}}_{\mathbf{X},x^{(i)},\boldsymbol{\epsilon}}[f_\theta,&\boldsymbol{\eta}]=\frac{1}{|\mathbf{T}|}\sum_{t\in \mathbf{T}}\left(x_t^{(i)}-f_\theta(\tilde{\mathbf{X}}^{(\boldsymbol{\eta})}_{t-1})\right)^2+\frac{\lambda}{2} \sum_{j=1}^{N}\sum_{l=1}^{KM}\text{log}\left(1+\frac{\text{Var}(X_{t-1,l}^{(j)})}{\eta_{j,l}^2}\right)
\end{aligned}
\end{equation}

When the dimension of $X_{t-1}^{(j)}$ is large, a differentiable estimate of the mutual information (e.g. MINE \cite{belghazi2018mine}) can be applied. We provide Algorithm~\ref{alg:learnable_noise} to empirically estimate $W_{ji}$, which we term relational learning with Minimum Predictive Information Regularization (MPIR). The steps 1-3 construct fake input time series $v_{t-1}^{(s)}, s=1,2,...S$ (which we know the null hypothesis of $v_{t-1}^{(s)}\independent{}x_t^{(i)}$ is true) to append to $\X_{t-1}$. Steps 4-9 optimize the objective w.r.t. the augmented dataset, and obtain a $(N+S)\times N$ matrix $W_{ji}$. Step 10 performs significance test and only preserve the $W_{ji}$ values in the main $N\times N$ matrix that are statistically significant. In the case where we only need to estimate the predictive strength, this step is not required. Finally the main matrix is returned.

To select an appropriate hyperparameter $\lambda$, we can additionally append to the target $x_t^{(i)}$ a few time series $w_t$ constructed from $\X_{t-1}$. We then select $\lambda$ such that the estimated causal strength between $\X_{t-1}$ and $w_t$ (for which we know the causal relations) is at least $4\sigma$ away from the estimated causal strength between $v_{t-1}$ and $w_t$ (for which we know that they are independent). See Appendix \ref{app:select_lambda} for details.

\section{Experiments}
\label{sec:experiments}

To demonstrate that our proposed method is able to discover interesting underlying directional (possibly causal) relations, we test it on both synthetic and real datasets. We first use synthetic datasets, where we know the underlying causal structure and compare with other methods. We then test whether our algorithm can infer directional relations among trajectories of objects from watching an agent playing video games. Finally, we apply our algorithm to a real-world heart-rate vs. breath-rate dataset and a rat EEG dataset to test its effectiveness. We use the $\hat{R}^{\text{upper}}_{\mathbf{X},x^{(i)},\boldsymbol{\epsilon}}[f_\theta,\boldsymbol{\eta}]$ (Eq. \ref{eq:empirical_upper_bound}) for optimization for all experiments.

\subsection{Synthetic experiment with log-normal causal strengths}
\label{sec:synthetic}

In this experiment, we evaluate our method together with other methods using a nonlinear synthetic dataset generated to have a known causal structure (hidden to the methods being compared). We study performance with varying number $N$ of time series, with $N$ up to $30$. To generate the data, we let each $x_t^{(i)}$ have dimension $M=1$, and also set the maximum time horizon $K=3$, so each $X_{t-1}^{(j)}$ is a $K\times M=3\times 1$ matrix. We use the following realization of the response function $h_i$ in Eq.~(\ref{eq:response_function}):
\begin{equation}
\label{eq:synthetic}
\begin{aligned}
    x_t^{(i)} = h_i(\mathbf{X}_{t-1}, u_t)=&\text{H}_1\left(\sum_{j=1}^N\left[A_{ji} \odot\text{H}_2(B_j\odot X_{t-1}^{(j)})\right]\right)+u_t,i=1,2,...N
\end{aligned}
\end{equation}
where $u_t\sim N(\mathbf{0},\mathbf{I})\in \R^M$, $\odot$ denotes element-wise multiplication, and $\text{H}_1$ and $\text{H}_2$ are two nonlinear functions to make the response functions nonlinear. In this experiment, we use $\text{H}_1(x)=\text{softplus}(x)=\text{log}(1+e^x)$, and $\text{H}_2(x)=\text{tanh}(x)$. $B_j$ is a $K\times M$ random matrix, whose element is sampled from $U[-1, 1]$. $A_{ji}$ is a $K\times M$ matrix, with 0.5 probability of being a zero matrix and 0.5 probability of being a nonzero random matrix, characterizing the underlying causal strength from $j$ to $i$. Crucially, to reflect that the causal strength may span different orders of magnitude, if $A_{ji}$ is sampled to be a nonzero matrix, then the amplitude of each of its element is sampled from a log-normal distribution with $\mu=1,\sigma=0$, their sign sampling from $U\{-1,1\}$. Denote $\mathbbm{1}(A)$ as the 0-1 indicator matrix of causality ($\mathbbm{1}(A)_{ji}=1\ \text{if}\ |A_{ji}|>0; 0\ \text{otherwise}$). The goal of each algorithm being evaluated is to produce an $N\times N$ score matrix $\tilde{A}$, where each entry $\tilde{A}_{ji}$ characterizes the directional strength from $j$ to $i$. Then the flattened $\tilde{A}$ is evaluated against the flattened $\mathbbm{1}(A)$ (excluding diagonal elements of the matrices) via different metrics. Fig. S\ref{fig:synthetic_example_figure} in Appendix \ref{app:synthetic_exp} shows example snapshots of the time series.

In general, for a large $N$, the number of possible causal graphs grows double exponentially: there are $2^{N^2}$ possible matrix of $\mathbbm{1}(A)$. To give an estimate, for $N=3,4,5,8,10,20,30$, there are $512, 6.6\times10^4, 3.3\times10^7,1.8\times10^{19}, 1.2\times10^{30}, 2.6\times 10^{120}, 8.5\times10^{270}$ number of possible graphs, respectively. Therefore, estimating the underlying causal graph is in general a non-trivial task when $N$ is large. We compare our algorithm with previous methods including transfer entropy \cite{schreiber2000measuring}, causal influence \cite{janzing2013quantifying}, linear Granger causality \cite{granger1969investigating, ding2006granger}, kernel Granger causality \cite{marinazzo2008kernel,marinazzo2008kernel2}, and three baselines: (1) mutual information $\tilde{A}_{ji}=I(X_{t-1}^{(j)};x_t^{(i)})$ (which gives $\tilde{A}_{ji}=\tilde{A}_{ij}$), (2) a sparse feature selection method, elastic net \cite{zou2005regularization}, and (3) a random matrix, each element of which is drawn from a standard Gaussian distribution. For each $N$, we sample 10 datasets with different $A_{ji}$ and $B_j$ matrices, and compare each method's average performance over 10 datasets together with their standard deviation. The implementation details for each method and each experiment are provided in Appendix \ref{app:algorithm_implementation} and \ref{app:synthetic_exp}, respectively. Since many of the methods do not provide a threshold or significance test, we use the standard metrics of area under the precision-recall curve (AUC-PR) \cite{davis2006relationship} (Table \ref{table:synthetic_larger_N_AUC_PR} below) and area under the ROC curve (AUC-ROC) (Table S\ref{table:synthetic_larger_N_AUC_ROC} in Appendix \ref{app:synthetic_ROC_AUC}) to compare their performance.

\begin{table*}[t]
\caption{Mean and standard deviation of AUC-PR (\%) vs. $N$, over 10 random sampling of datasets. Bold font marks the top method for each $N$.}
\resizebox{1\linewidth}{!}{%
\begin{tabular}{p{2.75cm}p{1cm}p{1cm}p{1cm}p{1cm}p{1cm}p{1cm}p{1cm}p{1cm}}
\toprule
 \ \ \ \ \ \ \ \ \ \ \ \ \ \ \ \ \ \ \ \ \ \ \ \ \ \ $N$ &     3  &    4  &    5  &    8  &    10 &    15 &    20 &    30 \\
method             &        &       &       &       &       &       &       &       \\
\midrule
\textbf{MPIR (ours)}        &   97.5{\tiny$\pm$5.3} &  98.4{\tiny$\pm$2.5} &  \textbf{97.6}{\tiny$\pm$2.7} &  \textbf{96.1}{\tiny$\pm$2.4} &  \textbf{93.5}{\tiny$\pm$3.7} &  \textbf{91.3}{\tiny$\pm$3.0} &  \textbf{85.9}{\tiny$\pm$2.4} &  \textbf{76.3}{\tiny$\pm$1.5} \\
Mutual Information &   90.5{\tiny$\pm$13.7} &  93.3{\tiny$\pm$3.8} &  90.0{\tiny$\pm$4.3} &  82.4{\tiny$\pm$5.1} &  76.9{\tiny$\pm$9.3} &  76.8{\tiny$\pm$4.8} &  71.9{\tiny$\pm$3.8} &  70.6{\tiny$\pm$3.1} \\
Transfer Entropy   &   93.5{\tiny$\pm$7.7} &  97.3{\tiny$\pm$3.3} &  91.6{\tiny$\pm$8.2} &  83.7{\tiny$\pm$7.2} &  76.2{\tiny$\pm$5.7} &  67.1{\tiny$\pm$4.2} &  61.2{\tiny$\pm$4.3} &   55.7{\tiny$\pm$2.5} \\
Linear Granger     &   \textbf{99.4}{\tiny$\pm$1.8} &  97.8{\tiny$\pm$2.5} &  92.0{\tiny$\pm$8.3} &  83.1{\tiny$\pm$8.8} &  79.4{\tiny$\pm$9.2} &  71.0{\tiny$\pm$10.0} &  63.7{\tiny$\pm$8.8} &  52.4{\tiny$\pm$1.7} \\
Kernel Granger     &   99.3{\tiny$\pm$2.3} &  \textbf{99.3}{\tiny$\pm$1.5} &  96.5{\tiny$\pm$4.8} &  92.5{\tiny$\pm$3.4} &  90.0{\tiny$\pm$3.3} &  86.0{\tiny$\pm$2.4} &  81.0{\tiny$\pm$4.0} &  73.1{\tiny$\pm$1.8} \\
Elastic Net        &   99.1{\tiny$\pm$2.9} &  98.5{\tiny$\pm$2.0} &  95.7{\tiny$\pm$4.2} &  88.9{\tiny$\pm$6.2} &  83.6{\tiny$\pm$4.6} &  79.1{\tiny$\pm$3.0} &  75.3{\tiny$\pm$3.6} &  69.1{\tiny$\pm$5.8} \\
Causal Influence   &   67.5{\tiny$\pm$26.7} &  60.2{\tiny$\pm$24.1} &  59.3{\tiny$\pm$15.3} &  44.1{\tiny$\pm$8.9} &  42.7{\tiny$\pm$7.8} &  47.0{\tiny$\pm$3.1} &  44.5{\tiny$\pm$4.1} &   44.6{\tiny$\pm$2.1} \\
Gaussian random    &   60.0{\tiny$\pm$14.7} &  57.9{\tiny$\pm$12.9} &  51.6{\tiny$\pm$8.0} &  44.5{\tiny$\pm$5.6} &  41.3{\tiny$\pm$6.2} &  44.6{\tiny$\pm$4.0} &  44.0{\tiny$\pm$2.4} &  44.3{\tiny$\pm$2.4} \\
\bottomrule
\end{tabular}
}
\label{table:synthetic_larger_N_AUC_PR}
\end{table*}

We see that for smaller $N$ ($N\le4$), methods with smaller expressivity (linear Granger, kernel Granger) performs slightly better. However, as $N$ becomes larger, our method outperforms other methods with increasing margin, demonstrating our method's capability to infer complex relational structures from interacting time series. Particularly, although two linear methods, linear Granger and elastic net, have relatively strong performance with $N\leq5$, they quickly degrade with larger $N$ due to more nonlinearity present in the data. With the help of kernels, kernel Granger degrades slower, but can not compete in larger $N$ with our method which allows expressive neural nets to model complex nonlinear interactions. For the Causal Influence method, although it has very good mathematical properties, it may be impractical in practice, as is also shown in the table. This is due to that it is defined as the KL-divergence between $(\mathbf{X}_{t-1}, x_{t-1}^{(i)})$ and its counterpart (whose causal arrows to and from time series $j$ are cut), each of which is an $(NK+1)M-$dimensional vector, which can quickly go to high dimensions, where density estimation required to calculate KL-divergence is in general data-hungry and difficult. In comparison, our method that estimates predictive strength via minimizing prediction errors is comparatively easier in high dimensions.

\subsection{Experiments with video games}
\label{sec:video_games}

\begin{figure}[h]
    \centering
    \begin{subfigure}{.24\linewidth}
        \includegraphics[scale=0.14]{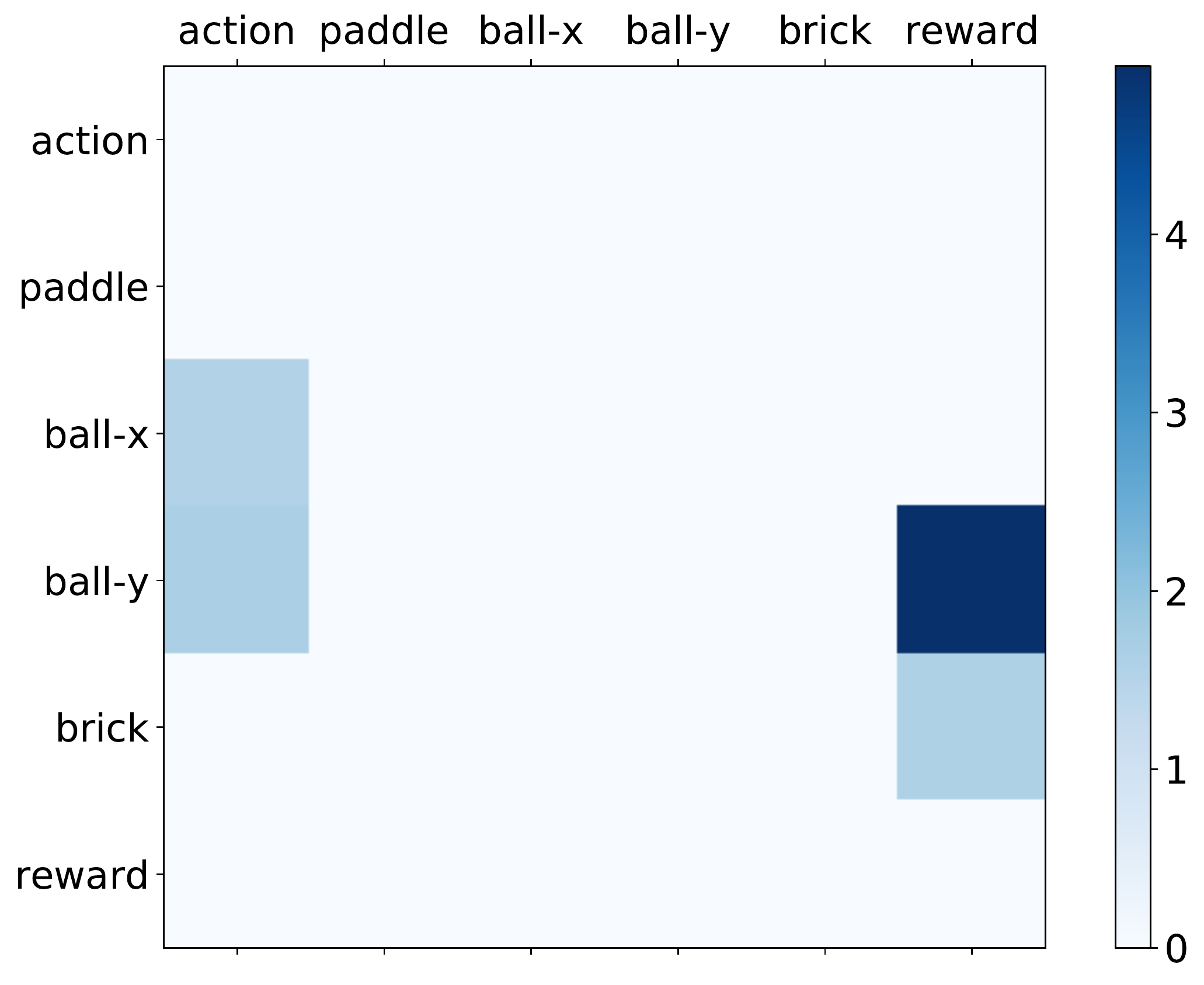}
        \caption{}
    \end{subfigure}
    \rulesep
    \begin{subfigure}{.24\linewidth}
        \includegraphics[scale=0.14]{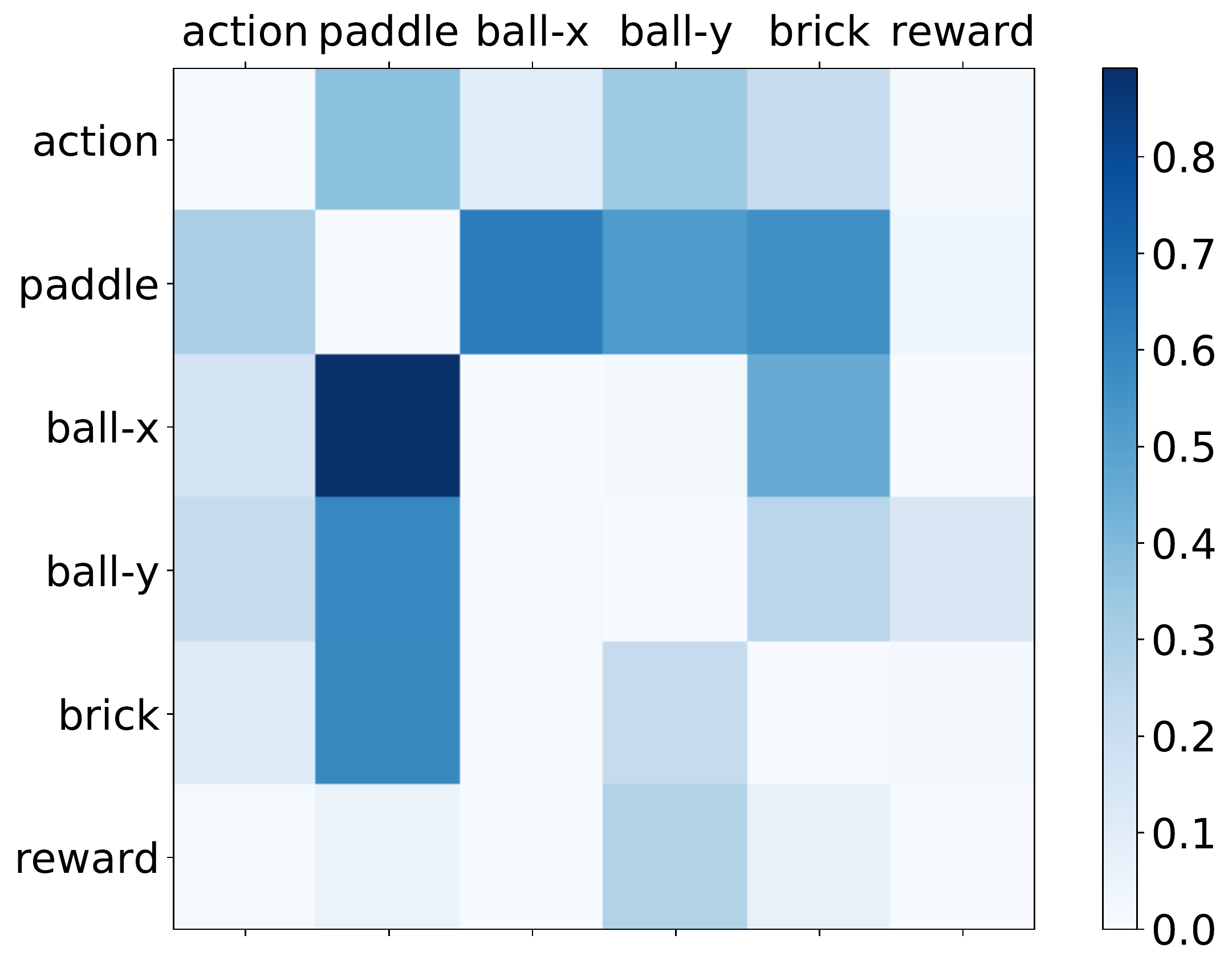}
        \caption{}
    \end{subfigure}
    \begin{subfigure}{.24\linewidth}
        \includegraphics[scale=0.14]{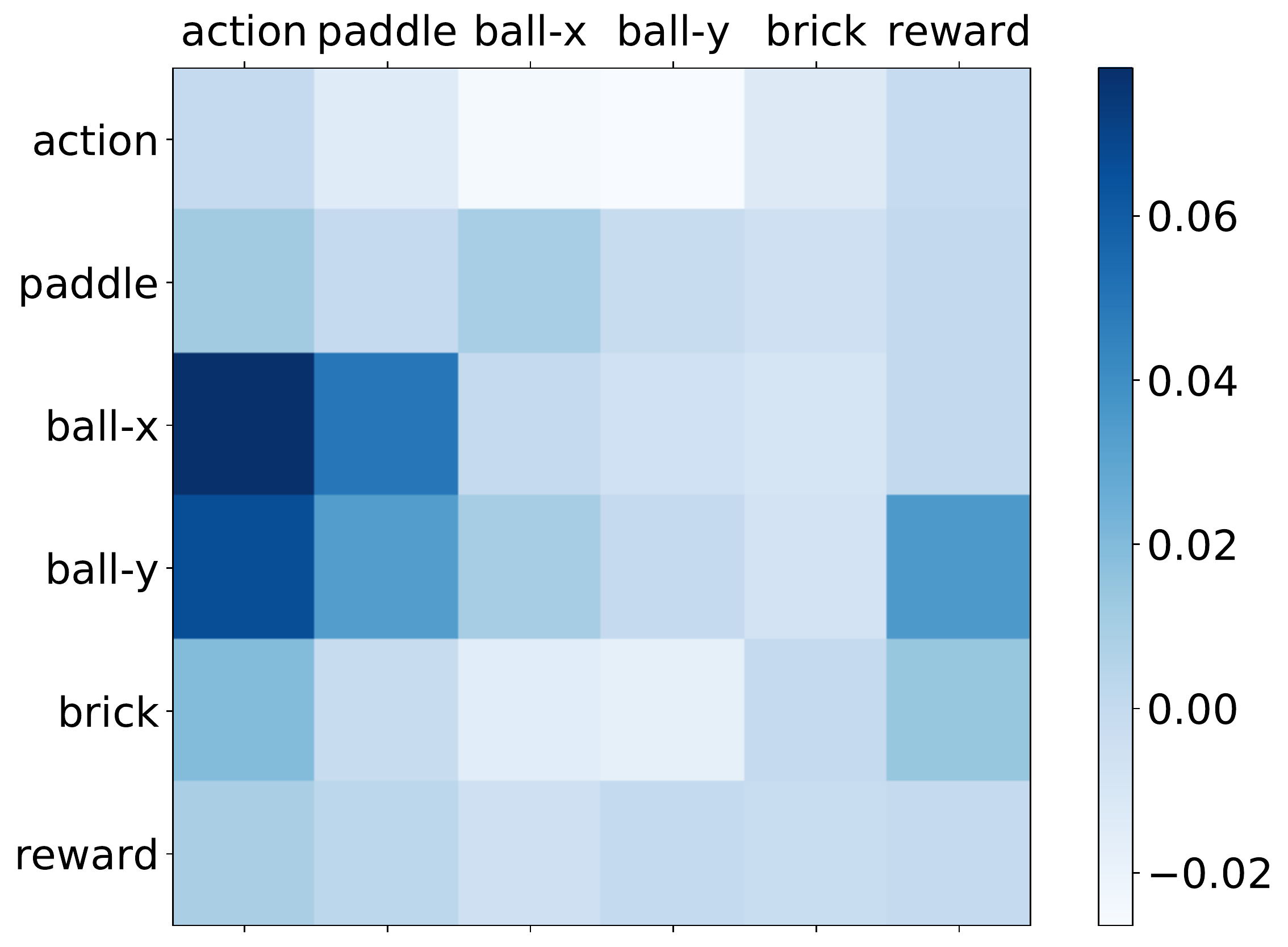}
        \caption{}
    \end{subfigure}
    \begin{subfigure}{.24\linewidth}
        \includegraphics[scale=0.14]{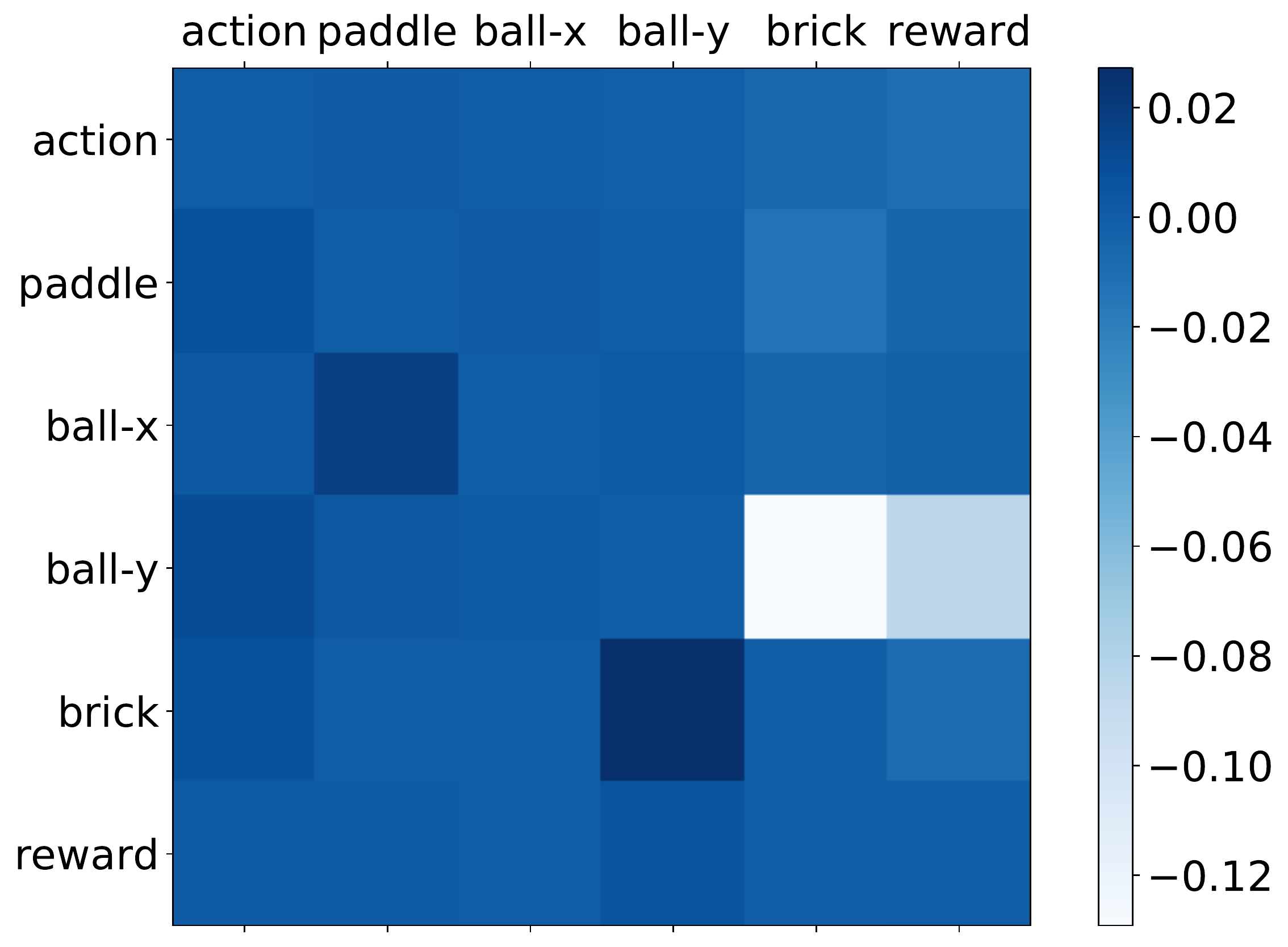}
        \caption{}
    \end{subfigure}
    \begin{subfigure}{.24\linewidth}
        \includegraphics[scale=0.14]{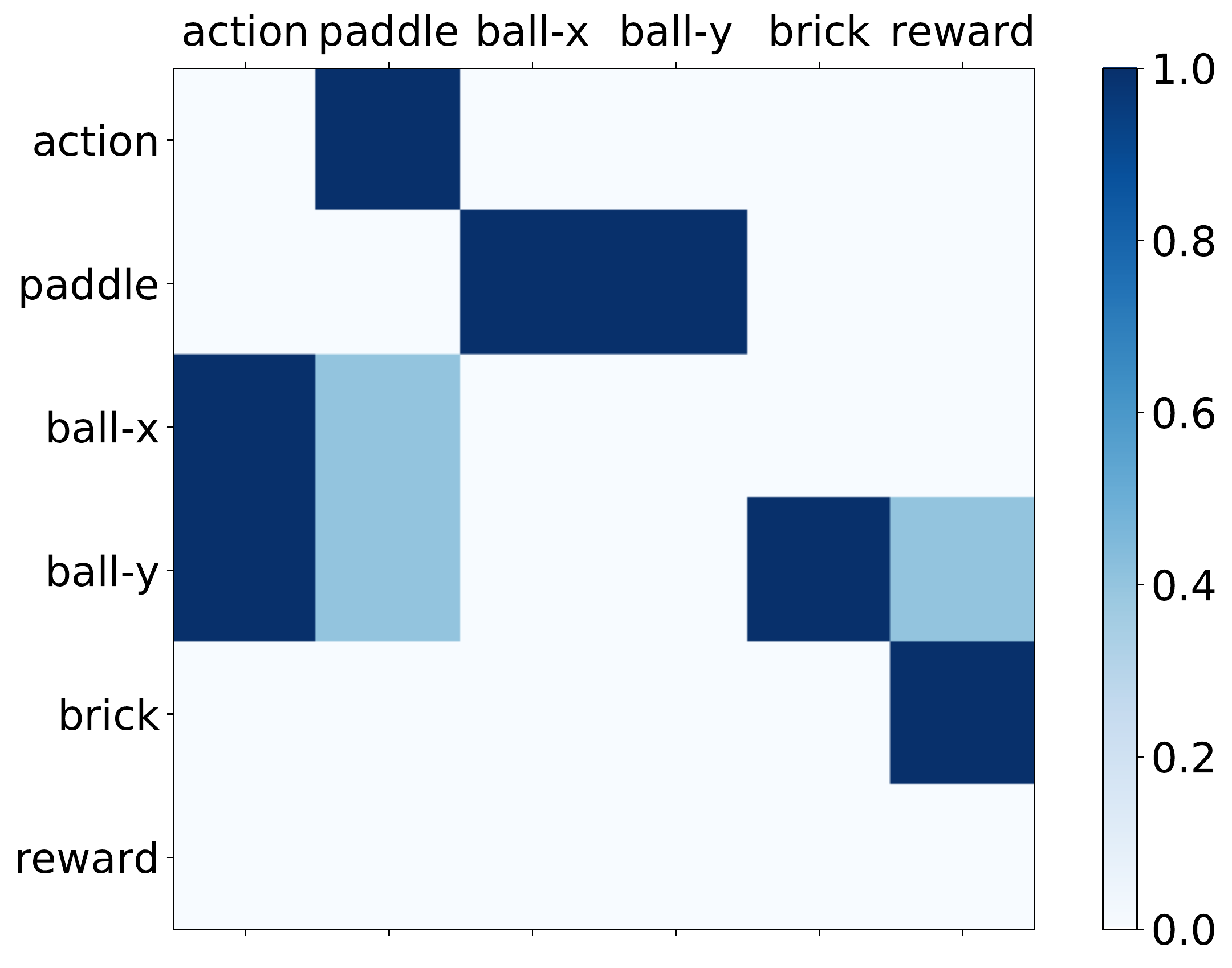}
        \caption{}
    \end{subfigure}
    \rulesep
    \begin{subfigure}{.24\linewidth}
        \includegraphics[scale=0.14]{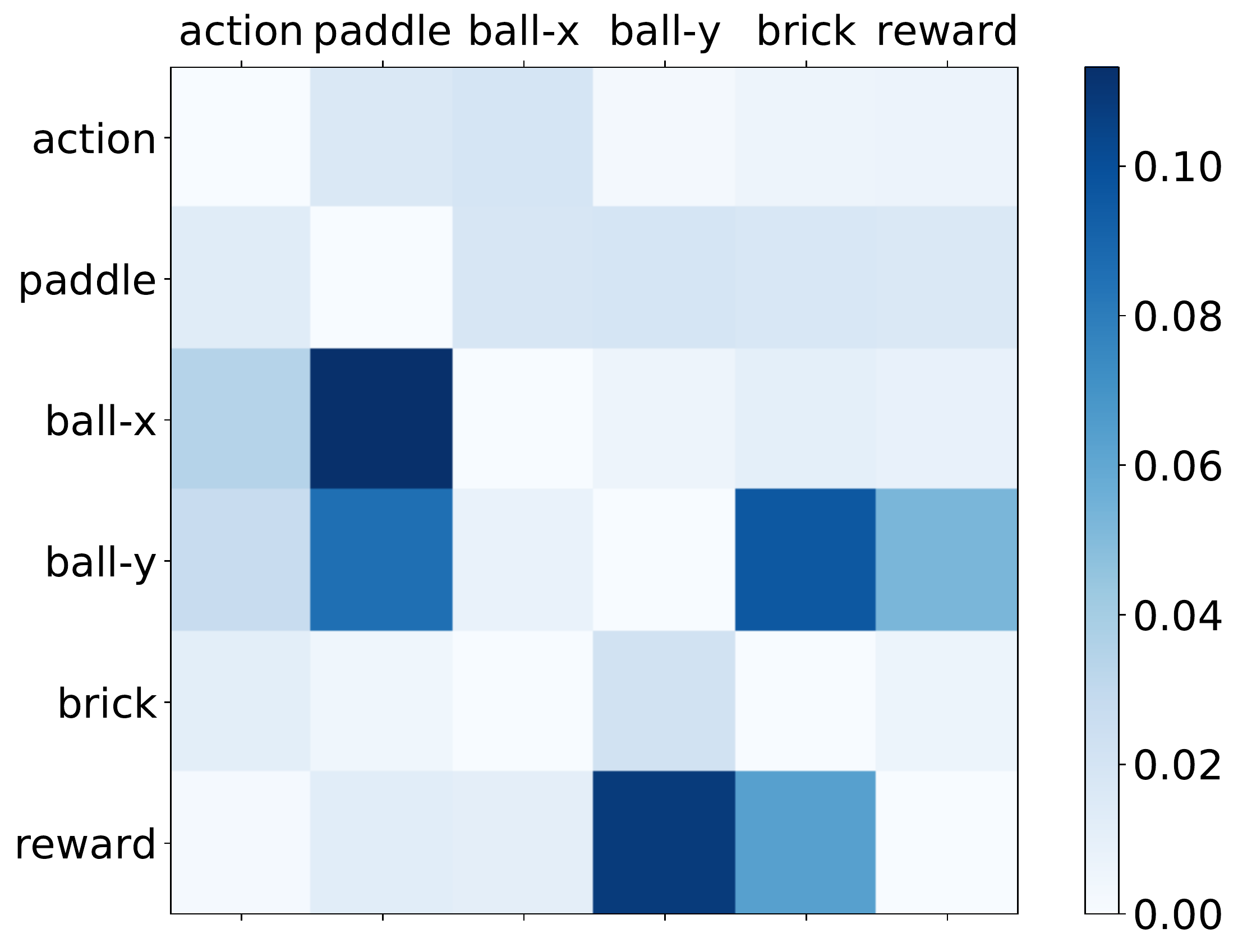}
        \caption{}
    \end{subfigure}
    \begin{subfigure}{.24\linewidth}
        \includegraphics[scale=0.14]{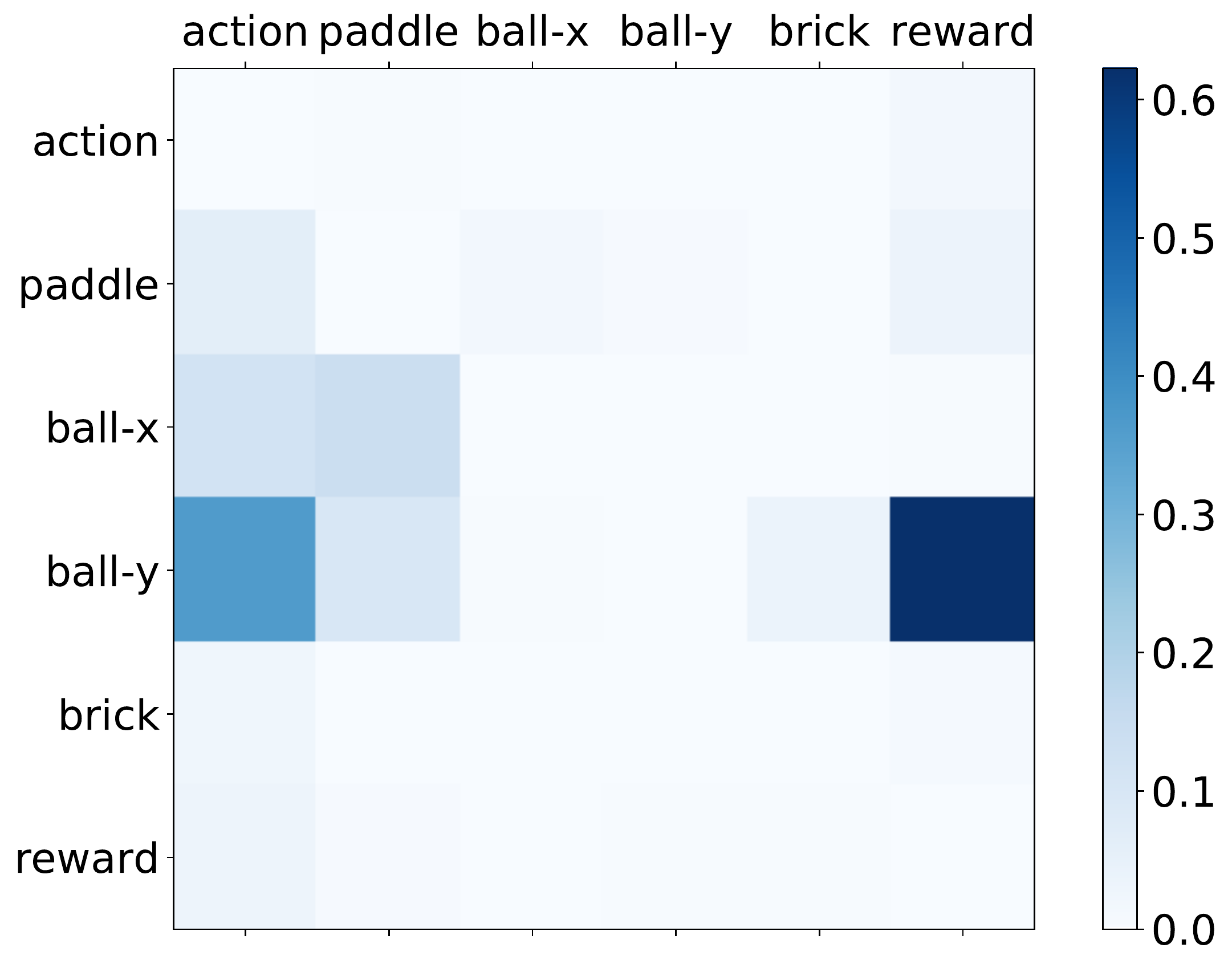}
        \caption{}
    \end{subfigure}
    \begin{subfigure}{.24\linewidth}
        \includegraphics[scale=0.14]{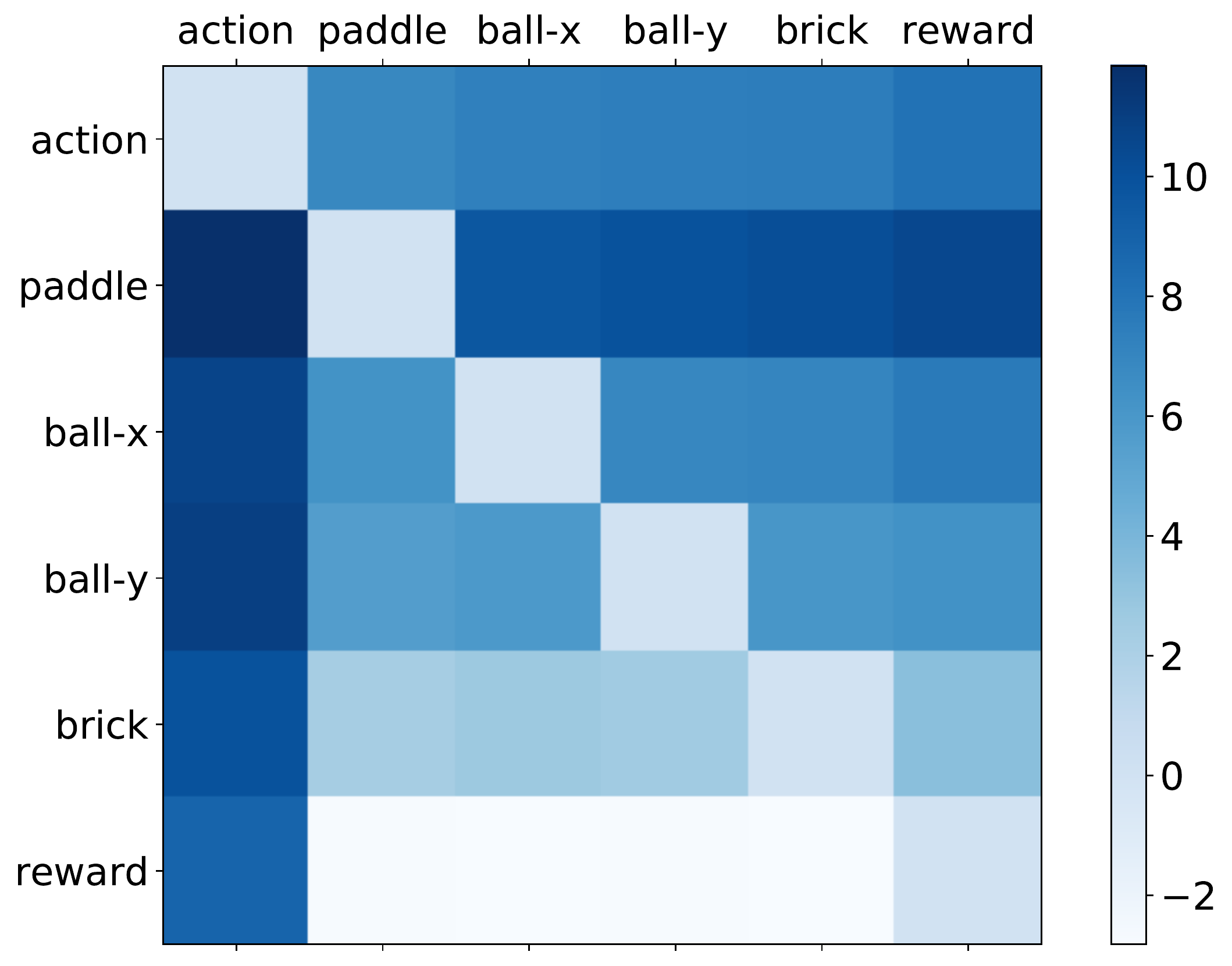}
        \caption{}
    \end{subfigure}
    \caption{(a) Predictive strength $W_{ji}$ inferred by our method in Section \ref{sec:video_games}. The $(j,i)$ element denotes the inferred causal strength from $j$ to $i$. (e) True underlying causal relations are marked dark, with light color marking competing causal relations that are indistinguishable from data. Other subfigures are: directional strength inferred by (b) mutual information (c) transfer entropy (d) linear Granger (f) kernel Granger (g) elastic net (h) causal influence.}
\label{fig:breakout_comparison}
\end{figure}

To see how our method can discover the directional (possibly causal) relations in real video games, and potentially improve reinforcement learning (RL) or imitation learning (IL), we apply our method to the relational inference between the trajectories of different objects from a trained CNN RL-agent playing Atari Breakout games (\cite{bellemare2013arcade}, implementation details see Appendix \ref{app:breakout}). Fig.~\ref{fig:breakout_comparison} shows the inferred $W_{ji}$ matrix for our method and compared methods, respectively. The true underlying causal chain is marked in dark color in Fig.~\ref{fig:breakout_comparison}e, with light color marking the competing causal relations that are indistinguishable from data (e.g. decrease of bricks and increase of reward happen at the same time step, so we cannot distinguish ball-y$\to$brick and ball-y$\to$reward). 
Compared with other methods, we see that our method is able to discover comparatively most of the causal relations without finding false positives. Specifically, it correctly discovers a prominent causal direction from the ball's $y$ position to the reward, as well as brick $\to$ reward, ball-x $\to$ action, ball-y $\to$ action. The latter two show that the ball's $x$ and $y$ positions also have influences on the trained agent's action: in order that the ball does not fall to the bottom, the agent has to position itself at the right position depending on the $x$ and $y$ positions of the ball. 

In comparison, mutual information (Fig. \ref{fig:breakout_comparison}b) gives a symmetric matrix that does not differentiate the two possible directions, and also misses the arrows ball-y$\to$brick$\to$reward. For transfer entropy (Fig. \ref{fig:breakout_comparison}c), although it correctly discovers a number of  causal arrows, it also gives relatively high scores for some incorrect arrows: brick$\to$ action, ball-y$\to$ball-x. For kernel Granger (Fig. \ref{fig:breakout_comparison}f), although it correctly discovers four causal  relations, it also incorrectly finds reward$\to$ball-y and reward$\to$brick. For elastic net (Fig. \ref{fig:breakout_comparison}g), it correctly discovers two prominent causal relations: ball-y$\to$action and ball-y$\to$reward, but misses a few others. Linear Granger (Fig. \ref{fig:breakout_comparison}d) and causal influence (Fig. \ref{fig:breakout_comparison}h) fail to discover useful causal arrows.

\subsection{Experiment with heart-rate vs. breath-rate and rat brain EEG datasets}
\label{sec:heart_rate}

\begin{figure}[b]
\centering
\begin{subfigure}[b]{.32\linewidth}
\includegraphics[scale=0.24]{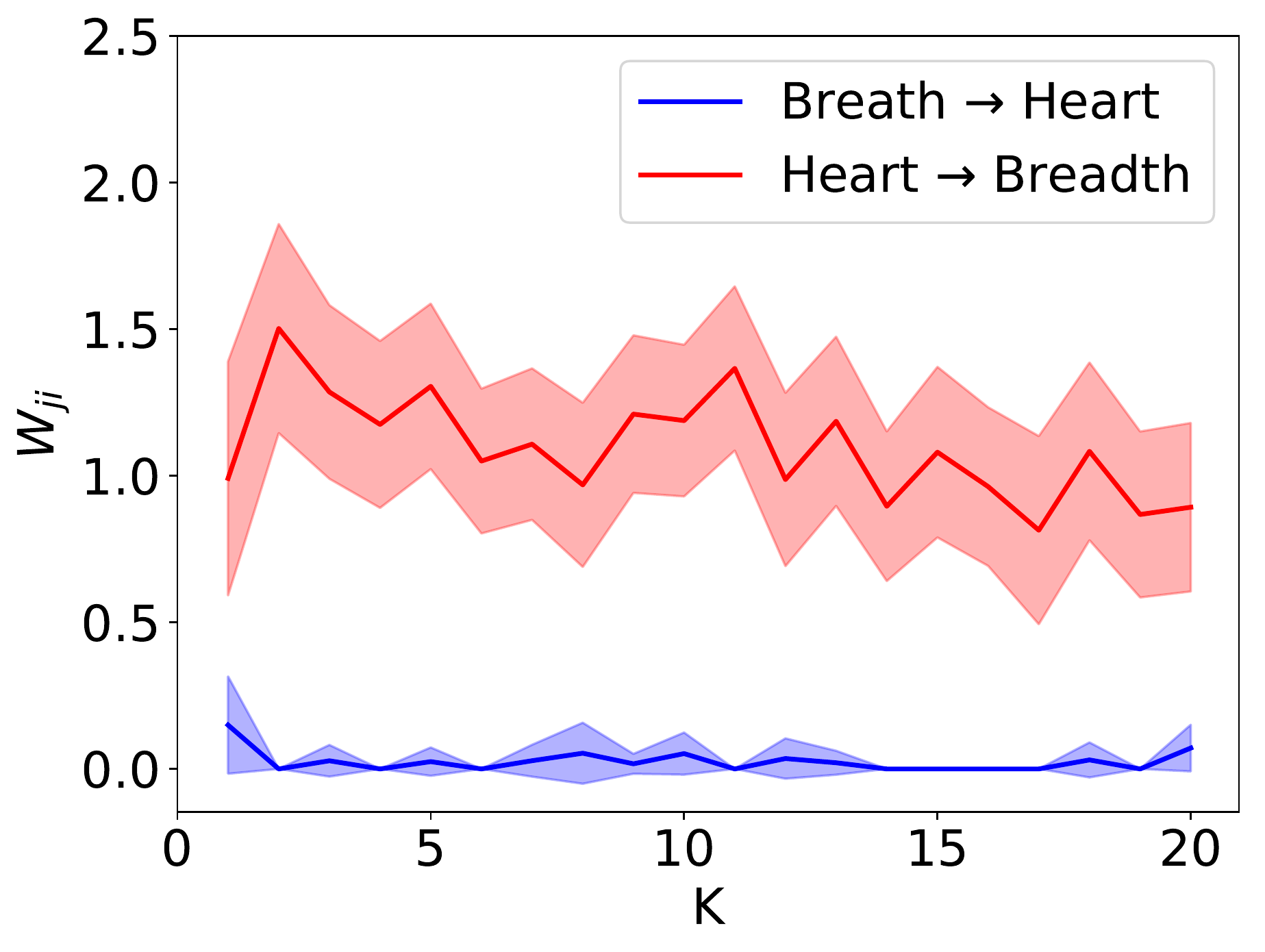}
\vspace{-15pt}\caption{}
\end{subfigure}
\begin{subfigure}[b]{.32\linewidth}
\includegraphics[scale=0.34]{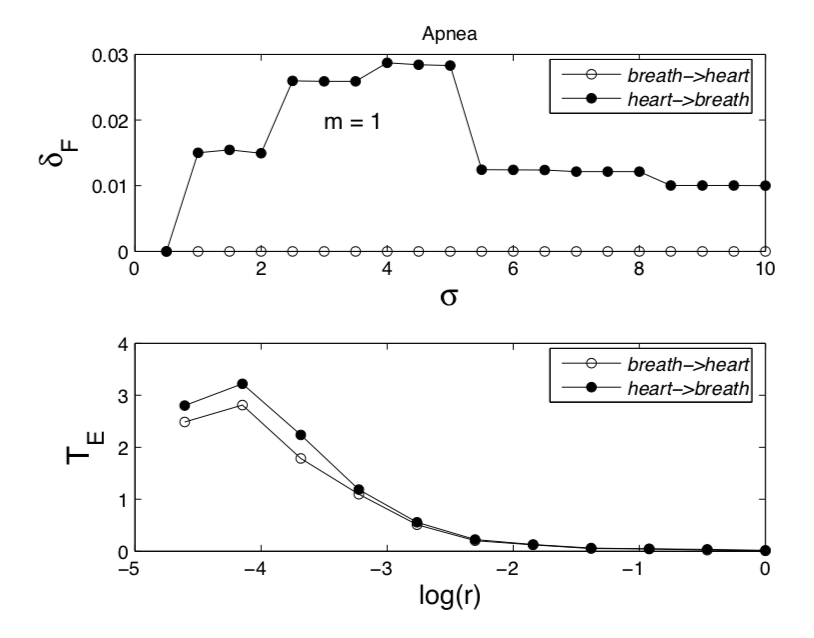}
\vspace{-15pt}\caption{}
\end{subfigure}
\begin{subfigure}[b]{.32\linewidth}
\includegraphics[scale=0.34]{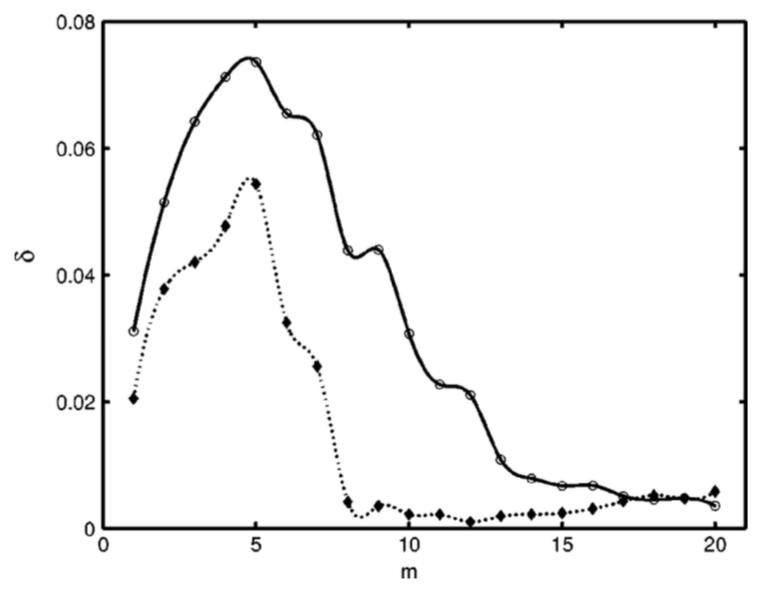}
\vspace{-15pt}\caption{}
\end{subfigure}
\vspace{-4pt}
\caption{(a) Predictive strength $W_{ji}$ inferred by our method with the heart-rate vs. breath-rate dataset, averaged over 50 initializations of $f_\theta$. The shaded areas are the 95\% confidence interval. (b) Upper: the filtered causality index  vs. varying width of Gaussian kernel $\sigma$ \cite{marinazzo2008kernel}; lower: transfer entropy vs. $r$, the length scale \cite{schreiber2000measuring}; (c) The causality index for breath$\to$heart (lower) and heart$\to$breath (upper) in \cite{ancona2004radial}, where $m$ is the maximum time lag (equivalent to our $K$).}
\label{fig:apnea}%
\vskip -0.08in
\end{figure}

Now we test our algorithm with real-world datasets. As a common dataset studied in previous causal works, we use the time-series of the breath rate and instantaneous heart rate of a sleeping patient suffering from sleep apnea (samples 2350-3550 of data set B from Santa Fe Institute time series contest held in 1991, available in \cite{Physionet}). We apply our method to infer the directional relations between the breath rate and heart rate, with different maximum time horizon $K$. The result is shown in Fig.~\ref{fig:apnea}. We see that the predictive strength $W_{ji}$ from heart to breath is significantly higher than the reverse direction that is basically 0, consistent with the results from previous causal inference methods \cite{schreiber2000measuring, ancona2004radial,marinazzo2008kernel} as also shown in Fig.~\ref{fig:apnea}(b)(c). Notably, the $W_{ji}$ from heart to breath estimated by our method remains at roughly the same level for different $K$s, in contrast to the decaying causality index w.r.t. increasing history length in (\cite{ancona2004radial}, Fig. \ref{fig:apnea} (c)), showing a merit of our method in estimating directional strength across different time-horizons, aided by the flexibility of neural nets in extracting the right information to predict the future. The implementation details are provided in Appendix \ref{app:real_dataset}. In addition, in Appendix \ref{app:rat_EEG_experiment} we test our algorithm on a rat EEG dataset, and obtain consistent result with previous works.

\section{Discussion and conclusion}

In this paper, we have introduced a novel relational learning with Minimum Predictive Information Regularization (MPIR) method for exploratory discovery of nonlinear directional relations from observational time series. It allows functional approximators like neural nets to learn complex directional relations from time series data. We prove its three theoretical properties,
and provide intuition that it favors variables that directly cause the variable of interest. We demonstrate in synthetic datasets, a video game environment and heart-rate vs. breath-rate dataset, that our method has better capability to handle nonlinearity, and can scale to large numbers of time series. We believe our work endows practitioners with a useful tool for deciphering the directional relations in complex systems, and are excited to see it in broader applications.

\bibliography{reference}

\begin{thebibliography}{53}
\providecommand{\natexlab}[1]{#1}
\providecommand{\url}[1]{\texttt{#1}}
\expandafter\ifx\csname urlstyle\endcsname\relax
  \providecommand{\doi}[1]{doi: #1}\else
  \providecommand{\doi}{doi: \begingroup \urlstyle{rm}\Url}\fi

\bibitem[Ancona et~al.(2004)Ancona, Marinazzo, and
  Stramaglia]{ancona2004radial}
Ancona, N., Marinazzo, D., and Stramaglia, S.
\newblock Radial basis function approach to nonlinear granger causality of time
  series.
\newblock \emph{Physical Review E}, 70\penalty0 (5):\penalty0 056221, 2004.

\bibitem[Battaglia et~al.(2016)Battaglia, Pascanu, Lai, Rezende,
  et~al.]{battaglia2016interaction}
Battaglia, P., Pascanu, R., Lai, M., Rezende, D.~J., et~al.
\newblock Interaction networks for learning about objects, relations and
  physics.
\newblock In \emph{Advances in neural information processing systems}, pp.\
  4502--4510, 2016.

\bibitem[Belghazi et~al.(2018)Belghazi, Rajeswar, Baratin, Hjelm, and
  Courville]{belghazi2018mine}
Belghazi, I., Rajeswar, S., Baratin, A., Hjelm, R.~D., and Courville, A.
\newblock Mine: mutual information neural estimation.
\newblock \emph{arXiv preprint arXiv:1801.04062}, 2018.

\bibitem[Bellemare et~al.(2013)Bellemare, Naddaf, Veness, and
  Bowling]{bellemare2013arcade}
Bellemare, M.~G., Naddaf, Y., Veness, J., and Bowling, M.
\newblock The arcade learning environment: An evaluation platform for general
  agents.
\newblock \emph{Journal of Artificial Intelligence Research}, 47:\penalty0
  253--279, 2013.

\bibitem[Brockman et~al.(2016)Brockman, Cheung, Pettersson, Schneider,
  Schulman, Tang, and Zaremba]{1606.01540}
Brockman, G., Cheung, V., Pettersson, L., Schneider, J., Schulman, J., Tang,
  J., and Zaremba, W.
\newblock Openai gym, 2016.

\bibitem[Chang et~al.(2016)Chang, Ullman, Torralba, and
  Tenenbaum]{chang2016compositional}
Chang, M.~B., Ullman, T., Torralba, A., and Tenenbaum, J.~B.
\newblock A compositional object-based approach to learning physical dynamics.
\newblock \emph{arXiv preprint arXiv:1612.00341}, 2016.

\bibitem[Chickering(2002)]{chickering2002optimal}
Chickering, D.~M.
\newblock Optimal structure identification with greedy search.
\newblock \emph{Journal of machine learning research}, 3\penalty0
  (Nov):\penalty0 507--554, 2002.

\bibitem[Davis \& Goadrich(2006)Davis and Goadrich]{davis2006relationship}
Davis, J. and Goadrich, M.
\newblock The relationship between precision-recall and roc curves.
\newblock In \emph{Proceedings of the 23rd international conference on Machine
  learning}, pp.\  233--240. ACM, 2006.

\bibitem[Ding et~al.(2006)Ding, Chen, and Bressler]{ding2006granger}
Ding, M., Chen, Y., and Bressler, S.~L.
\newblock Granger causality: basic theory and application to neuroscience.
\newblock \emph{Handbook of time series analysis: recent theoretical
  developments and applications}, pp.\  437--460, 2006.

\bibitem[Diuk et~al.(2008)Diuk, Cohen, and Littman]{diuk2008object}
Diuk, C., Cohen, A., and Littman, M.~L.
\newblock An object-oriented representation for efficient reinforcement
  learning.
\newblock In \emph{Proceedings of the 25th international conference on Machine
  learning}, pp.\  240--247. ACM, 2008.

\bibitem[Granger(1969)]{granger1969investigating}
Granger, C.~W.
\newblock Investigating causal relations by econometric models and
  cross-spectral methods.
\newblock \emph{Econometrica: Journal of the Econometric Society}, pp.\
  424--438, 1969.

\bibitem[Granger(1980)]{granger1980testing}
Granger, C.~W.
\newblock Testing for causality: a personal viewpoint.
\newblock \emph{Journal of Economic Dynamics and control}, 2:\penalty0
  329--352, 1980.

\bibitem[Granger et~al.(2000)Granger, Huangb, and Yang]{granger2000bivariate}
Granger, C.~W., Huangb, B.-N., and Yang, C.-W.
\newblock A bivariate causality between stock prices and exchange rates:
  evidence from recent asianflu.
\newblock \emph{The Quarterly Review of Economics and Finance}, 40\penalty0
  (3):\penalty0 337--354, 2000.

\bibitem[Guttenberg et~al.(2016)Guttenberg, Virgo, Witkowski, Aoki, and
  Kanai]{guttenberg2016permutation}
Guttenberg, N., Virgo, N., Witkowski, O., Aoki, H., and Kanai, R.
\newblock Permutation-equivariant neural networks applied to dynamics
  prediction.
\newblock \emph{arXiv preprint arXiv:1612.04530}, 2016.

\bibitem[Harris \& Drton(2013)Harris and Drton]{harris2013pc}
Harris, N. and Drton, M.
\newblock Pc algorithm for nonparanormal graphical models.
\newblock \emph{The Journal of Machine Learning Research}, 14\penalty0
  (1):\penalty0 3365--3383, 2013.

\bibitem[Hauser \& B{\"u}hlmann(2012)Hauser and
  B{\"u}hlmann]{hauser2012characterization}
Hauser, A. and B{\"u}hlmann, P.
\newblock Characterization and greedy learning of interventional markov
  equivalence classes of directed acyclic graphs.
\newblock \emph{Journal of Machine Learning Research}, 13\penalty0
  (Aug):\penalty0 2409--2464, 2012.

\bibitem[Hiemstra \& Jones(1994)Hiemstra and Jones]{hiemstra1994testing}
Hiemstra, C. and Jones, J.~D.
\newblock Testing for linear and nonlinear granger causality in the stock
  price-volume relation.
\newblock \emph{The Journal of Finance}, 49\penalty0 (5):\penalty0 1639--1664,
  1994.

\bibitem[Hornik(1991)]{hornik1991approximation}
Hornik, K.
\newblock Approximation capabilities of multilayer feedforward networks.
\newblock \emph{Neural networks}, 4\penalty0 (2):\penalty0 251--257, 1991.

\bibitem[Hoshen(2017)]{hoshen2017vain}
Hoshen, Y.
\newblock Vain: Attentional multi-agent predictive modeling.
\newblock In \emph{Advances in Neural Information Processing Systems}, pp.\
  2701--2711, 2017.

\bibitem[Janzing et~al.(2013)Janzing, Balduzzi, Grosse-Wentrup, Sch{\"o}lkopf,
  et~al.]{janzing2013quantifying}
Janzing, D., Balduzzi, D., Grosse-Wentrup, M., Sch{\"o}lkopf, B., et~al.
\newblock Quantifying causal influences.
\newblock \emph{The Annals of Statistics}, 41\penalty0 (5):\penalty0
  2324--2358, 2013.

\bibitem[Kingma \& Ba(2014)Kingma and Ba]{kingma2014adam}
Kingma, D.~P. and Ba, J.
\newblock Adam: A method for stochastic optimization.
\newblock \emph{arXiv preprint arXiv:1412.6980}, 2014.

\bibitem[Kraskov et~al.(2004)Kraskov, St{\"o}gbauer, and
  Grassberger]{kraskov2004estimating}
Kraskov, A., St{\"o}gbauer, H., and Grassberger, P.
\newblock Estimating mutual information.
\newblock \emph{Physical review E}, 69\penalty0 (6):\penalty0 066138, 2004.

\bibitem[Lizier et~al.(2008)Lizier, Prokopenko, and Zomaya]{lizier2008local}
Lizier, J.~T., Prokopenko, M., and Zomaya, A.~Y.
\newblock Local information transfer as a spatiotemporal filter for complex
  systems.
\newblock \emph{Physical Review E}, 77\penalty0 (2):\penalty0 026110, 2008.

\bibitem[Lozano et~al.(2009)Lozano, Abe, Liu, and Rosset]{lozano2009grouped}
Lozano, A.~C., Abe, N., Liu, Y., and Rosset, S.
\newblock Grouped graphical granger modeling for gene expression regulatory
  networks discovery.
\newblock \emph{Bioinformatics}, 25\penalty0 (12):\penalty0 i110--i118, 2009.

\bibitem[Marinazzo et~al.(2008{\natexlab{a}})Marinazzo, Pellicoro, and
  Stramaglia]{marinazzo2008kernel}
Marinazzo, D., Pellicoro, M., and Stramaglia, S.
\newblock Kernel method for nonlinear granger causality.
\newblock \emph{Physical review letters}, 100\penalty0 (14):\penalty0 144103,
  2008{\natexlab{a}}.

\bibitem[Marinazzo et~al.(2008{\natexlab{b}})Marinazzo, Pellicoro, and
  Stramaglia]{marinazzo2008kernel2}
Marinazzo, D., Pellicoro, M., and Stramaglia, S.
\newblock Kernel-granger causality and the analysis of dynamical networks.
\newblock \emph{Physical review E}, 77\penalty0 (5):\penalty0 056215,
  2008{\natexlab{b}}.

\bibitem[Meier et~al.(2008)Meier, Van De~Geer, and
  B{\"u}hlmann]{meier2008group}
Meier, L., Van De~Geer, S., and B{\"u}hlmann, P.
\newblock The group lasso for logistic regression.
\newblock \emph{Journal of the Royal Statistical Society: Series B (Statistical
  Methodology)}, 70\penalty0 (1):\penalty0 53--71, 2008.

\bibitem[Mnih et~al.(2015)Mnih, Kavukcuoglu, Silver, Rusu, Veness, Bellemare,
  Graves, Riedmiller, Fidjeland, Ostrovski, et~al.]{mnih2015human}
Mnih, V., Kavukcuoglu, K., Silver, D., Rusu, A.~A., Veness, J., Bellemare,
  M.~G., Graves, A., Riedmiller, M., Fidjeland, A.~K., Ostrovski, G., et~al.
\newblock Human-level control through deep reinforcement learning.
\newblock \emph{Nature}, 518\penalty0 (7540):\penalty0 529, 2015.

\bibitem[Nandy et~al.(2018)Nandy, Hauser, Maathuis, et~al.]{nandy2018high}
Nandy, P., Hauser, A., Maathuis, M.~H., et~al.
\newblock High-dimensional consistency in score-based and hybrid structure
  learning.
\newblock \emph{The Annals of Statistics}, 46\penalty0 (6A):\penalty0
  3151--3183, 2018.

\bibitem[Neves et~al.(2008)Neves, Cooke, and Bliss]{neves2008synaptic}
Neves, G., Cooke, S.~F., and Bliss, T.~V.
\newblock Synaptic plasticity, memory and the hippocampus: a neural network
  approach to causality.
\newblock \emph{Nature Reviews Neuroscience}, 9\penalty0 (1):\penalty0 65,
  2008.

\bibitem[Papoulis(1985)]{papoulis1985probability}
Papoulis, A.
\newblock Probability, random variables and stochastic processes.
\newblock 1985.

\bibitem[Pearl(2002)]{pearl2002causality}
Pearl, J.
\newblock Causality: models, reasoning, and inference.
\newblock \emph{IIE Transactions}, 34\penalty0 (6):\penalty0 583--589, 2002.

\bibitem[Pearl(2009)]{pearl2009causality}
Pearl, J.
\newblock \emph{Causality}.
\newblock Cambridge university press, 2009.

\bibitem[Pearl et~al.(2009)]{pearl2009causal}
Pearl, J. et~al.
\newblock Causal inference in statistics: An overview.
\newblock \emph{Statistics surveys}, 3:\penalty0 96--146, 2009.

\bibitem[Peters et~al.(2017)Peters, Janzing, and
  Sch{\"o}lkopf]{peters2017elements}
Peters, J., Janzing, D., and Sch{\"o}lkopf, B.
\newblock \emph{Elements of causal inference: foundations and learning
  algorithms}.
\newblock MIT press, 2017.

\bibitem[PhysioNet()]{Physionet}
PhysioNet.
\newblock Physionet data bank.
\newblock URL \url{http://www.physionet.org/}.

\bibitem[Quiroga()]{}
Quiroga, R.~Q.
\newblock The dataset can be downloaded from.
\newblock URL \url{www.vis.caltech.edu/~rodri}.

\bibitem[Quiroga et~al.(2002)Quiroga, Kraskov, Kreuz, and
  Grassberger]{quiroga2002performance}
Quiroga, R.~Q., Kraskov, A., Kreuz, T., and Grassberger, P.
\newblock Performance of different synchronization measures in real data: a
  case study on electroencephalographic signals.
\newblock \emph{Physical Review E}, 65\penalty0 (4):\penalty0 041903, 2002.

\bibitem[Rolnick \& Tegmark(2018)Rolnick and Tegmark]{rolnick2018the}
Rolnick, D. and Tegmark, M.
\newblock The power of deeper networks for expressing natural functions.
\newblock In \emph{International Conference on Learning Representations}, 2018.
\newblock URL \url{https://openreview.net/forum?id=SyProzZAW}.

\bibitem[Scardapane et~al.(2017)Scardapane, Comminiello, Hussain, and
  Uncini]{scardapane2017group}
Scardapane, S., Comminiello, D., Hussain, A., and Uncini, A.
\newblock Group sparse regularization for deep neural networks.
\newblock \emph{Neurocomputing}, 241:\penalty0 81--89, 2017.

\bibitem[Schreiber(2000)]{schreiber2000measuring}
Schreiber, T.
\newblock Measuring information transfer.
\newblock \emph{Physical review letters}, 85\penalty0 (2):\penalty0 461, 2000.

\bibitem[Seth et~al.(2015)Seth, Barrett, and Barnett]{seth2015granger}
Seth, A.~K., Barrett, A.~B., and Barnett, L.
\newblock Granger causality analysis in neuroscience and neuroimaging.
\newblock \emph{Journal of Neuroscience}, 35\penalty0 (8):\penalty0 3293--3297,
  2015.

\bibitem[Sindhwani et~al.(2012)Sindhwani, Quang, and
  Lozano]{sindhwani2012scalable}
Sindhwani, V., Quang, M.~H., and Lozano, A.~C.
\newblock Scalable matrix-valued kernel learning for high-dimensional nonlinear
  multivariate regression and granger causality.
\newblock \emph{arXiv preprint arXiv:1210.4792}, 2012.

\bibitem[Spirtes et~al.(2000)Spirtes, Glymour, Scheines, Heckerman, Meek,
  Cooper, and Richardson]{spirtes2000causation}
Spirtes, P., Glymour, C.~N., Scheines, R., Heckerman, D., Meek, C., Cooper, G.,
  and Richardson, T.
\newblock \emph{Causation, prediction, and search}.
\newblock MIT press, 2000.

\bibitem[Stock \& Watson(1989)Stock and Watson]{stock1989interpreting}
Stock, J.~H. and Watson, M.~W.
\newblock Interpreting the evidence on money-income causality.
\newblock \emph{Journal of Econometrics}, 40\penalty0 (1):\penalty0 161--181,
  1989.

\bibitem[Tank et~al.(2018)Tank, Covert, Foti, Shojaie, and Fox]{tank2018neural}
Tank, A., Covert, I., Foti, N., Shojaie, A., and Fox, E.
\newblock Neural granger causality for nonlinear time series.
\newblock \emph{arXiv preprint arXiv:1802.05842}, 2018.

\bibitem[Tibshirani(1996)]{tibshirani1996regression}
Tibshirani, R.
\newblock Regression shrinkage and selection via the lasso.
\newblock \emph{Journal of the Royal Statistical Society. Series B
  (Methodological)}, pp.\  267--288, 1996.

\bibitem[van Steenkiste et~al.(2018)van Steenkiste, Chang, Greff, and
  Schmidhuber]{van2018relational}
van Steenkiste, S., Chang, M., Greff, K., and Schmidhuber, J.
\newblock Relational neural expectation maximization: Unsupervised discovery of
  objects and their interactions.
\newblock \emph{arXiv preprint arXiv:1802.10353}, 2018.

\bibitem[Watters et~al.(2017)Watters, Zoran, Weber, Battaglia, Pascanu, and
  Tacchetti]{watters2017visual}
Watters, N., Zoran, D., Weber, T., Battaglia, P., Pascanu, R., and Tacchetti,
  A.
\newblock Visual interaction networks: Learning a physics simulator from video.
\newblock In \emph{Advances in neural information processing systems}, pp.\
  4539--4547, 2017.

\bibitem[White \& Chalak(2009)White and Chalak]{white2009settable}
White, H. and Chalak, K.
\newblock Settable systems: an extension of pearl's causal model with
  optimization, equilibrium, and learning.
\newblock \emph{Journal of Machine Learning Research}, 10\penalty0
  (Aug):\penalty0 1759--1799, 2009.

\bibitem[White \& Lu(2010)White and Lu]{white2010granger}
White, H. and Lu, X.
\newblock Granger causality and dynamic structural systems.
\newblock \emph{Journal of Financial Econometrics}, 8\penalty0 (2):\penalty0
  193--243, 2010.

\bibitem[White et~al.(2011)White, Chalak, and Lu]{white2011linking}
White, H., Chalak, K., and Lu, X.
\newblock Linking granger causality and the pearl causal model with settable
  systems.
\newblock In \emph{NIPS Mini-Symposium on Causality in Time Series}, pp.\
  1--29, 2011.

\bibitem[Zou \& Hastie(2005)Zou and Hastie]{zou2005regularization}
Zou, H. and Hastie, T.
\newblock Regularization and variable selection via the elastic net.
\newblock \emph{Journal of the Royal Statistical Society: Series B (Statistical
  Methodology)}, 67\penalty0 (2):\penalty0 301--320, 2005.

\end{thebibliography}
\bibliographystyle{icml2019}

\newpage
\onecolumn
\appendix
\begin{center}
\begin{huge}
\textbf{Appendix}
\end{huge}
\end{center}

\section{Hyperparameter $\lambda$ selection}
\label{app:select_lambda}

For selecting an appropriate hyperparameter $\lambda$, we run our experiments for the synthetic dataset with $\lambda=0.001,0.002,0.005,0.01,0.02,0.05$. For each experiment involving $N$ time series, we append $\ceil[\big]{N/2}$  independent time series $v_{t-1}^{(s)}$ ($s=1,2,...\ceil[\big]{N/2}$) to $\X_{t-1}$, generated by randomly sampling $\ceil[\big]{N/2}$ time series from $\X_{t-1}$ and performing random permutation across the examples. We also append $\ceil[\big]{N/2}$ time series $w_t^{(i)}, i=1,2,...\ceil[\big]{N/2}$ to $x_t^{(i)}$, such that $w_t^{(i)}=X_{t-1}^{(i)}\cdot Q$, where $Q$ is a fixed random  $K\times1$ matrix, so that we know $X_{t-1}^{(i)}$ causes $w_{t}^{(i)}$, and $v_{t-1}^{(s)}$ does not cause $w_{t}^{(i)}$ for any $i,s$. We apply Alg. \ref{alg:learnable_noise} to the augmented dataset, and produce the estimated predictive strength $W_{ji}$ from $[\X_{t-1},\mathbf{v}_{t-1}]$ to $[\mathbf{x}_t,\mathbf{w}_t]$. For each hyperparameter $\lambda$, we then fit a Gaussian distribution $G_{v\to w}$ to the estimated predictive strengths from $v_{t-1}^{(s)}$ to $w_{t}^{(i)}$ ($s=1,2,...\ceil[\big]{N/2};j=1,2,...\ceil[\big]{N/2}$), and fit another Gaussian distribution $G_{x\to w}$ to the estimated predictive strengths from $X_{t-1}^{(i)}$ to $w_{t}^{(i)}$, $i=1,2,...\ceil[\big]{N/2}$, and select the $\lambda$ such that the upper $4\sigma$ value of $G_{v\to w}$ is smaller than the lower $4\sigma$ value of $G_{x\to w}$. In this way, for the known causal and non-causal relations, they are sufficiently apart. We find that $\lambda=0.001$ and $\lambda=0.002$ satisfy this criterion, while larger $\lambda$ fails to satisfy. We then set $\lambda=0.002$ for all our experiments.

\section{Proof and analysis of the Minimum Predictive Information regularized risk}
\label{app:W_proof}

In this section we prove the three properties of $W_{ji}$ in Section \ref{sec:our_method}, and analyze why it is likely to select variables that directly causes the variable of interest.

Firstly we state the assumption that will be used throughout this section:

\begin{assumption}
\label{assump:f_theta}
Assume that $f_\theta\in\mathcal{F}$ is a continuous function and has enough capacity so that it can approximate any $\int dx^{(i)}_t P(x_t^{(i)}|\mathbf{X}_{t-1})x_t^{(i)}$.  Let $j\neq i$ and assume that $P(X^{(j)}_{t-1})$ has support with intrinsic dimension of $KM$.
\end{assumption}

Also we emphasize that in this paper, the expected risks (with symbol $R$) are w.r.t. the distributions, and the empirical risks (with symbol $\hat{R}$) are w.r.t. a dataset drawn from the distribution, with finite number of examples. The theorems in this paper are all proved w.r.t. distributions (assuming infinite number of examples). Sample complexity results will be left for future work.

Before going forward with the main proof, we first prove the following lemma.

\subsection{Proving a lemma}

\begin{lemma}
\label{lemma:argmin_risk}
Suppose that Assumption \ref{assump:f_theta} holds. Denote

$$R_{\mathbf{X},x^{(i)}}^{\text{MSE}}[f_{\theta}]=\mathbb{E}_{\mathbf{X}_{t-1},x_t^{(i)}}\left[\left(x_t^{(i)}-f_\theta(\mathbf{X}_{t-1})\right)^2\right]$$ as the standard MSE loss, we have
\begin{equation}
\text{argmin}_{f_\theta}R^{\text{MSE}}_{\mathbf{X},x^{(i)}}[f_{\theta}]=\int dx^{(i)}_t P(x_t^{(i)}|\mathbf{X}_{t-1})x_t^{(i)}
\end{equation}

and
\begin{equation}
\label{eq:min_mse_risk}
\text{min}_{f_\theta}R^{\text{MSE}}_{\mathbf{X},x^{(i)}}[f_{\theta}]=\mathbb{E}_{\mathbf{X}_{t-1},x_t^{(i)}}\left[\left(x_t^{(i)}-\int dx^{(i)}_t P(x_t^{(i)}|\mathbf{X}_{t-1})x_t^{(i)}\right)^2\right]
\end{equation}

In other words, for the MSE loss, its minimum is attained when $f_\theta(\mathbf{X}_{t-1})$ is the expectation of $x_t^{(i)}$ conditioned on  $\mathbf{X}_{t-1}$.

\end{lemma}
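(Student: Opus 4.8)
The plan is to show that the MSE loss, viewed as a functional of $f_\theta$, is minimized pointwise for each value of $\mathbf{X}_{t-1}$, and that the pointwise minimizer is exactly the conditional expectation $\int dx_t^{(i)} P(x_t^{(i)}\mid\mathbf{X}_{t-1})\,x_t^{(i)}$. This is the standard fact that the regression function is the $L^2$-optimal predictor, adapted to the notation here. The role of Assumption~\ref{assump:f_theta} is to guarantee that this conditional-expectation function actually lies in the class $\mathcal{F}$ (it is continuous and $f_\theta$ has enough capacity to approximate it), so the unconstrained pointwise optimum is in fact attainable within the hypothesis class.

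First I would rewrite the expected risk by conditioning on $\mathbf{X}_{t-1}$ and using the tower property:
\begin{equation}
R^{\text{MSE}}_{\mathbf{X},x^{(i)}}[f_\theta]=\mathbb{E}_{\mathbf{X}_{t-1}}\left[\mathbb{E}_{x_t^{(i)}\mid\mathbf{X}_{t-1}}\left[\left(x_t^{(i)}-f_\theta(\mathbf{X}_{t-1})\right)^2\right]\right].
\end{equation}
Since $f_\theta(\mathbf{X}_{t-1})$ is a constant once $\mathbf{X}_{t-1}$ is fixed, the inner expectation is a quadratic in the scalar (or vector) $c:=f_\theta(\mathbf{X}_{t-1})$, namely $\mathbb{E}[(x_t^{(i)}-c)^2\mid\mathbf{X}_{t-1}]$. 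Next I would add and subtract the conditional mean $\mu(\mathbf{X}_{t-1}):=\mathbb{E}[x_t^{(i)}\mid\mathbf{X}_{t-1}]=\int dx_t^{(i)} P(x_t^{(i)}\mid\mathbf{X}_{t-1})\,x_t^{(i)}$ and expand, which yields the bias-variance split $\mathbb{E}[(x_t^{(i)}-\mu)^2\mid\mathbf{X}_{t-1}]+(\mu-c)^2$; the cross term vanishes because $\mathbb{E}[x_t^{(i)}-\mu\mid\mathbf{X}_{t-1}]=0$ by definition of $\mu$. The first summand is independent of $c$, and the second is nonnegative and uniquely zero at $c=\mu(\mathbf{X}_{t-1})$, so the inner expectation is minimized exactly at $f_\theta(\mathbf{X}_{t-1})=\mu(\mathbf{X}_{t-1})$ for every value of $\mathbf{X}_{t-1}$.

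Finally I would argue that minimizing pointwise over $c$ for each $\mathbf{X}_{t-1}$ lower-bounds the outer expectation, and that this lower bound is achieved by the single function $\mu$; by Assumption~\ref{assump:f_theta} this $\mu$ is representable (approximable) in $\mathcal{F}$, so it is the actual argmin over $f_\theta$, giving the first displayed equation. Substituting $f_\theta=\mu$ back into the risk gives the second displayed equation for the minimum value directly. The one point that needs slight care rather than being purely routine is the interchange of the outer minimization with the pointwise minimization: I would note that since the integrand $\mathbb{E}[(x_t^{(i)}-f_\theta(\mathbf{X}_{t-1}))^2\mid\mathbf{X}_{t-1}]$ is minimized at each $\mathbf{X}_{t-1}$ independently by $\mu$, and the expectation over $\mathbf{X}_{t-1}$ is monotone, no constraint couples the values of $f_\theta$ across different $\mathbf{X}_{t-1}$ except measurability/continuity, which Assumption~\ref{assump:f_theta} supplies. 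The main (mild) obstacle is therefore purely the representability step—ensuring the pointwise optimizer is attained within the constrained class $\mathcal{F}$—which is precisely what the capacity assumption is designed to handle.
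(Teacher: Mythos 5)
Your proof is correct. It shares the paper's overall structure---condition on $\mathbf{X}_{t-1}$, minimize the inner integral pointwise in $c=f_\theta(\mathbf{X}_{t-1})$, then invoke Assumption~\ref{assump:f_theta} to ensure the pointwise optimizer $\mu(\mathbf{X}_{t-1})=\int dx_t^{(i)}P(x_t^{(i)}|\mathbf{X}_{t-1})x_t^{(i)}$ is attainable in $\mathcal{F}$---but carries out the pointwise step differently. The paper differentiates $F(c)=\int dx_t^{(i)}P(x_t^{(i)}|\mathbf{X}_{t-1})(x_t^{(i)}-c)^2$ with respect to $c$, identifies the unique stationary point, and verifies positive-definiteness of the second derivative ($2\mathbf{I}$); you instead complete the square around the conditional mean, so the cross term vanishes by the defining property of $\mu$ and the quadratic $(\mu-c)^2$ is visibly minimized at $c=\mu$. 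Your route is marginally more economical: it avoids the second-order check entirely, and the residual term $\mathbb{E}[(x_t^{(i)}-\mu)^2\mid\mathbf{X}_{t-1}]$ left over from the decomposition immediately yields the expression for the minimum value in Eq.~(\ref{eq:min_mse_risk}) without a separate substitution. The paper's variational calculation, on the other hand, makes the uniqueness of the stationary point explicit. Your remark about the interchange of the outer expectation with the pointwise minimization, and about measurability/continuity of $\mu$ being supplied by the assumption, is exactly the point the paper handles by noting that the risk is minimized iff $F$ attains its minimum at every $\mathbf{X}_{t-1}$. No gaps.
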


\begin{proof}

The proof of the lemma is adapted from \cite{papoulis1985probability}. The risk
\begin{equation*}
\begin{aligned}
R^{\text{MSE}}_{\mathbf{X},x^{(i)}}[f_\theta]&=\mathbb{E}_{\mathbf{X}_{t-1},x_t^{(i)}}\left[\left(x_t^{(i)}-f_\theta(\mathbf{X}_{t-1})\right)^2\right]\\
&=\int d\mathbf{X}_{t-1} dx^{(i)}_t \cdot P(\mathbf{X}_{t-1},x^{(i)}_t) \left(x_t^{(i)}-f_\theta(\mathbf{X}_{t-1})\right)^2 \\
&=\int d\mathbf{X}_{t-1} P(\mathbf{X}_{t-1})\int dx^{(i)}_t P(x_t^{(i)}|\mathbf{X}_{t-1})\left(x_t^{(i)}-f_\theta(\mathbf{X}_{t-1})\right)^2
\end{aligned}
\end{equation*}

Note that here  $(x_t^{(i)}-f_\theta(\mathbf{X}_{t-1}))^2\equiv\big\langle x_t^{(i)}-f_\theta(\mathbf{X}_{t-1}), x_t^{(i)}-f_\theta(\mathbf{X}_{t-1}) \big\rangle$ is an inner product in $\R^M$.

For any $\mathbf{X}_{t-1}$, treating $f_\theta(\mathbf{X}_{t-1})\in \R^M$ as a vector, let's calculate its value such that the integral $F(f_\theta(\mathbf{X}_{t-1})):=\int dx^{(i)}_t P(x_t^{(i)}|\mathbf{X}_{t-1})\left(x_t^{(i)}-f_\theta(\mathbf{X}_{t-1})\right)^2$ attains its minimum.

Let
\begin{equation*}
\begin{aligned}
0&=\frac{\partial}{\partial f_{\theta}(\mathbf{X}_{t-1})}F(f_\theta(\mathbf{X}_{t-1}))\\
&=\frac{\partial}{\partial f_{\theta}(\mathbf{X}_{t-1})}\int dx^{(i)}_t P(x_t^{(i)}|\mathbf{X}_{t-1})\left(x_t^{(i)}-f_\theta(\mathbf{X}_{t-1})\right)^2 \\
&= -2\int dx^{(i)}_t P(x_t^{(i)}|\mathbf{X}_{t-1})\left(x_t^{(i)}-f_\theta(\mathbf{X}_{t-1})\right)
\end{aligned}
\end{equation*}

we have
\begin{equation*}
\begin{aligned}
\int dx^{(i)}_t P(x_t^{(i)}|\mathbf{X}_{t-1})x_t^{(i)}&=\int dx^{(i)}_t P(x_t^{(i)}|\mathbf{X}_{t-1})f_\theta(\mathbf{X}_{t-1})\\
&=f_\theta(\mathbf{X}_{t-1})\int dx^{(i)}_t P(x_t^{(i)}|\mathbf{X}_{t-1})\\
&=f_\theta(\mathbf{X}_{t-1})
\end{aligned}
\end{equation*}

Therefore, for any $\mathbf{X}_{t-1}$, $f_\theta(\mathbf{X}_{t-1})=\int dx^{(i)}_t P(x_t^{(i)}|\mathbf{X}_{t-1})x_t^{(i)}$ is the only stationary point for $F(f_\theta(\mathbf{X}_{t-1}))$.

Taking the second derivative, we have
\begin{equation*}
\begin{aligned}
&\frac{\partial^2}{(\partial f_{\theta}(\mathbf{X}_{t-1}))^2}F(f_\theta(\mathbf{X}_{t-1})) = 2\int dx^{(i)}_t P(x_t^{(i)}|\mathbf{X}_{t-1})\mathbf{I} =2\mathbf{I}
\end{aligned}
\end{equation*}
where $\mathbf{I}$ is an $M\times M$ identity matrix, which is always positive definite.

Therefore, for any $\mathbf{X}_{t-1}$, $f_\theta(\mathbf{X}_{t-1})=\int dx^{(i)}_t P(x_t^{(i)}|\mathbf{X}_{t-1})x_t^{(i)}$ is the only global minimum of $F(f_\theta(\mathbf{X}_{t-1}))$ w.r.t. $f_\theta(\mathbf{X}_{t-1})$.

Since 
\begin{equation*}
R^{\text{MSE}}_{\mathbf{X},x^{(i)}}[f_\theta]=\int d\mathbf{X}_{t-1} P(\mathbf{X}_{t-1})F(f_\theta(\mathbf{X}_{t-1}))
\end{equation*}

The minimum of the risk $R_{\mathbf{X},x^{(i)}}[f_\theta]$ is attained iff $F(f_\theta(\mathbf{X}_{t-1}))$ attains minimum at every $\mathbf{X}_{t-1}$, i.e.,
\begin{equation*}
    f_\theta(\mathbf{X}_{t-1})=\int dx^{(i)}_t P(x_t^{(i)}|\mathbf{X}_{t-1})x_t^{(i)}
\end{equation*}
is true for any $\mathbf{X}_{t-1}$. Given Assumption \ref{assump:f_theta}, we know that $f_\theta\in \mathcal{F}$ has enough capacity such that it can approximate any $\int dx^{(i)}_t P(x_t^{(i)}|\mathbf{X}_{t-1})x_t^{(i)}$. Therefore,
\begin{equation*}
\text{argmin}_{f_\theta}R^{\text{MSE}}_{\mathbf{X},x^{(i)}}[f_{\theta}]=\int dx^{(i)}_t P(x_t^{(i)}|\mathbf{X}_{t-1})x_t^{(i)}
\end{equation*}

and
\begin{equation*}
\text{min}_{f_\theta}R^{\text{MSE}}_{\mathbf{X},x^{(i)}}[f_{\theta}]=\mathbb{E}_{\mathbf{X}_{t-1},x_t^{(i)}}\left[\left(x_t^{(i)}-\int dx^{(i)}_t P(x_t^{(i)}|\mathbf{X}_{t-1})x_t^{(i)}\right)^2\right]
\end{equation*}
\end{proof}

\subsection{Proof of the three properties of $W_{ji}$}

The three properties are 
\begin{enumerate}[label={(\arabic*)}]
\item If $x^{(j)}\independent x^{(i)}$, then $W_{ji}=0$.
\item $W_{ji}$ is invariant to affine transformation of each individual $X_{t-1}^{(k)},k=1,2,...N$.
\item $W_{ji}$ is invariant to reparameterization of $\theta$ in $f_\theta$ (the mapping remains the same).
\end{enumerate}

\begin{proof}
\textbf{(1)} If $x^{(j)}\independent x^{(i)}$, then $X^{(j)}_{t-1}\independent x^{(i)}_t$. Since $\tilde{X}^{(j)(\eta_j)}_{t-1}=X^{(j)}_{t-1}+\eta_j\cdot\epsilon_j$ where $\epsilon_j\sim N(\mathbf{0},\mathbf{I})$, we have
$\tilde{X}^{(j)(\eta_j)}_{t-1}\independent x^{(i)}_t$. Recall Eq. (\ref{eq:learnable_risk}):
$$R_{\mathbf{X},x^{(i)}}[f_\theta,\boldsymbol{\eta}]=\mathbb{E}_{\mathbf{X}_{t-1},x_t^{(i)},\boldsymbol{\epsilon}}\left[\left(x_t^{(i)}-f_\theta(\tilde{\mathbf{X}}^{(\boldsymbol{\eta})}_{t-1})\right)^2\right]+\lambda\cdot \sum_{k=1}^{N}I(\tilde{X}^{(k)(\eta_k)}_{t-1};X^{(k)}_{t-1})$$
let $f_{\theta^*_{\boldsymbol{\eta}}}=\text{argmin}_{f_{\theta}}R_{\mathbf{X},x^{(i)}}[f_\theta,\boldsymbol{\eta}]$ given a certain $\boldsymbol{\eta}$, we have

\begin{equation*}
\begin{aligned}
f_{\theta^*_{\boldsymbol{\eta}}}(\tilde{\X}_{t-1}^{(\boldsymbol{\eta})})&=\text{argmin}_{f_{\theta}}R_{\mathbf{X},x^{(i)}}[f_\theta,\boldsymbol{\eta}]\\
&=\text{argmin}_{f_{\theta}}\mathbb{E}_{\tilde{\X}_{t-1}^{(\boldsymbol{\eta})},x_t^{(i)}}\left[\left(x_t^{(i)}-f_\theta(\tilde{\mathbf{X}}^{(\boldsymbol{\eta})}_{t-1})\right)^2\right]\\
&=\int dx^{(i)}_t P(x_t^{(i)}|\tilde{\X}_{t-1}^{(\boldsymbol{\eta})})x_t^{(i)}
\end{aligned}
\end{equation*}

where the second equality is due to that the mutual information term in $R_{\mathbf{X},x^{(i)}}[f_\theta,\boldsymbol{\eta}]$ does not depend on $f_\theta$, and the last equality is due to Lemma \ref{lemma:argmin_risk}. Let $\tilde{\X}_{t-1}^{(\boldsymbol{\eta})(\hat{j})}=\tilde{\X}_{t-1}^{(\boldsymbol{\eta})}\texttt{\textbackslash}\tilde{X}^{(j)(\eta_j)}_{t-1}$, since $\tilde{X}_{t-1}^{(j)(\eta_j)}\independent x^{(i)}_t$, we have

\begin{equation*}
\begin{aligned}
P(x_t^{(i)}|\tilde{\X}_{t-1}^{(\boldsymbol{\eta})})&\equiv P(x_t^{(i)}|\tilde{\X}_{t-1}^{(\boldsymbol{\eta})(\hat{j})},\tilde{X}_{t-1}^{(j)(\eta_j)})\\
&=P(x_t^{(i)}|\tilde{\X}_{t-1}^{(\boldsymbol{\eta})(\hat{j})})
\end{aligned}
\end{equation*}

Therefore,

\begin{equation*}
\begin{aligned}
f_{\theta^*_{\boldsymbol{\eta}}}(\tilde{\mathbf{X}}_{t-1}^{(\boldsymbol{\eta})})=\int dx^{(i)}_t P(x_t^{(i)}|\tilde{\X}_{t-1}^{(\boldsymbol{\eta})(\hat{j})})x_t^{(i)}
\end{aligned}
\end{equation*}

which \emph{does not} depend on $\tilde{X}_{t-1}^{(j)(\eta_j)}$. Finally, we have

\begin{equation*}
\begin{aligned}
&\text{min}_{(f_{\theta},\boldsymbol{\eta})}R_{\mathbf{X},x^{(i)}}[f_\theta,\boldsymbol{\eta}]\\
=&\text{min}_{\boldsymbol{\eta}}\left[R_{\mathbf{X},x^{(i)}}[f_{\theta^*_{\boldsymbol{\eta}}},\boldsymbol{\eta}]\right]\\
=&\text{min}_{\boldsymbol{\eta}}\left[\mathbb{E}_{\mathbf{X}_{t-1},x_t^{(i)},\boldsymbol{\epsilon}}\left[\left(x_t^{(i)}-f_{\theta_{\boldsymbol{\eta}}^*}(\tilde{\mathbf{X}}^{(\boldsymbol{\eta})}_{t-1})\right)^2\right]+\lambda\cdot \sum_{k=1}^{N}I(\tilde{X}^{(k)(\eta_k)}_{t-1};X^{(k)}_{t-1})\right]\\
=&\text{min}_{\boldsymbol{\eta}}\left[\left(\mathbb{E}_{\mathbf{X}_{t-1},x_t^{(i)},\boldsymbol{\epsilon}}\left[\left(x_t^{(i)}-f_{\theta_{\boldsymbol{\eta}}^*}(\tilde{\mathbf{X}}^{(\boldsymbol{\eta})(\hat{j})}_{t-1})\right)^2\right]+\lambda\cdot \sum_{k\neq j}I(\tilde{X}^{(k)(\eta_k)}_{t-1};X^{(k)}_{t-1})\right)+I(\tilde{X}^{(j)(\eta_j)}_{t-1};X^{(j)}_{t-1})\right]\\
\end{aligned}
\end{equation*}

For the last equality, the elements in the parenthesis $(\cdot)$ does not depend on $\tilde{X}_{t-1}^{(j)(\eta_j)}$, and only the $I(\tilde{X}^{(j)(\eta_j)}_{t-1};X^{(j)}_{t-1})$ term depends on $\tilde{X}_{t-1}^{(j)(\eta_j)}$. Therefore, at the minimization of the whole objective $R_{\mathbf{X},x^{(i)}}[f_\theta,\boldsymbol{\eta}]$, we have $I(\tilde{X}^{(j)(\eta_j)}_{t-1};X^{(j)}_{t-1})$ attains its minimum of 0, at which $\eta_j^*\to\infty$. By the definition of $W_{ji}$, we have $W_{ji}=I(\tilde{X}^{(j)(\eta_j^*)}_{t-1};X^{(j)}_{t-1})=0$. Proof completes.

In essence, the proof states that if $x^{(j)}\independent{}x^{(i)}$, then at the minimization of the whole objective, the MSE term does not depend on $X^{(j)}_{t-1}$ or $\tilde{X}^{(j)(\eta_j)}_{t-1}$, and the mutual information term $I(\tilde{X}^{(j)(\eta_j^*)}_{t-1};X^{(j)}_{t-1})$ w.r.t. time series $j$ can be independently minimized and approach 0.

\textbf{(2)} Suppose that we replace $X_{t-1}^{(j)}$ by $X_{t-1}^{'(j)}=a\cdot X_{t-1}^{(j)}+b$ where $a,b\in\mathbb{R}$. Let $\eta_j'=a\cdot \eta_j$. We have $\tilde{X}_{t-1}^{'(j)(\eta'_j)}=X_{t-1}^{'(j)}+\eta_j'\cdot\epsilon_j=a(X_{t-1}^{(j)}+\eta_j\cdot\epsilon_j) + b=a\cdot \tilde{X}_{t-1}^{(j)(\eta_j)}+b$, and therefore $I\left(\tilde{X}_{t-1}^{'(j)(\eta'_j)};X_{t-1}^{'(j)}\right)=I\left(a\cdot \tilde{X}_{t-1}^{(j)(\eta_j)}+b;a\cdot X_{t-1}^{(j)}+b\right)=I\left(\tilde{X}_{t-1}^{(j)(\eta_j)};X_{t-1}^{(j)}\right)$, where the last equality is due to that mutual information is invariant to invertible transformations. Furthermore, due to Assumption \ref{assump:f_theta}, we can find another $f_{\theta'}$ which undoes this affine transformation on $\tilde{X}_{t-1}^{(j)(\eta_j)}$, so the MSE term can be kept the same. Therefore, we have a one-to-one mapping between the original $X_{t-1}^{(j)},\eta_j,f_\theta$ and the new $X_{t-1}^{'(j)},\eta'_j,f_{\theta'}$ such that value of the MSE term and the mutual information term remain unchanged. Thus at the minimization of the objective, $W_{ji}$ remains the same.

\textbf{(3)} This is trivial to prove. We see that in $R_{\mathbf{X},x^{(i)}}[f_\theta,\boldsymbol{\eta}]$, the MSE term remains the same if the mapping $f$ remains the same, regardless of how we parameterize $f$ in terms of parameter $\theta$. The second term does not depend on $f_\theta$. Therefore, at the minimization of $R_{\mathbf{X},x^{(i)}}[f_\theta,\boldsymbol{\eta}]$, the $W_{ji}=I(\tilde{X}^{(j)(\eta_j^*)}_{t-1};X^{(j)}_{t-1})$ is invariant to the reparameterization of the same $f$ in terms of parameter $\theta$. As a direct corollary, $W_{ji}$ is insensitive to the network architecture, as long as the capacity is enough (provided with sufficient number of examples). This is confirmed in Table S\ref{table:synthetic_capacity} in Appendix \ref{app:capacity}. 

Note that L1 and group L1 regularization do not have this property, since they explicitly regularize on the parameter $\theta$.
\end{proof}

\subsection{Analysis of the minimum predictive information-regularized risk}

After proving the three properties of $W_{ji}$, now we analyze why the minimum predictive information-regularized risk is likely to select the variables that directly cause $x_t^{(i)}$, under some additional assumptions. We first state the additional assumption needed to perform the analysis, then we restate the definitions of direct causality to make our statements more rigorous. We then prove two lemmas in Appendix \ref{app:lemma_2}, and finally perform the analysis in Appendix \ref{app:analysis_risk}.

\begin{assumption}
\label{assump:additonal_assumption}
Assume that causal sufficiency \cite{peters2017elements} is satisfied, i.e. the observed time series $x^{(i)},i=1,2,...N$ are all the variables that take part in the dynamics (no hidden confounding variables). Also assume that in the response function Eq. (\ref{eq:response_function}), the noise variable $u_i,i=1,2,...N$ are effective variables, so each $h_i$ is not a deterministic mapping. Assume that by saying ``causality", we mean ``causality in mean".
\end{assumption}

To make our statement of causality more rigorous, here we restate the definition of direct (structural) causality \cite{white2011linking} using our notations of the system Eq.~(\ref{eq:response_function}). This definition is a natural extension to Pearl causality \cite{pearl2009causality} in canonical settable systems \cite{white2009settable,white2011linking}, which formalizes time series in its full generality.

\textbf{Direct (structural) causality} \cite{white2011linking} \textit{
We say $X^{(j)}_{t-1}, j\neq i$ does not directly (structurally) cause $x^{(i)}_{t}$, if for all possible values of $\mathbf{X}_{t-1}^{(\hat{j})}$ and $u_l$, $l\in{1,2,...N}$, the function $X^{(j)}_{t-1}\to h_i(\mathbf{X}_{t-1},u_i)$ is constant in $X^{(j)}_{t-1}$. Otherwise, we say $X^{(j)}_{t-1}$ directly (structurally) causes $x^{(i)}_{t}$.
}

The relationship between direct causality and Granger causality in Section \ref{sec:problem_definition} is the following Lemma, which states that for our system, Granger causality is a sufficient condition for direct (structural) causality.

\begin{lemma}
\label{thm:theorem_0}
Assuming causal sufficiency, for system Eq.~\ref{eq:response_function}, for any $i,j\in\{1,2,...N\},i\neq j$, if $X^{(j)}_{t-1}$ Granger-causes $x^{(i)}_{t}$, then $X^{(j)}_{t-1}$ directly structurally causes $x^{(i)}_t$.
\end{lemma}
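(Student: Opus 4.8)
The plan is to prove the contrapositive: assume that $X^{(j)}_{t-1}$ does \emph{not} directly structurally cause $x^{(i)}_t$, and show that it cannot Granger-cause $x^{(i)}_t$ either. By the definition of direct structural causality, the assumption means that for all values of $\mathbf{X}_{t-1}^{(\hat{j})}$ and all noise realizations $u_l$, $l\in\{1,\dots,N\}$, the map $X^{(j)}_{t-1}\mapsto h_i(\mathbf{X}_{t-1},u_i)$ is constant in $X^{(j)}_{t-1}$. Since by Eq.~(\ref{eq:response_function}) we have $x^{(i)}_t = h_i(\mathbf{X}_{t-1},u_i)$, this says that the value of $x^{(i)}_t$ is completely unaffected by $X^{(j)}_{t-1}$ once $\mathbf{X}_{t-1}^{(\hat{j})}$ and $u_i$ are fixed. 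Concretely, there exists a function $\tilde{h}_i$ such that $x^{(i)}_t = \tilde{h}_i(\mathbf{X}_{t-1}^{(\hat{j})}, u_i)$, i.e. $h_i$ factors through $\mathbf{X}_{t-1}^{(\hat{j})}$ and $u_i$ only.

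The key step is then to translate this deterministic functional independence into the conditional-distribution statement required by the Granger definition, namely $P(x^{(i)}_t\mid X^{(j)}_{t-1},\mathbf{X}_{t-1}^{(\hat{j})}) = P(x^{(i)}_t\mid \mathbf{X}_{t-1}^{(\hat{j})})$. First I would argue that, conditioned on $\mathbf{X}_{t-1}^{(\hat{j})}$, the remaining randomness in $x^{(i)}_t = \tilde{h}_i(\mathbf{X}_{t-1}^{(\hat{j})}, u_i)$ comes solely from $u_i$. The crucial structural input is the independence assumption from the problem setup (Section~\ref{sec:problem_definition}): the noise variables $u_i$ are mutually independent and independent of every $X^{(k)}_{t-1}$, in particular independent of both $X^{(j)}_{t-1}$ and $\mathbf{X}_{t-1}^{(\hat{j})}$. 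Therefore the conditional law of $u_i$ given $(\mathbf{X}_{t-1}^{(\hat{j})}, X^{(j)}_{t-1})$ coincides with its conditional law given $\mathbf{X}_{t-1}^{(\hat{j})}$ alone (both equal its marginal). Pushing this identical noise distribution through the \emph{same} deterministic map $\tilde{h}_i(\mathbf{X}_{t-1}^{(\hat{j})},\cdot)$ — which does not involve $X^{(j)}_{t-1}$ — yields identical conditional distributions for $x^{(i)}_t$ whether or not we additionally condition on $X^{(j)}_{t-1}$. This establishes the Granger-noncausality equality, completing the contrapositive.

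I expect the main obstacle to be making the push-forward argument fully rigorous at the level of densities rather than just at the level of random variables. One must be careful that conditioning on $\mathbf{X}_{t-1}^{(\hat{j})}$ fixes the map $u_i\mapsto \tilde{h}_i(\mathbf{X}_{t-1}^{(\hat{j})},u_i)$, so that the conditional density $P(x^{(i)}_t\mid \mathbf{X}_{t-1}^{(\hat{j})})$ is exactly the push-forward of the $u_i$-density under this fixed map; the assumption in Section~\ref{sec:problem_definition} that the joint density $P(\mathbf{X}_{t-1}^{(\hat{j})}, X^{(j)}_{t-1}, x^{(i)}_t)$ exists guarantees the relevant conditionals are well-defined. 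The only subtlety is that the definition of direct causality quantifies over all $u_l$ ($l\in\{1,\dots,N\}$), not just $u_i$; however, since $x^{(i)}_t$ depends only on $u_i$ through $h_i$, the constancy in $X^{(j)}_{t-1}$ for every fixed $u_i$ is exactly what is needed, and the other noise variables play no role in the conditional distribution of $x^{(i)}_t$. I would handle this by explicitly noting that $h_i$ depends on the noise only through $u_i$, so the factorization $x^{(i)}_t=\tilde{h}_i(\mathbf{X}_{t-1}^{(\hat{j})},u_i)$ holds and the argument goes through cleanly.
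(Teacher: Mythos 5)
Your proof is correct, but it takes a genuinely different route from the paper. The paper does not argue directly: it verifies that the system of Eq.~(\ref{eq:response_function}) is a canonical settable system satisfying Assumptions A.1 and A.2 of White and Lu (in particular, that strict exogeneity $(\mathbf{X}_{t-1},\emptyset)\independent u_{i,t}$ holds), and then invokes their Theorem~5.6 as a black box to conclude that Granger causality implies direct structural causality. You instead prove the contrapositive from scratch: non-causality in the structural sense gives the factorization $x^{(i)}_t=\tilde{h}_i(\mathbf{X}_{t-1}^{(\hat{j})},u_i)$, and the exogeneity of $u_i$ lets you push the (unchanged) noise law through the (unchanged) map to get $P(x^{(i)}_t\mid X^{(j)}_{t-1},\mathbf{X}_{t-1}^{(\hat{j})})=P(x^{(i)}_t\mid \mathbf{X}_{t-1}^{(\hat{j})})$. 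Your argument is self-contained and more elementary, and it makes explicit exactly which structural features are used (the noise enters $h_i$ only through $u_i$, and $u_i$ is independent of the full past jointly --- the same reading of the independence assumption the paper itself adopts when it asserts $(\mathbf{X}_{t-1},\emptyset)\independent u_{i,t}$); the paper's citation-based route buys generality, since the settable-systems machinery covers settings where conditional exogeneity must be assumed rather than derived, and it situates the lemma within an established framework. Both establish the same implication, and both correctly leave open the converse (a failed Granger test need not rule out direct structural causality), which the paper notes separately.
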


\begin{proof}
We base the proof on the Theorem 5.6 in \cite{white2011linking}. Firstly, by definition, the system Eq. (\ref{eq:response_function}) belongs to the canonical settable system (Def. 3.3 in \cite{white2011linking}), on which their Theorem 5.6 is based. To prove that in our system Granger causality can deduce direct structural causality, we only have to prove that the assumption A.1 and assumption A.2 in \cite{white2011linking} are satisfied by our system. If we identify our $x_t^{(i)}$ with their $Y_{1,t}$, our $\mathbf{X}_{t-1}$ with their $\mathbf{Y}_{t-1}$, our $x_t^{(j)}$ with their $Y_{2,t}$, our $u_{i,t}$ (our $u_i$ at time $t$) with their $U_{1,t}$, our $u_{j,t}$ with their $U_{2,t}$, their $\mathbf{Z}_t=\emptyset$, $\mathbf{W}_t=\emptyset$, then our system Eq. (\ref{eq:response_function}) satisfies their Assumption A.1. Additionally, by definition, our $u_i\in R^M, i=1,2,...N$ are random variables that are mutually independent, and also independent of any $X^{(i)}_{t-1}, x^{(i)}_t$, $i\in\{1,2,...N\}$. Therefore, our system satisfies their strict exogeneity $(\mathbf{Y}_{t-1},\mathbf{Z}_t)\independent U_{1,t}$ (in our representation  $(\mathbf{X}_{t-1},\emptyset)\independent u_{i,t}$), which is a sufficient condition for Assumption A.2. Therefore, both their Assumption A.1 and Assumption A.2 are satisfied by our system Eq. (\ref{eq:response_function}). Applying their Theorem 5.6, we prove Lemma \ref{thm:theorem_0}.

\end{proof}

Therefore, for our system Eq.~(\ref{eq:response_function}), applying the results by \cite{white2011linking}, we have that Granger causality is a sufficient condition for direct structural causality. The reason that here Granger causality can deduce direct structural causality is in part due to the fact that for system Eq.~(\ref{eq:response_function}), conditional exogeneity \cite{white2011linking} is automatically satisfied. 

Note that the reverse of the statement is not true, i.e. a failed Granger causality test does not necessarily imply that there is no direct structural causality (White \& Lu \cite{white2010granger} give several examples, and also note that these instances are exceptional). 

After stating Assumption \ref{assump:additonal_assumption} and clarifying the definition of causalities, now we prove two lemmas, which are important for the analysis of our objective.

\subsubsection{Minimum MSE with different variables}
\label{app:lemma_2}

\begin{lemma}
\label{lemma:d_separation_mmse}
Suppose that Assumption 1 and \ref{assump:additonal_assumption} holds, and $X_{t-1}^{(U)}$,$X_{t-1}^{(V)}$, $X_{t-1}^{(W)}\subset \X_{t-1}$ are mutually exclusive sets of variables satisfying
\begin{equation*}
X_{t-1}^{(W)}\independent{} x_t^{(i)}|X_{t-1}^{(U)},X_{t-1}^{(V)},\ \ \ \ \ \ \ 
X_{t-1}^{(V)}\not\!\perp\!\!\!\perp x_t^{(i)}|X_{t-1}^{(U)},X_{t-1}^{(W)}
\end{equation*}
Then
\begin{equation*}
\text{min}_{f_\theta}\mathbb{E}_{X_{t-1}^{(U)},X_{t-1}^{(V)},x_t^{(i)}}\left[\left(x_t^{(i)}-f_\theta(X_{t-1}^{(U)},X_{t-1}^{(V)})\right)^2\right]<\text{min}_{f_\theta}\mathbb{E}_{X_{t-1}^{(U)},X_{t-1}^{(V)},x_t^{(i)}}\left[\left(x_t^{(i)}-f_\theta(X_{t-1}^{(U)},X_{t-1}^{(W)})\right)^2\right]
\end{equation*}
\end{lemma}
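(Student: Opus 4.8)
The plan is to reduce both sides to residual variances of conditional expectations via Lemma~\ref{lemma:argmin_risk}, and then to compare them by inserting a common ``super‑predictor'' that uses all three variable sets at once. By Lemma~\ref{lemma:argmin_risk} together with Assumption~\ref{assump:f_theta}, each minimal MSE is attained at the corresponding conditional expectation, so the left‑hand side equals $\mathbb{E}[(x_t^{(i)}-g_{UV})^2]$ and the right‑hand side equals $\mathbb{E}[(x_t^{(i)}-g_{UW})^2]$, where I write $g_{UV}=\mathbb{E}[x_t^{(i)}\mid X_{t-1}^{(U)},X_{t-1}^{(V)}]$, $g_{UW}=\mathbb{E}[x_t^{(i)}\mid X_{t-1}^{(U)},X_{t-1}^{(W)}]$, and $g_{UVW}=\mathbb{E}[x_t^{(i)}\mid X_{t-1}^{(U)},X_{t-1}^{(V)},X_{t-1}^{(W)}]$. (I read the averaging set on the right as $(X_{t-1}^{(U)},X_{t-1}^{(W)},x_t^{(i)})$, so that the expectation matches the arguments of $f_\theta$.)

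The first key step uses the conditional independence $X_{t-1}^{(W)}\independent{} x_t^{(i)}\mid X_{t-1}^{(U)},X_{t-1}^{(V)}$ to conclude that $g_{UVW}=g_{UV}$ almost surely; hence the minimal MSE attainable from $(U,V,W)$ equals exactly the left‑hand side. The second step is the orthogonal (Pythagorean) decomposition of conditional expectations: since $g_{UVW}-g_{UW}$ is measurable with respect to $(U,V,W)$, and the residual $x_t^{(i)}-g_{UVW}$ is orthogonal to every such function by the tower property, one gets $\mathbb{E}[(x_t^{(i)}-g_{UW})^2]=\mathbb{E}[(x_t^{(i)}-g_{UVW})^2]+\mathbb{E}[(g_{UVW}-g_{UW})^2]$. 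Combining the two steps yields $\text{RHS}-\text{LHS}=\mathbb{E}[(g_{UVW}-g_{UW})^2]\ge 0$, so only strictness remains to be established.

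The strict inequality is where the dependence hypothesis $X_{t-1}^{(V)}\not\perp x_t^{(i)}\mid X_{t-1}^{(U)},X_{t-1}^{(W)}$ and Assumption~\ref{assump:additonal_assumption} (causality ``in mean'') enter. The Pythagorean identity shows the gap vanishes if and only if $g_{UVW}=g_{UW}$ almost surely; but $g_{UW}$ is by construction a function of $(U,W)$ alone, whereas $g_{UVW}=g_{UV}$ genuinely varies with $X_{t-1}^{(V)}$ precisely when $V$ remains relevant in mean after conditioning on $(U,W)$. Under the causality‑in‑mean reading of $\not\perp$, this relevance is exactly the statement that $\mathbb{E}[x_t^{(i)}\mid U,V,W]$ fails to be $(U,W)$‑measurable, so $g_{UVW}\neq g_{UW}$ on a set of positive measure and $\mathbb{E}[(g_{UVW}-g_{UW})^2]>0$, completing the proof.

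The main obstacle is this last step: making rigorous the passage from the dependence symbol $\not\perp$ to the assertion that the two conditional means actually differ on a positive‑measure set. This is exactly why Assumption~\ref{assump:additonal_assumption} restricts attention to ``causality in mean''—without it, $V$ could influence only higher moments of $x_t^{(i)}$ while leaving the conditional mean unchanged, in which case the two minimal MSEs would coincide and the strict inequality would fail. A secondary point requiring care is verifying the orthogonality of the cross term in the Pythagorean decomposition, which follows from $\mathbb{E}[x_t^{(i)}-g_{UVW}\mid U,V,W]=0$ and the tower property.
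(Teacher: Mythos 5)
Your proposal is correct and follows essentially the same route as the paper: reduce both sides to conditional-mean predictors via Lemma~\ref{lemma:argmin_risk}, use $X_{t-1}^{(W)}\independent{}x_t^{(i)}\mid X_{t-1}^{(U)},X_{t-1}^{(V)}$ to identify the $(U,V)$-predictor with the full $(U,V,W)$-predictor, and obtain strictness from Assumption~\ref{assump:additonal_assumption}'s causality-in-mean stipulation, which is exactly where the paper also makes the (equally informal) leap from $\not\perp$ to a difference of conditional means. Your explicit Pythagorean decomposition is just a cleaner packaging of the paper's pointwise uniqueness-of-minimizer argument inside the inner integral, and your reading of the averaging set on the right-hand side matches the paper's evident intent.
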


Fig. S\ref{fig:d_seperation_diagram} below shows the relations between the variables, where the dashed arrows denote the potential existence of causal relations between variables. We see that \emph{conditioned} on $(X_{t-1}^{(U)},X_{t-1}^{(V)})$, we have $x_t^{(i)}$ and $X_{t-1}^{(W)}$ are independent, while \emph{conditioned} on $(X_{t-1}^{(U)},X_{t-1}^{(W)})$, we have $x_t^{(i)}$ and $X_{t-1}^{(V)}$ are \emph{not} independent. Lemma \ref{lemma:d_separation_mmse} states that under the above scenario and under Assumptions \ref{assump:f_theta} and \ref{assump:additonal_assumption}, using $X_{t-1}^{(U)}$ and $X_{t-1}^{(V)}$ to predict $x_t^{(i)}$ can achieve a lower MSE than using $X_{t-1}^{(U)}$ and $X_{t-1}^{(W)}$ to predict $x_t^{(i)}$.

\begin{suppfigure}[h!]
    \centering
    \includegraphics[scale=0.45]{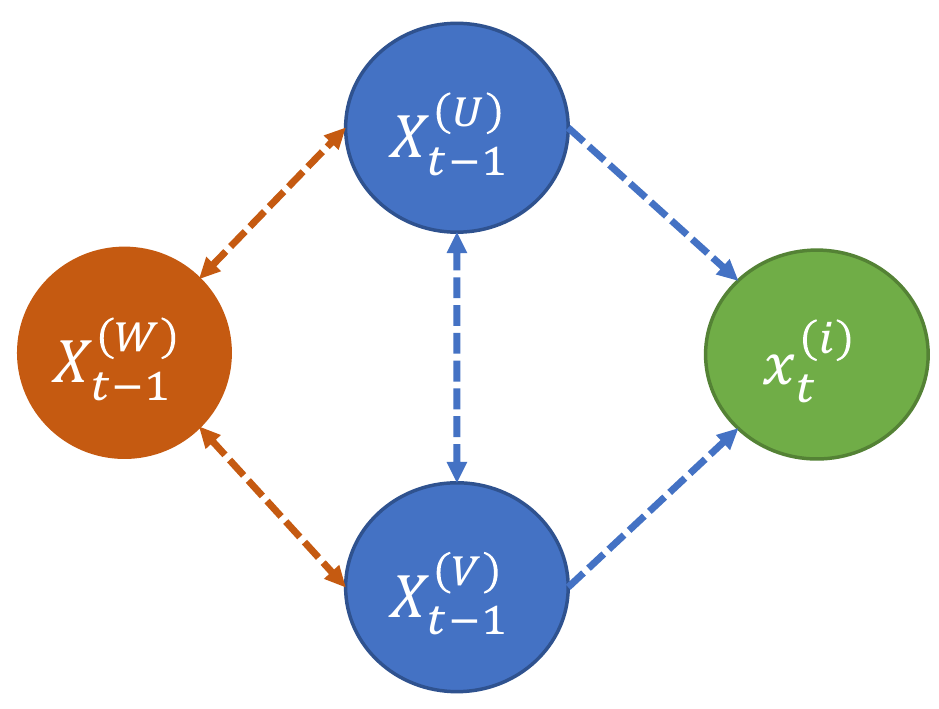}
    \caption{Diagram of variables for Lemma \ref{lemma:d_separation_mmse}. The dashed arrows denote the possible existence of causal relations between variables.}
    \label{fig:d_seperation_diagram}%
\end{suppfigure}

\begin{proof}
Since Assumption \ref{assump:f_theta} holds, according to Lemma \ref{lemma:argmin_risk}, Lemma \ref{lemma:d_separation_mmse} is equivalent to

\begin{equation*}
\begin{aligned}
\mathbb{E}_{X^{(U)}_{t-1},X^{(V)}_{t-1},x_t^{(i)}}&\left[\left(x_t^{(i)}-\int dx^{(i)}_t P(x_t^{(i)}|X^{(U)}_{t-1},X^{(V)}_{t-1})x_t^{(i)}\right)^2\right]\\
&<\mathbb{E}_{X^{(U)}_{t-1},X^{(W)}_{t-1},x_t^{(i)}}\left[\left(x_t^{(i)}-\int dx^{(i)}_t P(x_t^{(i)}|X^{(U)}_{t-1},X^{(W)}_{t-1})x_t^{(i)}\right)^2\right]
\end{aligned}
\end{equation*}
We have
\begin{equation*}
\begin{aligned}
&\mathbb{E}_{X^{(U)}_{t-1},X^{(W)}_{t-1},x_t^{(i)}}\left[\left(x_t^{(i)}-\int dx^{(i)}_t P(x_t^{(i)}|X^{(U)}_{t-1},X^{(W)}_{t-1})x_t^{(i)}\right)^2\right]\\
&=\int dX^{(U)}_{t-1}dX^{(W)}_{t-1}dx_t^{(i)} P(X^{(U)}_{t-1},X^{(W)}_{t-1},x_t^{(i)})\left(x_t^{(i)}-\int dx^{(i)}_t P(x_t^{(i)}|X^{(U)}_{t-1},X^{(W)}_{t-1})x_t^{(i)}\right)^2\\
&=\int dX^{(U)}_{t-1}dX^{(V)}_{t-1}dX^{(W)}_{t-1}dx_t^{(i)} P(X^{(U)}_{t-1},X^{(V)}_{t-1},X^{(W)}_{t-1},x_t^{(i)})\left(x_t^{(i)}-\int dx^{(i)}_t P(x_t^{(i)}|X^{(U)}_{t-1},X^{(W)}_{t-1})x_t^{(i)}\right)^2\\
&=\int dX^{(U)}_{t-1}dX^{(V)}_{t-1}dX^{(W)}_{t-1} P(X^{(U)}_{t-1},X^{(V)}_{t-1})P(X^{(W)}_{t-1}|X^{(U)}_{t-1},X^{(V)}_{t-1})\cdot\\
&\ \ \ \ \ \ \ \ \ \ \ \ \ \ \ \ \ \ \ \ \ \ \ \ \int dx_t^{(i)} P(x_t^{(i)}|X^{(U)}_{t-1},X^{(V)}_{t-1})\left(x_t^{(i)}-\int dx^{(i)}_t P(x_t^{(i)}|X^{(U)}_{t-1},X^{(W)}_{t-1})x_t^{(i)}\right)^2\\
&>\int dX^{(U)}_{t-1}dX^{(V)}_{t-1}dX^{(W)}_{t-1}P(X^{(U)}_{t-1},X^{(V)}_{t-1})P(X^{(W)}_{t-1}|X^{(U)}_{t-1},X^{(V)}_{t-1})\cdot\\
&\ \ \ \ \ \ \ \ \ \ \ \ \ \ \ \ \ \ \ \ \ \ \ \ \int dx_t^{(i)} P(x_t^{(i)}|X^{(U)}_{t-1},X^{(V)}_{t-1})\left(x_t^{(i)}-\int dx^{(i)}_t P(x_t^{(i)}|X^{(U)}_{t-1},X^{(V)}_{t-1})x_t^{(i)}\right)^2\\
&=\int dX^{(U)}_{t-1}dX^{(V)}_{t-1}P(X^{(U)}_{t-1},X^{(V)}_{t-1})\int dx_t^{(i)} P(x_t^{(i)}|X^{(U)}_{t-1},X^{(V)}_{t-1})\left(x_t^{(i)}-\int dx^{(i)}_t P(x_t^{(i)}|X^{(U)}_{t-1},X^{(V)}_{t-1})x_t^{(i)}\right)^2\\
&=\mathbb{E}_{X^{(U)}_{t-1},X^{(V)}_{t-1},x_t^{(i)}}\left[\left(x_t^{(i)}-\int dx^{(i)}_t P(x_t^{(i)}|X^{(U)}_{t-1},X^{(V)}_{t-1})x_t^{(i)}\right)^2\right]
\end{aligned}
\end{equation*}
The third equality (the one before the inequality) is due to that $X_{t-1}^{(W)}\independent{} x_t^{(i)}|X_{t-1}^{(U)},X_{t-1}^{(V)}$, leading to $P(X^{(U)}_{t-1},X^{(V)}_{t-1},X^{(W)}_{t-1},x^{(i)}_{t})=P(X^{(U)}_{t-1},X^{(V)}_{t-1})P(X^{(W)}_{t-1}|X^{(U)}_{t-1},X^{(V)}_{t-1})P(x^{(i)}_{t}|X^{(U)}_{t-1},X^{(V)}_{t-1})$. The inequality step first uses the Assumption \ref{assump:additonal_assumption} that the noise variables $u_i$ are effective arguments of the response functions $h_i$, and that each $h_i$ is ``causality in mean". Therefore, $\int dx^{(i)}_t P(x_t^{(i)}|X^{(U)}_{t-1},X^{(V)}_{t-1})x_t^{(i)}\neq \int dx^{(i)}_t P(x_t^{(i)}|X^{(U)}_{t-1},X^{(W)}_{t-1})x_t^{(i)}$. Using Lemma \ref{lemma:argmin_risk}, we have $f_\theta(X^{(U)}_{t-1},X^{(V)}_{t-1})=\int dx^{(i)}_t P(x_t^{(i)}|X^{(U)}_{t-1},X^{(V)}_{t-1})x_t^{(i)}$ minimizes $\int dx^{(i)}_t P(x_t^{(i)}|X^{(U)}_{t-1},X^{(V)}_{t-1})\left(x_t^{(i)}-f_\theta(X^{(U)}_{t-1},X^{(V)}_{t-1})\right)^2$, hence the inequality.
\end{proof}

Using Lemma \ref{lemma:d_separation_mmse} recursively, we see that using variables that directly causes $x^{(i)}$ to predict $x^{(i)}$ can achieve the lowest MSE. Formalizing the above intuition, we have

\begin{lemma}
\label{thm:d_separation_mmse}
Suppose that Assumption 1 and \ref{assump:additonal_assumption} holds, and $X_{t-1}^{(D)}\subseteq\X_{t-1}$ are the set of variables that directly causes $x_t^{(i)}$. Then $\forall X_{t-1}^{(S)}\subseteq\X_{t-1}$ with $X_{t-1}^{(S)}\neq X_{t-1}^{(D)}$, we have

\begin{equation*}
\text{min}_{f_\theta}\mathbb{E}_{X_{t-1}^{(D)}}\left[\left(x_t^{(i)}-f_\theta(X_{t-1}^{(D)})\right)^2\right]<\text{min}_{f_\theta}\mathbb{E}_{X_{t-1}^{(S)}}\left[\left(x_t^{(i)}-f_\theta(X_{t-1}^{(S)})\right)^2\right]
\end{equation*}

Specifically, we have
\begin{equation*}
\text{min}_{f_\theta}\mathbb{E}_{X_{t-1}^{(D)}}\left[\left(x_t^{(i)}-f_\theta(X_{t-1}^{(D)})\right)^2\right]<\text{min}_{f_\theta}\mathbb{E}_{X_{t-1}^{(\hat{D})}}\left[\left(x_t^{(i)}-f_\theta(X_{t-1}^{(\hat{D})})\right)^2\right]
\end{equation*}

where $X_{t-1}^{(\hat{D})}=\X_{t-1}\textbackslash X_{t-1}^{(D)}$.
\end{lemma}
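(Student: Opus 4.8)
The plan is to reduce the statement to Lemma~\ref{lemma:d_separation_mmse} by a single well-chosen three-way partition of the variables, which makes any recursion unnecessary. Given an arbitrary $X_{t-1}^{(S)}\neq X_{t-1}^{(D)}$, I would set $X_{t-1}^{(U)}=X_{t-1}^{(D)}\cap X_{t-1}^{(S)}$ (the direct causes retained by $S$), $X_{t-1}^{(V)}=X_{t-1}^{(D)}\setminus X_{t-1}^{(S)}$ (the direct causes omitted by $S$), and $X_{t-1}^{(W)}=X_{t-1}^{(S)}\setminus X_{t-1}^{(D)}$ (the non-causes that $S$ adds). These sets are mutually exclusive, with $X_{t-1}^{(D)}=X_{t-1}^{(U)}\cup X_{t-1}^{(V)}$ and $X_{t-1}^{(S)}=X_{t-1}^{(U)}\cup X_{t-1}^{(W)}$, so verifying the two hypotheses of Lemma~\ref{lemma:d_separation_mmse} for this partition immediately produces the desired strict inequality. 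The specialization to $X_{t-1}^{(\hat{D})}=\X_{t-1}\setminus X_{t-1}^{(D)}$ is then merely the case $X_{t-1}^{(U)}=\emptyset$, $X_{t-1}^{(V)}=X_{t-1}^{(D)}$, $X_{t-1}^{(W)}=X_{t-1}^{(\hat D)}$.

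The first hypothesis, $X_{t-1}^{(W)}\independent x_t^{(i)}\mid X_{t-1}^{(U)},X_{t-1}^{(V)}$, I would obtain directly from causal sufficiency and the structure of Eq.~(\ref{eq:response_function}). Since $X_{t-1}^{(D)}$ are by definition the only arguments on which $h_i$ is non-constant, we may write $x_t^{(i)}=\tilde h_i(X_{t-1}^{(D)},u_i)$, and because $u_i$ is independent of all of $\mathbf{X}_{t-1}$ it remains independent of $X_{t-1}^{(W)}$ conditional on $X_{t-1}^{(D)}$. Hence, conditional on $X_{t-1}^{(D)}=(X_{t-1}^{(U)},X_{t-1}^{(V)})$, the target is a function of $X_{t-1}^{(D)}$ and a noise term carrying no information about $X_{t-1}^{(W)}$, which is exactly the required conditional independence.

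The second hypothesis, $X_{t-1}^{(V)}\not\!\perp\!\!\!\perp x_t^{(i)}\mid X_{t-1}^{(U)},X_{t-1}^{(W)}$ (equivalently, conditional on $X_{t-1}^{(S)}$), is the crux and the main obstacle. Here I would invoke Assumption~\ref{assump:additonal_assumption}: because the $u_i$ are effective and causation is taken ``in mean,'' the conditional mean $\mathbb{E}[x_t^{(i)}\mid X_{t-1}^{(D)}]$ depends non-trivially on the omitted direct causes $X_{t-1}^{(V)}$. Using hypothesis~one, $\mathbb{E}[x_t^{(i)}\mid X_{t-1}^{(U)},X_{t-1}^{(V)},X_{t-1}^{(W)}]=\mathbb{E}[x_t^{(i)}\mid X_{t-1}^{(D)}]$, so by the tower property $\mathbb{E}[x_t^{(i)}\mid X_{t-1}^{(S)}]$ is the average of this quantity over $X_{t-1}^{(V)}$ given $X_{t-1}^{(S)}$; the two conditional means therefore differ, and the conditional dependence holds, precisely as long as the dependence of $\mathbb{E}[x_t^{(i)}\mid X_{t-1}^{(D)}]$ on $X_{t-1}^{(V)}$ is not already recoverable from $X_{t-1}^{(S)}$. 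Ruling out this degenerate masking is a faithfulness-type requirement, and is exactly where the effective-noise and causality-in-mean assumptions must do the work; I expect this to be the delicate step. Note also that it forces $X_{t-1}^{(V)}\neq\emptyset$, so the strict inequality has genuine content only when $S$ omits at least one direct cause (for $X_{t-1}^{(D)}\subseteq X_{t-1}^{(S)}$ the two minima coincide). Once both hypotheses are established, a single application of Lemma~\ref{lemma:d_separation_mmse} to $(U,V,W)$ yields $\min_{f_\theta}\mathbb{E}_{X_{t-1}^{(D)}}[(x_t^{(i)}-f_\theta(X_{t-1}^{(D)}))^2]<\min_{f_\theta}\mathbb{E}_{X_{t-1}^{(S)}}[(x_t^{(i)}-f_\theta(X_{t-1}^{(S)}))^2]$, completing the argument.
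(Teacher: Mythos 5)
Your proposal takes essentially the same route as the paper: the identical three-way partition $X_{t-1}^{(U)}=X_{t-1}^{(D)}\cap X_{t-1}^{(S)}$, $X_{t-1}^{(V)}=X_{t-1}^{(D)}\setminus X_{t-1}^{(S)}$, $X_{t-1}^{(W)}=X_{t-1}^{(S)}\setminus X_{t-1}^{(D)}$, followed by a single application of Lemma~\ref{lemma:d_separation_mmse} (no recursion is used in the paper either, despite the sentence preceding the lemma). Your verification of the first hypothesis via the structural form $x_t^{(i)}=\tilde h_i(X_{t-1}^{(D)},u_i)$ is equivalent to the paper's ``not a direct cause $\Rightarrow$ not a Granger cause $\Rightarrow$ conditionally independent'' step. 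The two caveats you raise are genuine and are silently glossed over in the paper's own proof. First, when $X_{t-1}^{(D)}\subsetneq X_{t-1}^{(S)}$ one has $X_{t-1}^{(V)}=\emptyset$, the second hypothesis of Lemma~\ref{lemma:d_separation_mmse} cannot hold, and the two minima coincide rather than being strictly ordered; so the lemma as stated is false for such $S$, and the paper's assertion that ``there does not exist a $X_{t-1}^{(S)}$ such that $X_{t-1}^{(V)}\independent x_t^{(i)}\,|\,X_{t-1}^{(U)},X_{t-1}^{(W)}$'' fails in exactly this case. Second, for nonempty $X_{t-1}^{(V)}$ the paper simply declares that conditional independence ``would violate direct causality,'' whereas --- as the paper itself concedes after Lemma~\ref{thm:theorem_0} --- direct structural causality does not in general imply Granger (conditional-mean) dependence; your observation that a faithfulness-type non-masking condition is needed here is correct and is the honest statement of what Assumption~\ref{assump:additonal_assumption} is being asked to deliver.
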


\begin{proof}
For any $X^{(S)}_{t-1}$, let $X^{(U)}_{t-1}=X^{(D)}_{t-1}\cap X^{(S)}_{t-1}$, $X^{(V)}_{t-1}=X^{(D)}_{t-1}\textbackslash X^{(S)}_{t-1}$, $X^{(W)}_{t-1}=X^{(S)}_{t-1}\textbackslash X^{(D)}_{t-1}$. Then $X^{(U)}_{t-1}$, $X^{(V)}_{t-1}$, $X^{(W)}_{t-1}$ are mutually exclusive, and $X^{(D)}_{t-1}=X^{(U)}_{t-1}\cup X^{(V)}_{t-1}$, $X^{(S)}_{t-1}=X^{(U)}_{t-1}\cup X^{(W)}_{t-1}$. Now we prove that $\forall X_{t-1}^{(S)}\subseteq\X_{t-1}$ with $X_{t-1}^{(S)}\neq X_{t-1}^{(D)}$, the corresponding $X^{(U)}_{t-1}$, $X^{(V)}_{t-1}$, $X^{(W)}_{t-1}$, $x_t^{(i)}$ satisfy the condition for Lemma \ref{lemma:d_separation_mmse}.
Since $X^{(D)}_{t-1}$ are the set of variables that directly causes $x_t^{(i)}$, there does not exist a $X^{(S)}_{t-1}$ such that the corresponding $X_{t-1}^{(V)}\independent{} x_t^{(i)}|X_{t-1}^{(U)},X_{t-1}^{(W)}$ (otherwise it violates the direct causality). Thus $X_{t-1}^{(V)}\not\!\perp\!\!\!\perp x_t^{(i)}|X_{t-1}^{(U)},X_{t-1}^{(W)}$. To prove $X_{t-1}^{(W)}\independent{} x_t^{(i)}|X_{t-1}^{(U)},X_{t-1}^{(V)}$, note that $X^{(W)}_{t-1}$ does not directly cause $x_t^{(i)}$, then  $X^{(W)}_{t-1}$ does not Granger-cause $x_t^{(i)}$, i.e. $P(x_t^{(i)}|X_{t-1}^{(U)},X_{t-1}^{(V)})=P(x_t^{(i)}|X_{t-1}^{(U)},X_{t-1}^{(V)},X_{t-1}^{(W)})$, which is equivalent to $X_{t-1}^{(W)}\independent{} x_t^{(i)}|X_{t-1}^{(U)},X_{t-1}^{(V)}$. The special case of $X_{t-1}^{(\hat{D})}$ follows directly that $X_{t-1}^{(\hat{D})}=\X_{t-1}\textbackslash X_{t-1}^{(D)}\neq X_{t-1}^{(D)}$ and letting $X_{t-1}^{(S)}=X_{t-1}^{(\hat{D})}$.
\end{proof}

\subsubsection{Qualitative and quantitative behaviors of the mutual information-regularized risk}
\label{app:analysis_risk}

In this section, we analyze the qualitative and quantitative behaviors of the mutual information-regularized risk (Eq. \ref{eq:learnable_risk}), with varying noise levels $\eta_j$. For each variable $X_{t-1}^{(j)}\in\X_{t-1}$, $j=1,2,...N$, define $\rho_{j}=\text{tanh}\left(I(X_{t-1}^{(j)};\tilde{X}_{t-1}^{(j)(\eta_j)})\right)\in[0,1]$ as a ``rescaled" mutual information between $X_{t-1}^{(j)}$ and $\tilde{X}_{t-1}^{(j)(\eta_j)}$. When $\eta_j=\mathbf{0}$ so that $\tilde{X}_{t-1}^{(j)(\eta_j)}=X_{t-1}^{(j)}$, $\rho_j=1$, at which the input $X_{t-1}^{(j)}$ is fully preserved. When all elements of $\eta_j\to\infty$, $\rho_j=0$, at which $X_{t-1}^{(j)}$ is fully corrupted. Denoting $\boldsymbol{\rho}=(\rho_1,\rho_2,...\rho_N)$, we can then rewrite the mutual information-regularized risk (Eq. \ref{eq:learnable_risk}) as

\begin{equation}
\label{eq:learnable_risk_rho}
R_{\mathbf{X},x^{(i)}}[f_\theta,\boldsymbol{\rho}]=\text{MMSE}^{(i)}(\boldsymbol{\rho})+\lambda\cdot \sum_{j=1}^{N}\text{arctanh}(\rho_{j})
\end{equation}

where $\text{MMSE}^{(i)}(\boldsymbol{\rho})=\min_{\boldsymbol{\eta},f_\theta}\mathbb{E}_{\mathbf{X}_{t-1},x_t^{(i)},\boldsymbol{\epsilon}}\left[\left(x_t^{(i)}-f_\theta(\tilde{\mathbf{X}}^{(\boldsymbol{\eta})}_{t-1})\right)^2\right]$ subject to $\rho_{j}=\text{tanh}\left(I(X_{t-1}^{(j)};\tilde{X}_{t-1}^{(j)(\eta_j)}\right), j=1,2,...N$. Let $X_{t-1}^{(D)}\subseteq\X_{t-1}$ be the set of variables that directly causes $x_t^{(i)}$, and denote the corresponding set of $\rho_j$ as $\boldsymbol{\rho}^{(D)}$. Denote $X_{t-1}^{(\hat{D})}=\X_{t-1}\textbackslash X_{t-1}^{(D)}$ and the corresponding set of $\rho_j$ as $\boldsymbol{\rho}^{(\hat{D})}$. For any $i=1,2,...N$, it is easy to see that $\text{MMSE}^{(i)}(\boldsymbol{\rho})$ has the following properties:

\begin{enumerate}
\item $\text{MMSE}^{(i)}(\boldsymbol{\rho})$ attains maximum at $\boldsymbol{\rho}=\mathbf{0}$.
\item $\text{MMSE}^{(i)}(\boldsymbol{\rho})$ is monotonically decreasing w.r.t. each $\rho_j$.
\item $\text{MMSE}^{(i)}(\boldsymbol{\rho})\big|_{\boldsymbol{\rho}^{(D)}=\mathbf{1},\boldsymbol{\rho}^{(\hat{D})}=\mathbf{0}}<\text{MMSE}^{(i)}(\boldsymbol{\rho})\big|_{\boldsymbol{\rho}^{(D)}=\mathbf{0},\boldsymbol{\rho}^{(\hat{D})}=\mathbf{1}}$ (using Lemma \ref{thm:d_separation_mmse}).
\item $\text{MMSE}^{(i)}(\boldsymbol{\rho})$ attains minimum at $\boldsymbol{\rho}^{(D)}=\mathbf{1}$. $\text{MMSE}^{(i)}(\boldsymbol{\rho})\big|_{\boldsymbol{\rho}^{(D)}=\mathbf{1}}$ is constant w.r.t. $\boldsymbol{\rho}^{(\hat{D})}$.
\end{enumerate}

To get a better intuition of the landscape of $R_{\mathbf{X},x^{(i)}}[f_\theta,\boldsymbol{\rho}]$, let's investigate a simple example. Let the response function be:
\begin{equation}
\label{eq:response_function_app}
    \begin{cases}
      x^{(1)}_t:=h_1(u_1)=\sqrt{\Sigma_x}\cdot u_1 \\
      x^{(2)}_t:=h_2(x^{(1)}_{t-1},u_2)=x^{(1)}_{t-1}+\sqrt{\Omega_x}\cdot u_2 \\
      x^{(3)}_t:=h_3(x^{(2)}_{t-1},u_3)=x^{(2)}_{t-1}+\sqrt{\Omega_y}\cdot u_3
    \end{cases}
  \end{equation}
where $u_1,u_2,u_3$ are independent unit Gaussian variables, and $\X_{t-1}=(X^{(1)}_{t-1},X^{(2)}_{t-1},X^{(3)}_{t-1})=\left((x_{t-2}^{(1)},x_{t-1}^{(1)}),(x_{t-2}^{(2)},x_{t-1}^{(2)}),(x_{t-2}^{(3)},x_{t-1}^{(3)})\right)$. For $R_{\mathbf{X},x^{(3)}}[f_\theta,\boldsymbol{\rho}]=\text{MMSE}^{(3)}(\boldsymbol{\rho})+\lambda\cdot \sum_{j=1}^{3}\text{arctanh}(\rho_{j})$, since only $x_{t-2}^{(1)}$ and $x_{t-1}^{(2)}$ are d-connected to $x_t^{(3)}$, at the minimization of $R_{\mathbf{X},x^{(3)}}[f_\theta,\boldsymbol{\rho}]$, only $x_{t-2}^{(1)}$ and $x_{t-1}^{(2)}$ may have a finite $\eta_{j,l}^*$ (the other $\eta_{j,l}^*$ are all infinite). Therefore, setting the $\eta_{j,l}$ not corresponding to $x_{t-2}^{(1)}$ and $x_{t-1}^{(2)}$ as infinity, and let $\tilde{x}_{t-2}^{(1)}=x_{t-2}^{(1)}+\eta_x\cdot\epsilon_x$, $\tilde{x}_{t-1}^{(2)}=x_{t-1}^{(2)}+\eta_y\cdot\epsilon_y$, $\epsilon_x$ and $\epsilon_y$ being independent unit Gaussian variables. Let $f_\theta (x_{t-2}^{(1)},x_{t-1}^{(2)})=a\cdot x_{t-2}^{(1)} + b\cdot x_{t-1}^{(2)}$, then we can get an analytic expression for $R_{\mathbf{X},x^{(3)}}[f_\theta,\eta_x,\eta_y]$:

\begin{equation*}
\begin{aligned}
&R_{\mathbf{X},x^{(3)}}[f_\theta,\eta_x,\eta_y]\\
&=a^2\Sigma_x+(b-1)^2(\Sigma_x+\Omega_x)+a^2\eta_x^2+b^2\eta_y^2+2a(b-1)\Sigma_x+\Omega_y+\frac{\lambda}{2}\text{log}\left(1+\frac{\Sigma_x}{\eta_x^2}\right)+\frac{\lambda}{2}\text{log}\left(1+\frac{\Sigma_x+\Omega_x}{\eta_y^2}\right)
\end{aligned}
\end{equation*}

Minimizing $R_{\mathbf{X},x^{(3)}}[f_\theta,\eta_x,\eta_y]$ w.r.t. $a$ and $b$, we get

\begin{equation*}
\begin{aligned}
a^*&= \frac{\eta_y^2\Sigma_x}{\eta_x^2\eta_y^2+\eta_x^2\Sigma_x+\eta_y^2\Sigma_x+\eta_x^2\Omega_x+\Omega_x\Sigma_x}\\
b^*&= \frac{\eta_x^2(\Sigma_x+\Omega_x)+\Sigma_x\Omega_x}{\eta_x^2\eta_y^2+\eta_x^2\Sigma_x+\eta_y^2\Sigma_x+\eta_x^2\Omega_x+\Omega_x\Sigma_x}
\end{aligned}
\end{equation*}

Substituting into $R_{\mathbf{X},x^{(3)}}[f_\theta,\eta_x,\eta_y]$, we have
\begin{equation*}
\begin{aligned}
&R_{\mathbf{X},x^{(3)}}[\eta_x,\eta_y]\\
&=\min_{f_\theta}R_{\mathbf{X},x^{(3)}}[f_\theta,\eta_x,\eta_y]\\
&=\frac{\eta_y^2(\Sigma_x\Omega_x+\eta_x^2(\Sigma_x+\Omega_x))}{\eta_x^2\eta_y^2+\eta_x^2\Sigma_x+\eta_y^2\Sigma_x+\eta_x^2\Omega_x+\Omega_x\Sigma_x}+\frac{\lambda}{2}\text{log}\left(1+\frac{\Sigma_x}{\eta_x^2}\right)+\frac{\lambda}{2}\text{log}\left(1+\frac{\Sigma_x+\Omega_x}{\eta_y^2}\right)
\end{aligned}
\end{equation*}

Here we have neglected the constant $\Omega_y$. To obtain $R_{\mathbf{X},x^{(3)}}[\boldsymbol{\rho}]$, let $\rho_1=\text{tanh}\left(\frac{1}{2}\text{log}\left(1+\frac{\Sigma_x}{\eta_x^2}\right)\right)$, $\rho_2=\text{tanh}\left(\frac{1}{2}\text{log}\left(1+\frac{\Sigma_x+\Omega_x}{\eta_x^2}\right)\right)$, we have $\eta_x^2=\frac{1-\rho_1}{2\rho_1}\Sigma_x$, $\eta_y^2=\frac{1-\rho_2}{2\rho_2}(\Sigma_x+\Omega_x)$. Substituting, we have

\begin{equation*}
\begin{aligned}
&R_{\mathbf{X},x^{(3)}}[\boldsymbol{\rho}]=\text{MMSE}^{(3)}(\boldsymbol{\rho})+\lambda\cdot \sum_{j=1}^{2}\text{arctanh}(\rho_{j})\\
&=\frac{(\rho_2-1)(\Sigma_x+\Omega_x)((\rho_1-1)\Sigma_x-(\rho_1+1)\Omega_x)}{(1+\rho_1+\rho_2-3\rho_1\rho_2)\Sigma_x+(1+\rho_1)(1+\rho_2)\Omega_x}+\lambda\cdot \text{arctanh}(\rho_1)+\lambda\cdot\text{arctanh}(\rho_2)
\end{aligned}
\end{equation*}

Fig. S\ref{fig:risk} shows the landscape of $\text{MMSE}^{(3)}(\boldsymbol{\rho})$ and $R_{\mathbf{X},x^{(3)}}[\boldsymbol{\rho}]$, for $\Sigma_x=1,\Omega_x=2,\lambda=1$. We see that $\text{MMSE}^{(3)}(\boldsymbol{\rho})$ satisfies the above mentioned four properties. Particularly, $\text{MMSE}^{(3)}(\boldsymbol{\rho})\big|_{\rho_1=1,\rho_2=0}>\text{MMSE}^{(3)}(\boldsymbol{\rho})\big|_{\rho_1=0,\rho_2=1}$. After adding $\lambda\cdot \text{arctanh}(\rho_1)+\lambda\cdot\text{arctanh}(\rho_2)$, the $R_{\mathbf{X},x^{(3)}}[\boldsymbol{\rho}]$ has global minimum along $\rho_1=0$ largely due to this property. Therefore, for this particular example, when $R_{\mathbf{X},x^{(3)}}[\boldsymbol{\rho}]$ is minimized, $\rho_1=0$, i.e. $I(x_{t-2}^{(1)},\tilde{x}_{t-2}^{(1)(\eta_1^*)})=0$.

\begin{suppfigure}
    \centering
    \begin{subfigure}{.45\linewidth}
    \includegraphics[scale=0.45]{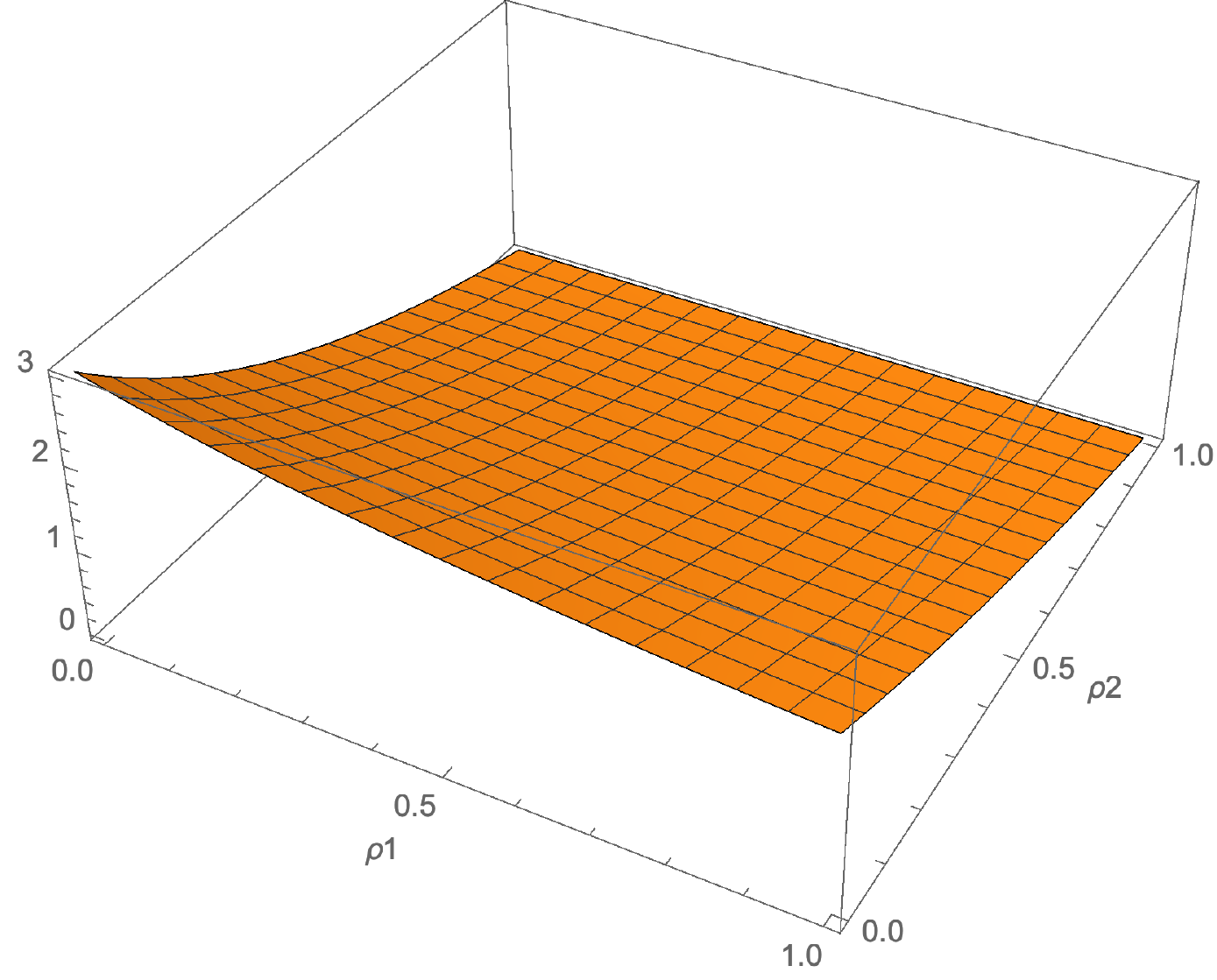}
    \caption{}
    \end{subfigure}
    \begin{subfigure}{.45\linewidth}
    \includegraphics[scale=0.45]{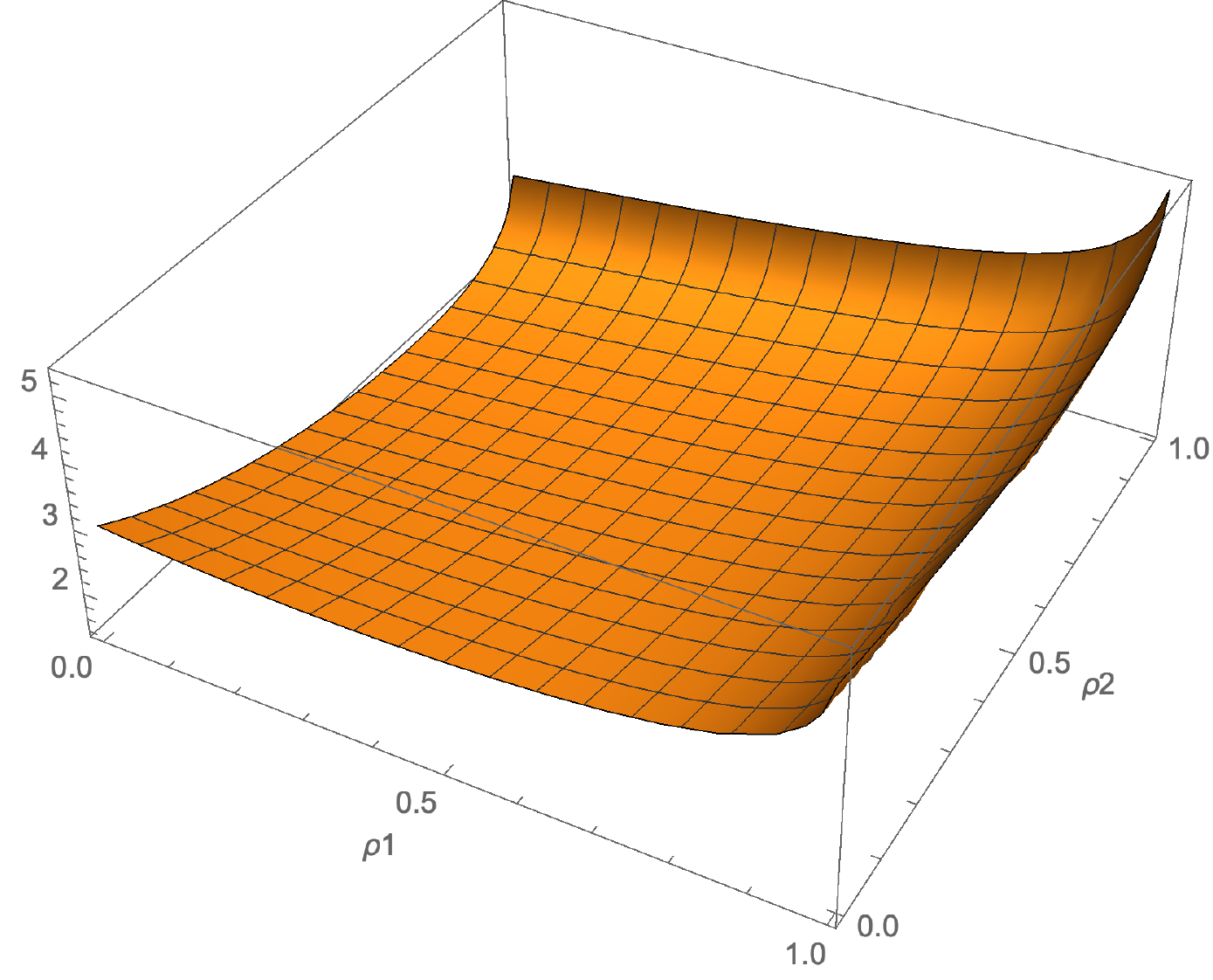}
    \caption{}
    \end{subfigure}
    \caption{(a) $\text{MMSE}^{(3)}(\boldsymbol{\rho})$ and (b) $R_{\mathbf{X},x^{(3)}}[\boldsymbol{\rho}]$ in section \ref{app:analysis_risk}, for $\Sigma_x=1,\Omega_x=2,\lambda=1$.}%
    \label{fig:risk}%
\end{suppfigure}

By varying the value of $\lambda$, we can tune the relative influence of the two terms $\text{MMSE}^{(3)}(\boldsymbol{\rho})$ and $\sum_{j=1}^{2}\text{arctanh}(\rho_{j})$. The landscape corresponding to $\lambda=0.01,0.5,2,10$ are plotted in Fig. S\ref{fig:risk_tune}. We see that when $\lambda\ll1$, the MMSE term dominates, and it is possible that the global minimum of $R_{\mathbf{X},x^{(3)}}[\boldsymbol{\rho}]$ is not at $\rho_1=0$. This is similar to the effect of a L1 regularization, where if the coefficient $\lambda$ for the L1 is vanishingly small, the L1 regularization will barely influence the loss landscape. When $\lambda$ is not vanishingly small, as in Fig. S\ref{fig:risk_tune} (b), we see that the global minimum of $R_{\mathbf{X},x^{(3)}}[\boldsymbol{\rho}]$ lies on $\rho_1=0$. When $\lambda\to+\infty$, the $\sum_{j=1}^{2}\text{arctanh}(\rho_{j})$ term dominates and the global minimum is at $\rho_1=0,\rho_2=0$.

In general, we expect $R_{\mathbf{X},x^{(i)}}[\boldsymbol{\rho}]$ behave qualitatively similar. When $\lambda\to+\infty$, the global minimum for $R_{\mathbf{X},x^{(i)}}[\boldsymbol{\rho}]$ is at $\boldsymbol{\rho}^*=\mathbf{0}$. As we ramp down $\lambda$, the dimension that has largest influence on MMSE will first host the global minimum with nonzero $\rho_j^*$, which is most likely the variable that directly causes $x_i^{(i)}$. When $\lambda$ is further ramping down, we expect that the variables that host the global minimum with nonzero $\rho_j$ will more likely be those that directly causes $x_i^{(i)}$, due to the landscape influenced by the four properties of MMSE. This can justify the mutual information-regularized risk as a good objective for causal discovery/variable selection. The experiments in the paper will empirically test the performance of the mutual information-regularized risk.

\begin{suppfigure}
    \centering
    \begin{subfigure}{.45\linewidth}
    \includegraphics[scale=0.45]{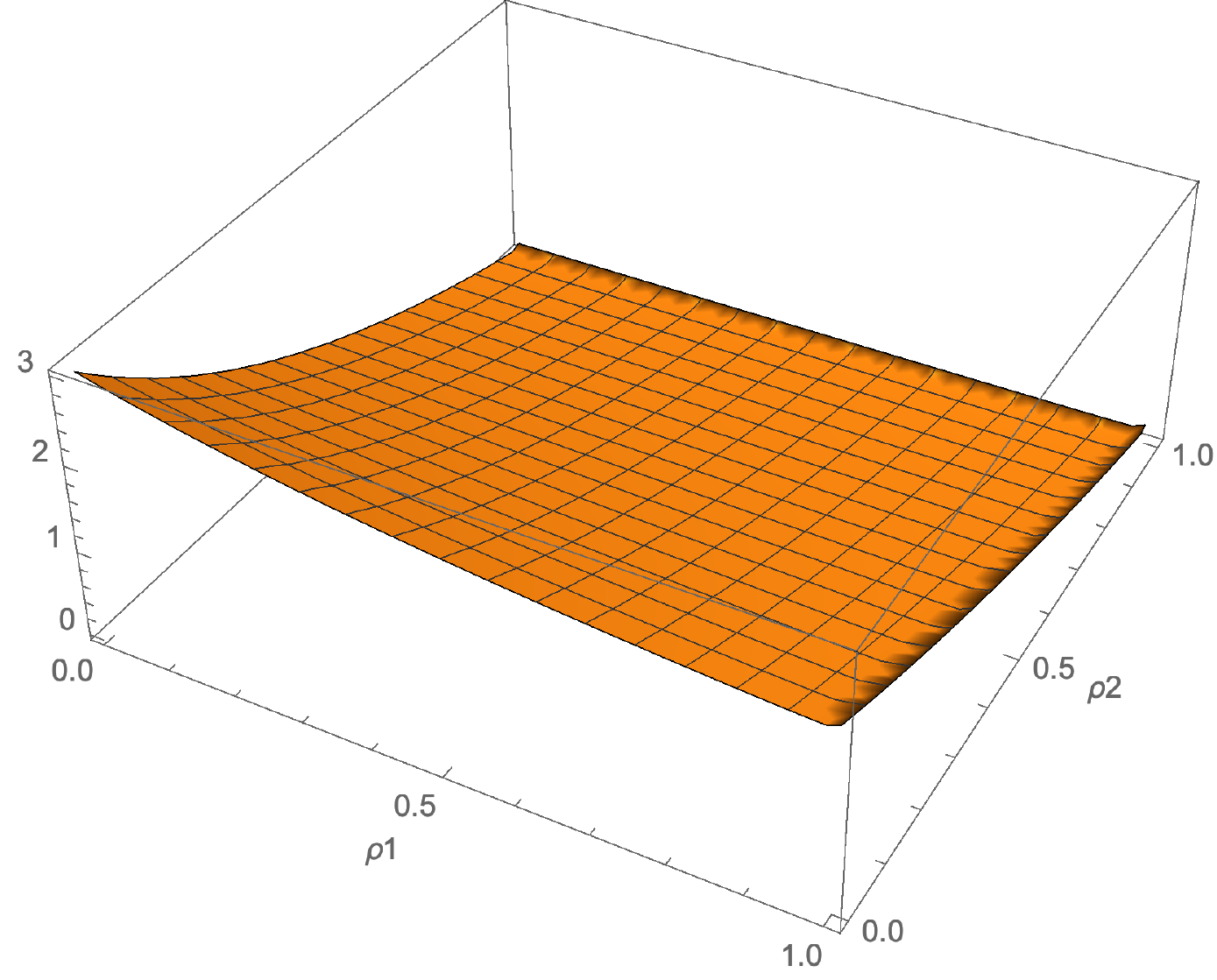}
    \caption{}
    \end{subfigure}
    \begin{subfigure}{.45\linewidth}
    \includegraphics[scale=0.45]{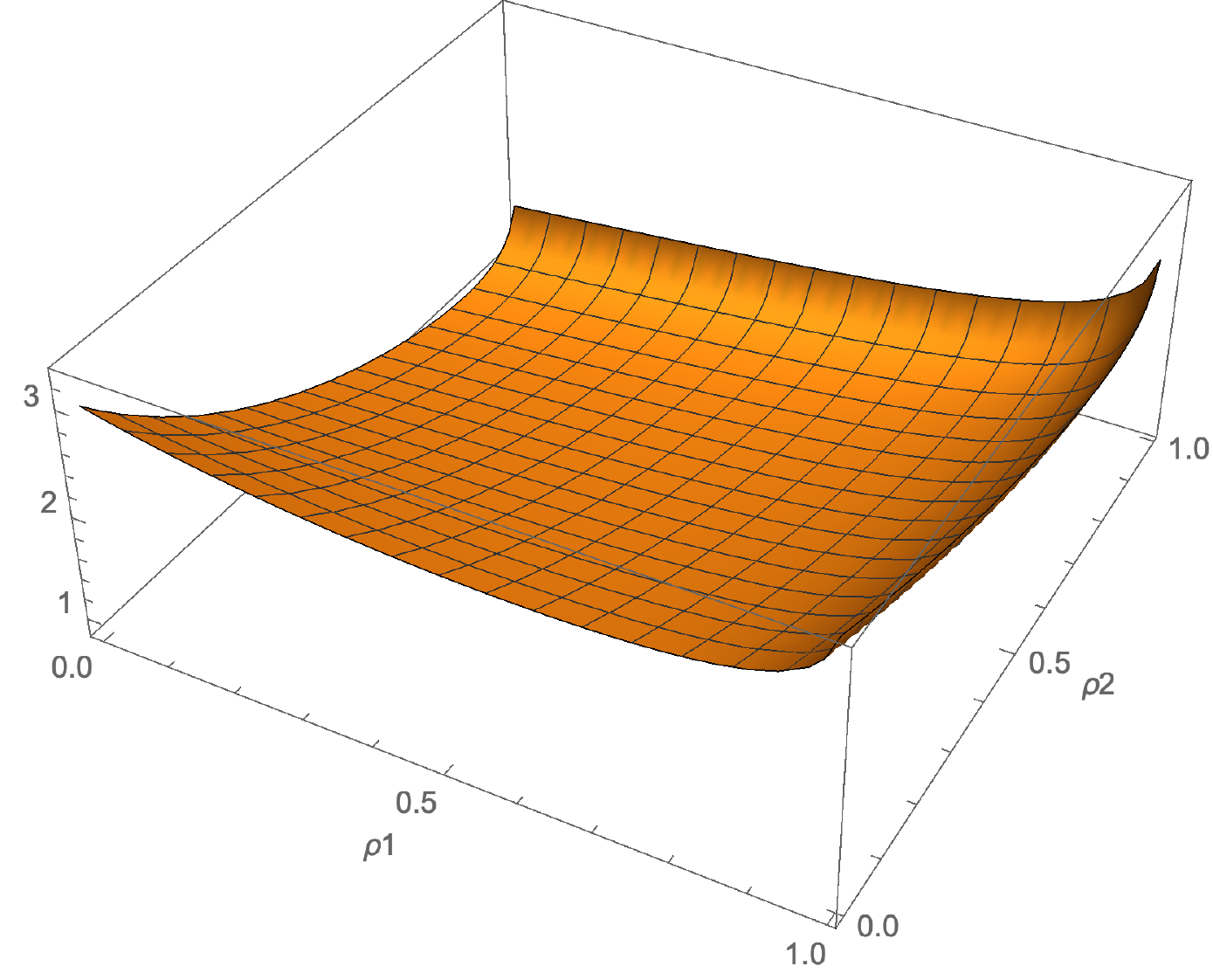}
    \caption{}
    \end{subfigure}
    \begin{subfigure}{.45\linewidth}
    \includegraphics[scale=0.45]{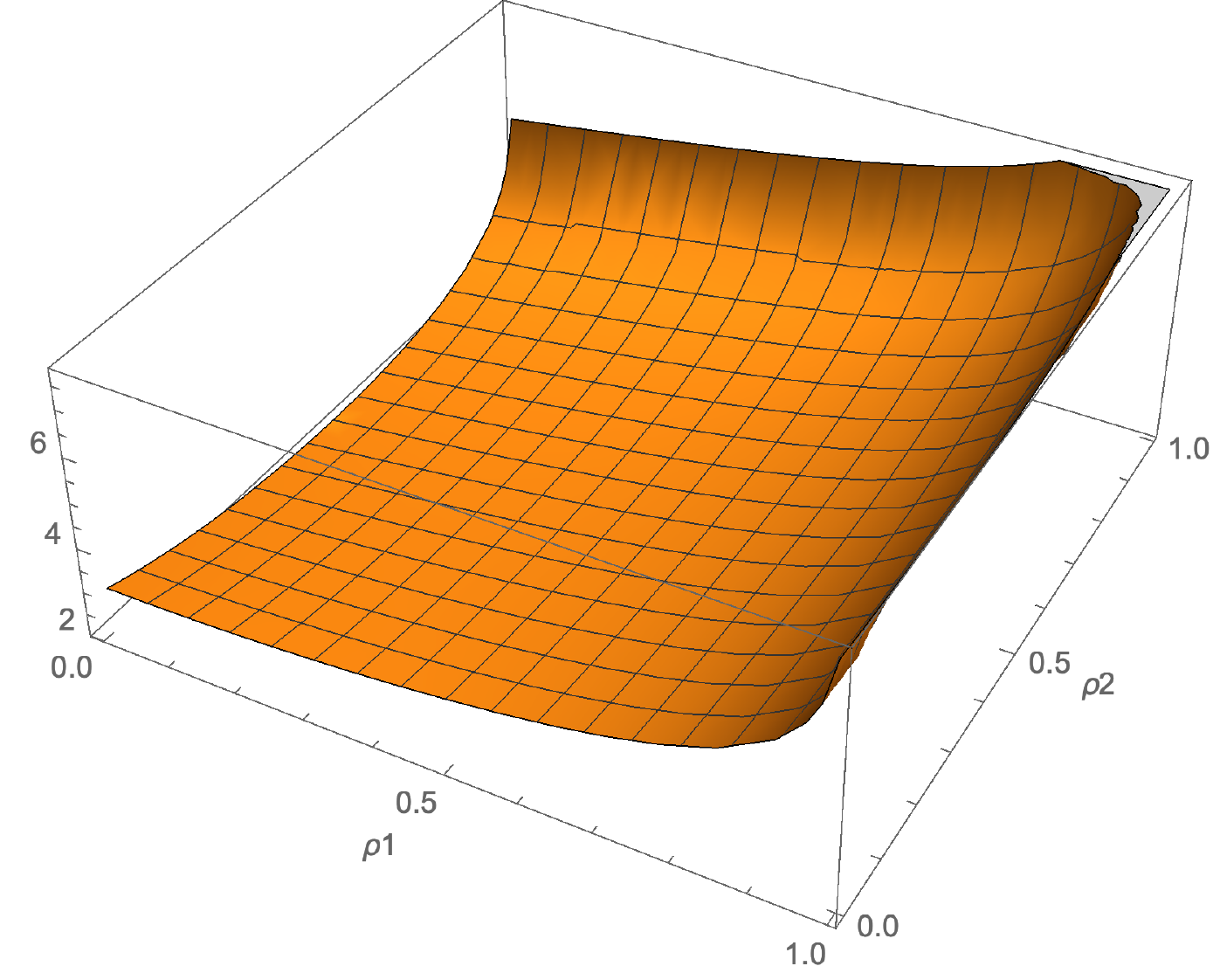}
    \caption{}
    \end{subfigure}
    \begin{subfigure}{.45\linewidth}
    \includegraphics[scale=0.45]{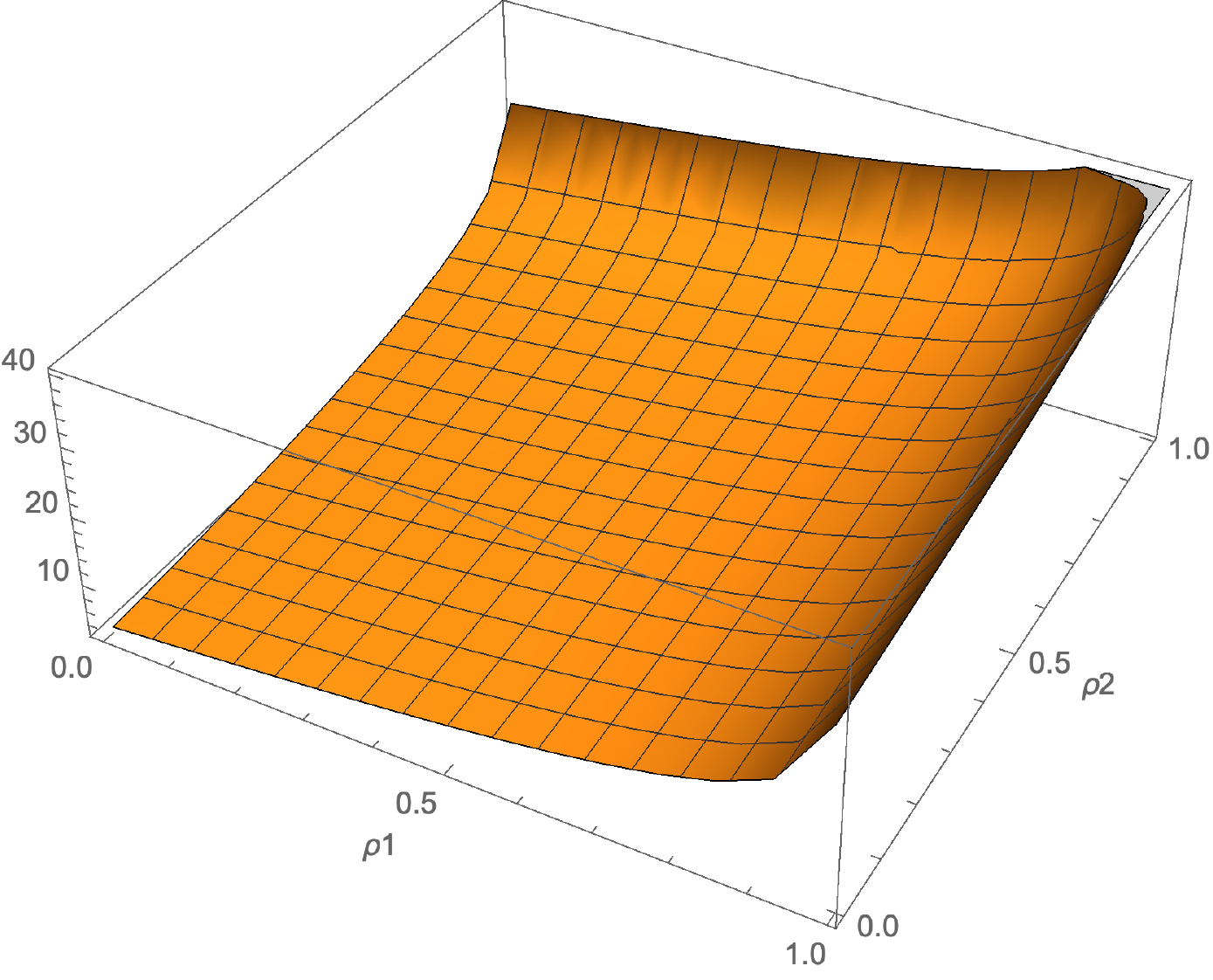}
    \caption{}
    \end{subfigure}
    \caption{(a) $R_{\mathbf{X},x^{(3)}}[\boldsymbol{\rho}]$ for (a) $\lambda=0.01$, (b) $\lambda=0.5$, (c) $\lambda=2$ and (d) $\lambda=10$ in section \ref{app:analysis_risk}, for $\Sigma_x=1,\Omega_x=2$.}%
    \label{fig:risk_tune}%
\end{suppfigure}

\section{Upper bound for the mutual information-regularized risk}
\label{app:Gaussian_channel_upper_bound}

In this section, we prove that $I(\tilde{X}^{(j)(\eta_j)}_{t-1};X^{(j)}_{t-1})\leq \frac{1}{2}\sum_{l=1}^{KM}\text{log}\left(1+\frac{\text{Var}(X_{t-1,l}^{(j)})}{\eta_{j,l}^2}\right)$. We formally state the theorem as follows:

\begin{theorem}
Let $\tilde{X}_{t-1}^{(j)(\eta_j)}:=X_{t-1}^{(j)}+\eta_j\cdot \epsilon_j$, $j=1,2,...N$ be the noise-corrupted inputs with \emph{learnable} noise amplitudes $\eta_j\in \R^{KM}$, and $\epsilon_j\sim N(\mathbf{0},\mathbf{I})$. We have

\begin{equation}
I(\tilde{X}^{(j)(\eta_j)}_{t-1};X^{(j)}_{t-1})\leq \frac{1}{2}\sum_{l=1}^{KM}\text{log}\left(1+\frac{\text{Var}(X_{t-1,l}^{(j)})}{\eta_{j,l}^2}\right)
\end{equation}

where $l$ is the $l^{\text{th}}$ element of a vector, $\text{std}(X_{t-1,l}^{(j)})$ is the standard deviation of $X_{t-1,l}^{(j)}$ across $t$. The equality is reached when $X_{t-1}^{(j)}$ obeys a multivariate Gaussian distribution with diagonal covariance matrix $\Sigma$ satisfying $\Sigma_{l,l}=\text{Var}(X^{(j)}_{t-1,l})+\eta_{j,l}^2$.
\end{theorem}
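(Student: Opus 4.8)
The plan is to recognize this as the standard capacity bound for a Gaussian additive-noise channel: the map $X^{(j)}_{t-1}\mapsto\tilde{X}^{(j)(\eta_j)}_{t-1}=X^{(j)}_{t-1}+\eta_j\odot\epsilon_j$ is an additive channel whose noise is independent of the input and Gaussian with per-coordinate variance $\eta_{j,l}^2$. First I would write the mutual information as a difference of differential entropies (denoted $h(\cdot)$),
\[
I(\tilde{X}^{(j)(\eta_j)}_{t-1};X^{(j)}_{t-1})=h(\tilde{X}^{(j)(\eta_j)}_{t-1})-h(\tilde{X}^{(j)(\eta_j)}_{t-1}\mid X^{(j)}_{t-1}),
\]
which is well defined since the paper assumes the relevant densities exist. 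Because $\epsilon_j\independent X^{(j)}_{t-1}$, conditioning on $X^{(j)}_{t-1}$ leaves only the noise term, so the second entropy is computed \emph{exactly}, with no inequality: $h(\tilde{X}^{(j)(\eta_j)}_{t-1}\mid X^{(j)}_{t-1})=h(\eta_j\odot\epsilon_j)=\frac{1}{2}\sum_{l=1}^{KM}\text{log}\left(2\pi e\,\eta_{j,l}^2\right)$.

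All the real work is in upper-bounding the first term $h(\tilde{X}^{(j)(\eta_j)}_{t-1})$, for which I would invoke two classical facts. (i) Among all random vectors with a fixed covariance $\Sigma_{\tilde{X}}$, the Gaussian maximizes differential entropy, giving $h(\tilde{X}^{(j)(\eta_j)}_{t-1})\le\frac{1}{2}\text{log}\left((2\pi e)^{KM}\det\Sigma_{\tilde{X}}\right)$. (ii) By independence of the additive noise, $\Sigma_{\tilde{X}}=\Sigma_X+\text{diag}(\eta_{j,1}^2,\ldots,\eta_{j,KM}^2)$, whose $l$-th diagonal entry equals $\text{Var}(X^{(j)}_{t-1,l})+\eta_{j,l}^2$; then Hadamard's inequality $\det\Sigma_{\tilde{X}}\le\prod_{l=1}^{KM}(\Sigma_{\tilde{X}})_{ll}$ bounds the determinant by the product of these diagonal entries. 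Subtracting the exact conditional entropy, the $(2\pi e)^{KM}$ factors and the $\prod_l\eta_{j,l}^2$ cancel, and I obtain
\[
I(\tilde{X}^{(j)(\eta_j)}_{t-1};X^{(j)}_{t-1})\le\frac{1}{2}\sum_{l=1}^{KM}\text{log}\left(\frac{\text{Var}(X^{(j)}_{t-1,l})+\eta_{j,l}^2}{\eta_{j,l}^2}\right)=\frac{1}{2}\sum_{l=1}^{KM}\text{log}\left(1+\frac{\text{Var}(X^{(j)}_{t-1,l})}{\eta_{j,l}^2}\right),
\]
as claimed.

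For the equality condition, I would track when each inequality is tight. The maximum-entropy bound (i) is an equality precisely when $\tilde{X}^{(j)(\eta_j)}_{t-1}$ is Gaussian, which — since the additive noise is already Gaussian and independent — holds iff $X^{(j)}_{t-1}$ is Gaussian. Hadamard's inequality (ii) is tight precisely when $\Sigma_{\tilde{X}}$ is diagonal, i.e. when $\Sigma_X$ is diagonal. Both hold simultaneously exactly when $X^{(j)}_{t-1}$ is multivariate Gaussian with diagonal covariance, in which case $\Sigma_{\tilde{X}}=\Sigma$ with $\Sigma_{l,l}=\text{Var}(X^{(j)}_{t-1,l})+\eta_{j,l}^2$, matching the stated equality case. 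I do not expect a genuine obstacle here; the derivation is routine once the two classical facts are in place. The only points needing care are (a) identifying the $\text{Var}(X^{(j)}_{t-1,l})$ in the statement with the $l$-th diagonal entry of $\Sigma_X$ under the stationary distribution over $t$, and (b) checking that the equality conditions for the two \emph{distinct} inequalities are mutually compatible rather than conflicting — which the diagonal-Gaussian case resolves cleanly.
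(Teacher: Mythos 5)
Your proposal is correct and follows essentially the same route as the paper: both decompose the mutual information as $h(\tilde{X})-h(\eta_j\odot\epsilon_j)$, compute the noise entropy exactly, and bound $h(\tilde{X})$ by the entropy of a Gaussian whose per-coordinate variances are $\text{Var}(X^{(j)}_{t-1,l})+\eta_{j,l}^2$. The only cosmetic difference is that you justify that last step via Gaussian max-entropy for fixed covariance plus Hadamard's inequality, whereas the paper invokes the maximum-entropy principle directly under the marginal-variance constraints; your two-step version also makes the equality conditions (Gaussianity and diagonal covariance) slightly more transparent.
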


\begin{proof}
We have
\begin{equation*}
\begin{aligned}
&I(\tilde{X}^{(j)(\eta_j)}_{t-1};X^{(j)}_{t-1})=H(\tilde{X}^{(j)(\eta_j)}_{t-1}) - H(\eta_j\cdot\epsilon_j)\\
&=H(\tilde{X}^{(j)(\eta_j)}_{t-1}) - \left(\frac{KM}{2}\text{log}(2\pi e)+\sum_{l=1}^{KM}\frac{1}{2}\text{log}(\eta_{j,l}^2)\right)\\
\end{aligned}
\end{equation*}

Here $H(\cdot)$ is differential entropy. For $\tilde{X}^{(j)(\eta_j)}_{t-1}$, its variance at the $l^{\text{th}}$ dimension is
\begin{equation*}
\begin{aligned}
\text{Var}(\tilde{X}^{(j)(\eta_{j})}_{t-1,l})&=\text{Var}(X^{(j)}_{t-1,l}+\eta_j\cdot\epsilon_j)\\
&=\text{Var}(X^{(j)}_{t-1,l})+\text{Var}(\eta_{j,l}\cdot\epsilon_{j,l})\\
&=\text{Var}(X^{(j)}_{t-1,l})+\eta_{j,l}^2
\end{aligned}
\end{equation*}

The second equality is due to that $X_{t-1}^{(j)}$ is independent of $\epsilon_j$. Using the principle of maximum entropy, the distribution that maximizes $H(\tilde{X}^{(j)(\eta_j)}_{t-1})$ subject to the constraint of $\text{Var}(\tilde{X}^{(j)(\eta_{j})}_{t-1,l})=\text{Var}(X^{(j)}_{t-1,l})+\eta_{j,l}^2, l=1,2,...KM$ is a Gaussian distribution whose diagonal covariance matrix $\Sigma$ satisfies $\Sigma_{l,l}=\text{Var}(X^{(j)}_{t-1,l})+\eta_{j,l}^2$. Its entropy is $H(\tilde{X}^{(j)(\eta_j)}_{t-1})=\frac{KM}{2}\text{log}(2\pi e)+\sum_{l=1}^{KM}\frac{1}{2}\text{log}(\eta_{j,l}^2+\text{Var}(X^{(j)}_{t-1,l}))$. Therefore,

\begin{equation*}
\begin{aligned}
&I(\tilde{X}^{(j)(\eta_j)}_{t-1};X^{(j)}_{t-1})\\
&\leq \left(\frac{KM}{2}\text{log}(2\pi e)+\sum_{l=1}^{KM}\frac{1}{2}\text{log}(\eta_{j,l}^2+\text{Var}(X^{(j)}_{t-1,l}))\right)-\left(\frac{KM}{2}\text{log}(2\pi e)+\sum_{l=1}^{KM}\frac{1}{2}\text{log}(\eta_{j,l}^2)\right)\\
&=\frac{1}{2}\sum_{l=1}^{KM}\text{log}\left(1+\frac{\text{Var}(X_{t-1,l}^{(j)})}{\eta_{j,l}^2}\right)\\
\end{aligned}
\end{equation*}
The equality is reached when $X_{t-1}^{(j)}$ obeys a multivariate Gaussian distribution with diagonal covariance matrix $\Sigma$ satisfying $\Sigma_{l,l}=\text{Var}(X^{(j)}_{t-1,l})+\eta_{j,l}^2$.
\end{proof}

\section{Implementation details for the methods}
\label{app:algorithm_implementation}

Here we state the implementation details for our method, as well as other methods being compared. Throughout this paper, unless otherwise specified, we use the standard k-nearest neighbor technique in \cite{kraskov2004estimating} to estimate the KL-divergence and mutual information (with number of neighbors $k=5$) and conditional mutual information (with number of neighbors $k=3$), which is used in our implementations of Mutual information, Transfer Entropy and Causal Influence.

\subsection{Our method}

Without stating otherwise, our method (Algorithm \ref{alg:learnable_noise}) as a default uses a three layer neural net, with two hidden layers having 8 neurons and leakyReLU ($\text{max}(0.3x,x)$) activation, and the last layer having linear activation. We set the number of fake time series $S=\text{max}(2,\ceil[\big]{N/2})$, and significance level $\alpha=0.05$. Adam \cite{kingma2014adam} optimizer with learning rate $=10^{-4}$ is used as default throughout this paper. We set $\eta_0=0.01$ and $\lambda=0.002$. We use 30000 epochs.
It also has a 400 epoch warm-up period where the mutual information term is turned off, to allow $f_\theta$ to find a good initial model as a start. We use the the upper bound (Eq. \ref{eq:empirical_upper_bound}) as the risk and also in estimating $W_{ji}$, as discussed in the main text in Section \ref{sec:our_method}. 
In this work, the relative noise amplitude $\chi_{j,l}=\frac{\eta_{j,l}}{\text{std}(X_{t-1,l}^{(j)})}$ is shared across the dimension $l$ for each time series $j$. This simplifies the risk calculation, and is invariant to the rescaling of each time series $X_{t-1}^{(j)}$. We also tested fully parameterizing $\chi_{j,l}$ with a similar performance.

\subsection{Transfer Entropy}
We use the definition of transfer entropy as defined in \cite{schreiber2000measuring}. In that work the transfer entropy is defined for two time series. To deal with multiple time series, we let $X_{t-1}^{(\hat{j})}$ also include other time series, similar to the extension of transfer entropy as in \cite{lizier2008local}.

\subsection{Causal Influence}
For causal influence \cite{janzing2013quantifying}, we use the same network architecture as in our method, to learn a prediction model. Then the KL divergence is estimated via the technique in \cite{kraskov2004estimating}.

\subsection{Linear Granger}

We follow the definition of linear Granger causality (Eq. (7) and (8) in \cite{ding2006granger}) to calculate linear Granger causality. Specifically, we calculate the residual squared error of a linear predictor of $x_{t-1}^{(i)}$ with and without $X_{t-1}^{(j)}$ (both with $\mathbf{X}_{t-1}^{(\hat{j})}$). Then the linear Granger causality equals the log of the ratio of the two residual squared errors.

\subsection{Kernel Granger}

We use the implementation\footnote{At \href{https://github.com/danielemarinazzo/KernelGrangerCausality}{https://github.com/danielemarinazzo/KernelGrangerCausality}.} for \cite{marinazzo2008kernel,marinazzo2008kernel2} for estimating kernel Granger causality. We use their default settings, with inhomogeneous polynomial (IP) kernel of degree $p=2$. We follow the normalization requirement of the algorithm to normalize the data for each experiment.

\subsection{Elastic Net}
We use elastic net \cite{zou2005regularization} with 5-fold time-series-split cross-validation, along the following regularization path: L1-ratio: 0.5, 0.8, 0.9, 0.95, 0.99, and strength of penalization $\alpha$ being a 200-step geometric series from $10^{-4}$ to $10^{-0.5}$. The score function used for cross-validation is the coefficient of determination ($R^2$). The elastic net is implemented with scikit-learn's ElasticNetCV module\footnote{At \href{https://scikit-learn.org/stable/modules/generated/sklearn.linear_model.ElasticNetCV.html}{https://scikit-learn.org/stable/modules/generated/sklearn.linear\_model.ElasticNetCV.html}.}, with optimization tolerance of $10^{-10}$.

\subsection{Gaussian Random}
For Gaussian Random, we draw 10,000 random matrices, each element of which is drawn from a standard Gaussian distribution.

\section{Implementation details for synthetic experiments}
\label{app:synthetic_exp}

For all experiments in this section, each metric is obtained by performing the experiments (including generation of the dataset and the training) ten times with seed = $0, 30, 60, 90, 120,150,180,210,240,270$ and averaging the resulting metrics (for Gaussian random matrices, for each true causal matrix $A$ sample 10,000 random matrices $\tilde{A}$). For the ground-truth causal tensor $A$, each element $A_{ji}$ is a $K\times M$ matrix, with 0.5 probability of being an all-zero matrix, and 0.5 probability of being a nonzero matrix. If $A_{ji}$ is a nonzero matrix, its each element is sampled from a log-normal distribution with $\mu=0$ and $\sigma=1$. For $B$, each $B_j$ is also a $K\times N$ matrix, with each element sampling from $U[-1,1]$. We use $\text{H}_1(x)=\text{softplus}(x)=\text{log}(1+e^x)$, and $\text{H}_2(x)=\text{tanh}(x)$ in equation (\ref{eq:synthetic}). 
As a default, 500 time series each with length of 22 are generated from Eq. (\ref{eq:synthetic}), each of which is wrapped into 19 $(\mathbf{X}_{t-1}, x_{t}^{(i)})$ pairs (since $K=3$), so there are in total $500 \times 19=9500$ examples for each dataset. Since we are using AUC as metrics where a threshold is not necessary, we neglect step 10 in Alg. \ref{alg:learnable_noise} for synthetic experiment. The train-test-split is 9:1 for all experiments in this paper. See Fig. S\ref{fig:synthetic_example_figure} for example snapshots of time series together with the corresponding $A_{ji}$ matrices.

\section{AUC-ROC table for synthetic experiment}
\label{app:synthetic_ROC_AUC}
Table S\ref{table:synthetic_larger_N_AUC_PR} show the AUC-ROC table for the synthetic experiment, where for each $N$, 10 datasets are randomly sampled according to Eq. (\ref{eq:synthetic}) using random seed 0, 30, 60, 90, 120, 150, 180, 210, 240, 270, over which each method is run and their metrics are accumulated. It has similar behavior as the AUC-PR table (Table \ref{table:synthetic_larger_N_AUC_PR}) in the main text.

\begin{suppfigure}[h!]
\centering
\begin{subfigure}{1\columnwidth}
\includegraphics[scale=0.33]{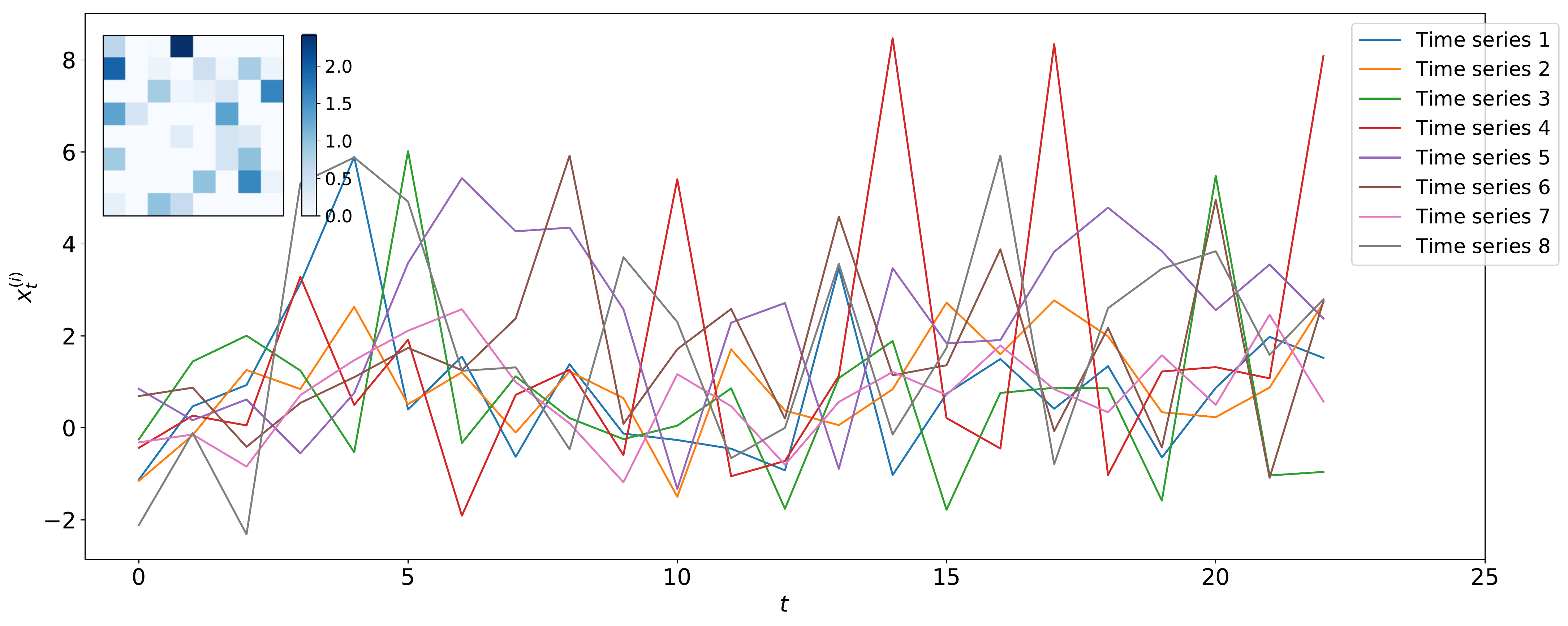}
\caption{}
\end{subfigure}
\begin{subfigure}{1\columnwidth}
\includegraphics[scale=0.33]{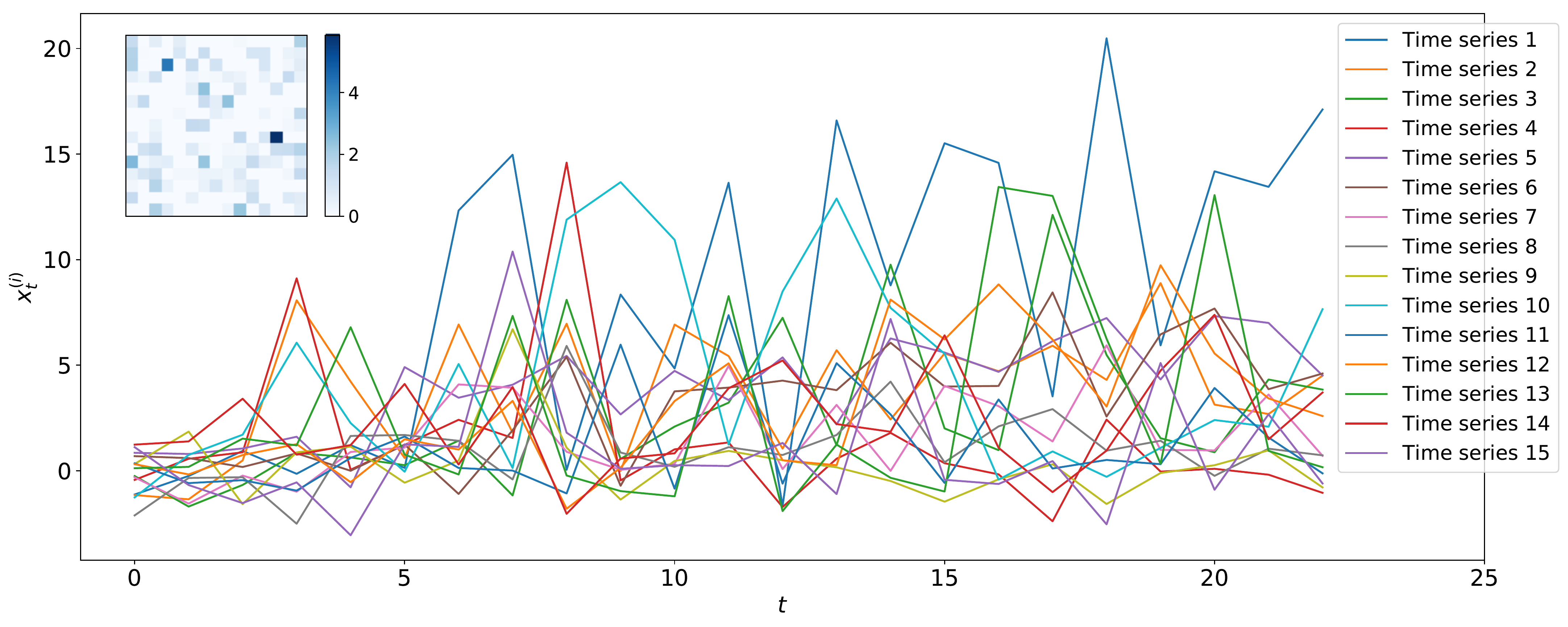}
\caption{}
\end{subfigure}
\begin{subfigure}{1\columnwidth}
\includegraphics[scale=0.33]{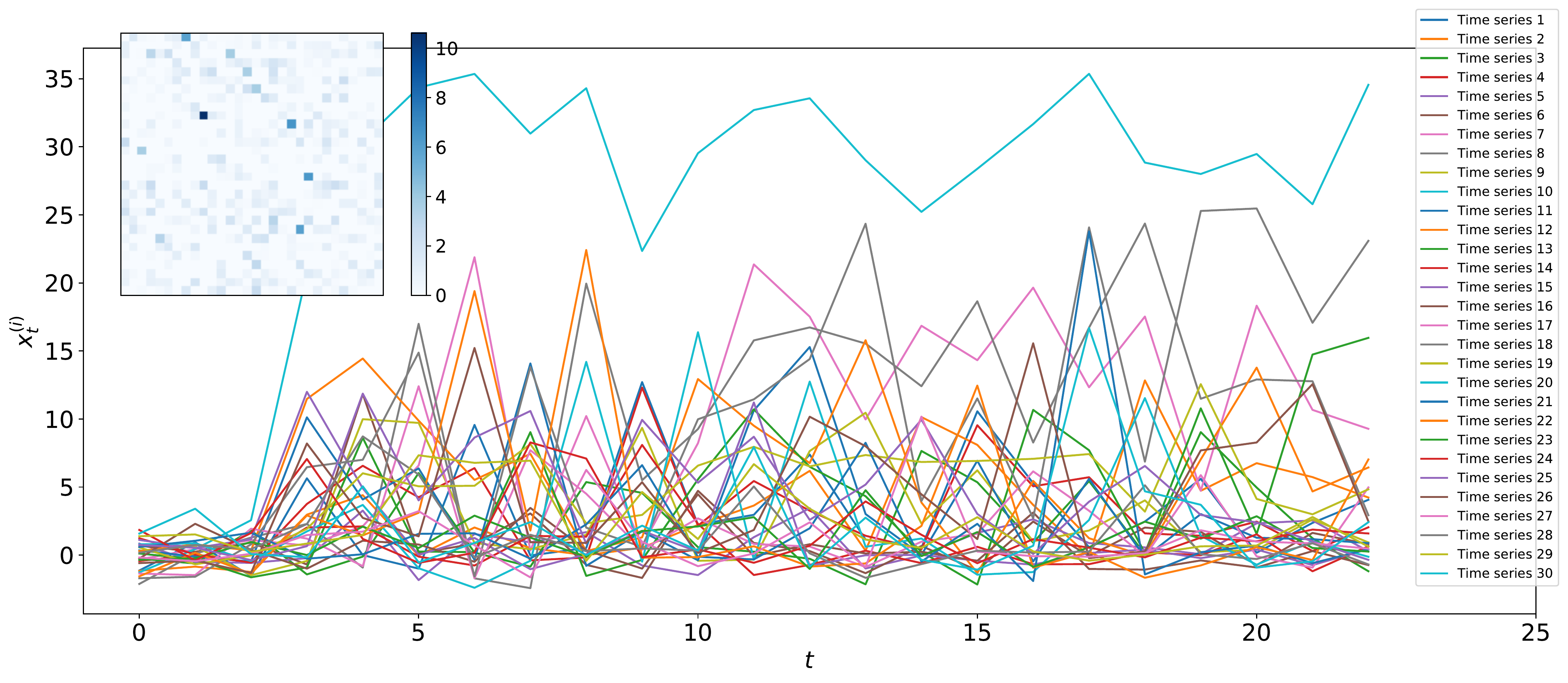}
\caption{}
\end{subfigure}
\caption{Example snapshots of the synthetic time series with (a) $N=8$, (b) $N=15$, and (c) $N=30$. The inset is the hidden underlying $|A_{ji}|$ matrix, whose $(j,i)$ element denotes the causal strength from time series $j$ to $i$. We see that the causal strength varies in orders, making it very difficult to identify each edge correctly.}
\label{fig:synthetic_example_figure}
\end{suppfigure}

\begin{supptable*}[t]
\caption{Mean and standard deviation of AUC-ROC (\%) vs. $N$, over 10 random sampling of datasets. Bold font marks the top method for each $N$.}
\centering
\begin{tabular}{p{2.75cm}p{1cm}p{1cm}p{1cm}p{1cm}p{1cm}p{1cm}p{1cm}p{1cm}}
\toprule
N &      3  &    4  &    5  &    8  &    10 &    15 &    20 &    30 \\
method             &         &       &       &       &       &       &       &       \\
\midrule
\textbf{MPIR (ours)}        &    95.3{\tiny$\pm$10.0} &  97.6{\tiny$\pm$4.1} &  \textbf{97.3}{\tiny$\pm$3.6} &  \textbf{96.0}{\tiny$\pm$2.4} &  \textbf{94.2}{\tiny$\pm$3.8} &  \textbf{91.0}{\tiny$\pm$3.5} &  \textbf{85.5}{\tiny$\pm$2.4} &  \textbf{76.8}{\tiny$\pm$3.5} \\
Mutual Information &    84.1{\tiny$\pm$18.9} &  90.0{\tiny$\pm$7.6} &  89.0{\tiny$\pm$1.8} &  87.2{\tiny$\pm$3.8} &  81.3{\tiny$\pm$5.3} &  77.5{\tiny$\pm$3.9} &  74.6{\tiny$\pm$3.0} &  72.0{\tiny$\pm$2.0} \\
Transfer Entropy   &    88.3{\tiny$\pm$14.6} &  95.6{\tiny$\pm$5.7} &  89.9{\tiny$\pm$8.7} &  84.4{\tiny$\pm$7.6} &  80.8{\tiny$\pm$5.1} &  69.6{\tiny$\pm$2.5} &  64.7{\tiny$\pm$2.5} &   59.2{\tiny$\pm$1.9} \\
Linear Granger     &    \textbf{98.8}{\tiny$\pm$4.0} &  96.2{\tiny$\pm$5.5} &  91.7{\tiny$\pm$8.9} &  84.1{\tiny$\pm$9.0} &  82.7{\tiny$\pm$7.2} &  73.6{\tiny$\pm$6.9} &  69.9{\tiny$\pm$4.1} &  60.0{\tiny$\pm$2.6} \\
Kernel Granger     &    98.1{\tiny$\pm$5.9} &  \textbf{98.0}{\tiny$\pm$4.4} &  95.4{\tiny$\pm$3.9} &  91.2{\tiny$\pm$2.6} &  89.5{\tiny$\pm$3.3} &  82.4{\tiny$\pm$2.2} &  76.2{\tiny$\pm$2.2} &  68.1{\tiny$\pm$1.3} \\
Elastic Net        &    97.5{\tiny$\pm$7.9} &  97.4{\tiny$\pm$4.5} &  95.3{\tiny$\pm$4.3} &  90.4{\tiny$\pm$5.1} &  87.7{\tiny$\pm$4.1} &  81.8{\tiny$\pm$3.1} &  77.8{\tiny$\pm$3.0} &  72.7{\tiny$\pm$1.4} \\
Causal Influence   &    62.9{\tiny$\pm$28.3} &  58.3{\tiny$\pm$13.8} &  60.4{\tiny$\pm$11.7} &  47.4{\tiny$\pm$7.5} &  50.7{\tiny$\pm$5.6} &  55.3{\tiny$\pm$3.3} &  51.0{\tiny$\pm$3.2} &   50.3{\tiny$\pm$1.6} \\
Gaussian random    &    49.9{\tiny$\pm$0.3} &  50.0{\tiny$\pm$0.1} &  50.0{\tiny$\pm$0.1} &  50.0{\tiny$\pm$0.0} &  50.0{\tiny$\pm$0.1} &  50.0{\tiny$\pm$0.0} &  50.0{\tiny$\pm$0.0} &  50.0{\tiny$\pm$0.0} \\
\bottomrule
\end{tabular}
\label{table:synthetic_larger_N_AUC_ROC}
\end{supptable*}

\section{Additional experiment: testing with model capacity variations}
\label{app:capacity}
Since in practice, we do not know the underlying causal structure \textit{a priori}, it presents a greater challenge to select the model capacity for $f_\theta$, as compared with supervised learning method where we can do cross-validation. To see how the capacity of the function approximator $f_\theta$ influences our method, we vary the number of layers and the number of neurons in each layer at $N=10$, using the same 10 datasets as in Section \ref{sec:synthetic}. Table S\ref{table:synthetic_capacity} summarizes the result. We see that our method's performance here is hardly influenced by the model capacity, with only a slight degradation at very low capacity. This shows that our method is quite tolerant and stable with model capacity variations. 

\begin{supptable}[t]
\caption{Average and standard deviation of AUC-PR and AUC-ROC for different network structures for $N=10$ with our method. Here for example, (8, 8, 8) means that the $f_\theta$ has 3 hidden layers, each with 8 neurons.}
    \centering
    \begin{tabular}{lrr}
\toprule
                           &  AUC-PR (\%) &  AUC-ROC (\%) \\
Neurons in hidden layers &         &          \\
\midrule
(8) &  90.0{\small$\pm$4.9} &   91.5{\small$\pm$4.3} \\
  (8, 8) &  93.4{\small$\pm$3.6} &   94.1{\small$\pm$3.7} \\
  (8, 8, 8) &  93.6{\small$\pm$3.6} &   94.4{\small$\pm$3.6} \\
   (8, 8, 8, 8) &  93.8{\small$\pm$4.1} &   94.2{\small$\pm$4.3} \\
 (16, 16) &  94.3{\small$\pm$3.3} &   94.4{\small$\pm$3.5} \\
 (16, 16, 16) &  94.6{\small$\pm$3.0} &   95.1{\small$\pm$2.6} \\
(16, 16, 16, 16) &  92.8{\small$\pm$4.4} &   94.0{\small$\pm$3.2} \\
\bottomrule
\end{tabular}
\label{table:synthetic_capacity}
\end{supptable}

\section{Details for the video game dataset}
\label{app:breakout}

Here, we implement a custom Atari Breakout game in the OpenAI Gym \cite{1606.01540} environment, mimicking the original game\footnote{A game playing video can be seen at \href{https://goo.gl/XGzppc}{https://goo.gl/XGzppc}.}, where we can access the state of the ball, paddle and bricks, etc. This representation is also used in the OO-MDP \cite{diuk2008object} paradigm for a more efficient representation of the environment state. We use the DQN algorithm, the same CNN architecture as in \cite{mnih2015human} to train an RL agent. Then we let it play the game for $\sim$45000 steps, obtaining a dataset with time-length of 45000 steps (if the agent dies, we restart the game) and 6 time series: action, paddle's $x$ position, ball's $x$ position, ball's $y$ position, number of bricks and reward. We then feed the time series (each time series normalized to mean of 0 and variance of 1) to our method, the same procedure as performed in the synthetic experiment, to let it produce an inferred matrix $W_{ji}$, which is shown in Fig. 1 in main text. All the datasets used in this paper and code will be open-sourced upon publication of the paper.

\section{Implementation details for experiment with heart-rate vs. breath-rate}
\label{app:real_dataset}

For the two real-world datasets, we obtain the data with the same procedure as in \cite{ancona2004radial} (See Fig.S\ref{fig:apnea_figure} for their plots). Then the data (each time series normalized to mean of 0 and variance of 1) are fed into our algorithm to infer the causal strength $W_{ji}$. For each $K=1,2,...20$, the experiments are run for 50 times with seed from 0 to 49, and Fig. \ref{fig:apnea} in the main text is obtained by averaging over the inferred $W$ matrix.

\begin{suppfigure}[t]
\begin{center}
\centerline{\includegraphics[width=1.0\columnwidth]{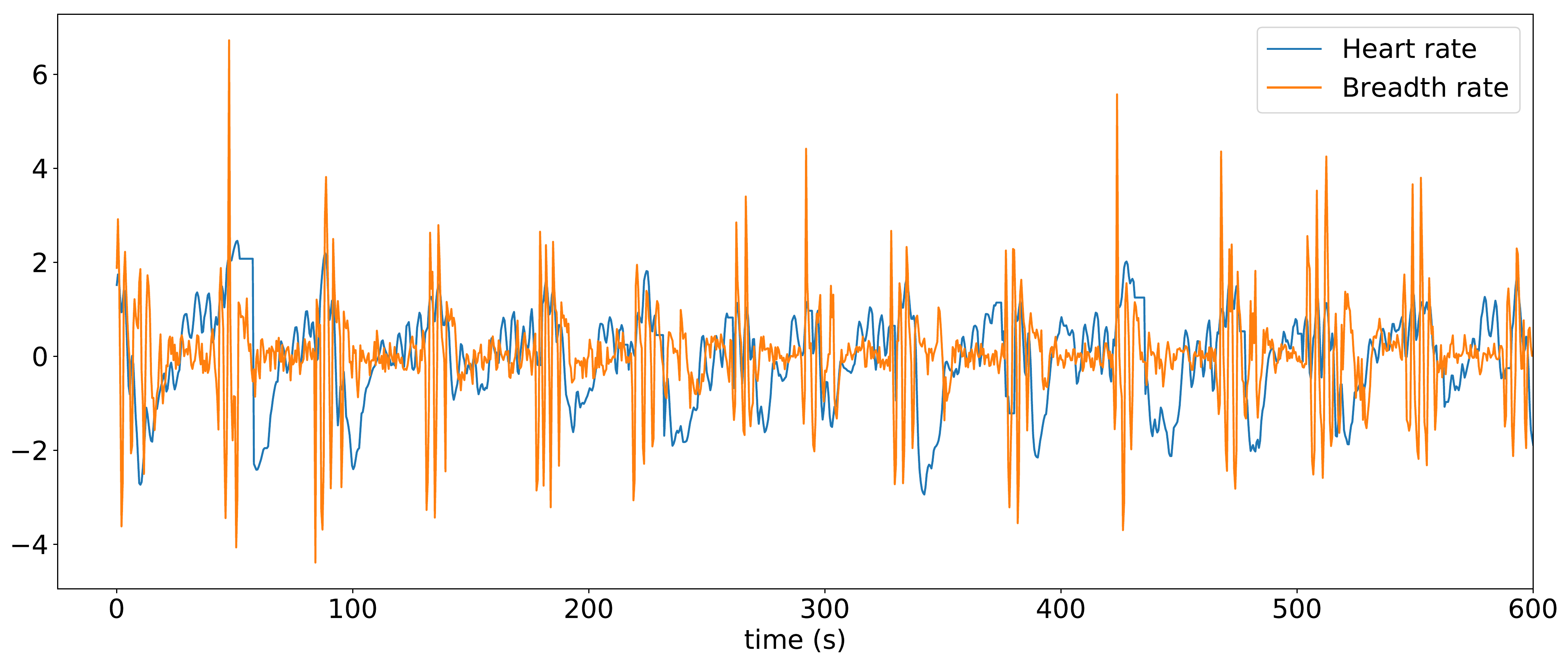}}
\caption{Time series of the heart rate and breath rate of a patient suffering sleep apnea. The data is normalized to have 0 mean and standard deviation of 1. Sample rate is 2Hz.}
\label{fig:apnea_figure}
\end{center}
\end{suppfigure}

\section{Additional experiment: rat EEG dataset}
\label{app:rat_EEG_experiment}

As another real-world example, we apply our algorithm to estimate the directional relations of the EEG signals between the right and left cortical intracranial electrodes \cite{ratEEG}, before and after lesion (see Fig. S\ref{fig:ratEEG_before} and S\ref{fig:ratEEG_after_figure} for the signals), also studied in \cite{ancona2004radial,quiroga2002performance,marinazzo2008kernel}. Figure S\ref{fig:ratEEG_W} (left) shows the inferred predictive strength $W_{ji}$ for the EEG signals of a normal rat. We see that there is only a slight asymmetry, with the right channel having a slightly stronger influence on the left channel than the reverse direction. Fig. S\ref{fig:ratEEG_W} (right) shows $W_{ji}$ for the EEG signals with unilateral lesion in the rostral pole of the reticular thalamic nucleus. We see that there is stronger predictive strength from the left to the right channels. Compared with the result of previous works \cite{ancona2004radial,marinazzo2008kernel} as also shown in Fig. S\ref{fig:ratEEG_compare}, we see that all methods correctly infer the directional relations before and after brain lesion. In addition, our method shows only a slight decay of predictive strength with increasing history length, in contrast to the much more rapid decay of causality index in \cite{ancona2004radial}, again demonstrating our method's insensitivity against history length, due to its flexibility in extracting the right amount of information in order to predict the future. This experiment and the breadth rate vs. heart rate experiment in Section \ref{sec:heart_rate} demonstrate our method's capability in inferring the directional relations from noisy, real-world data.

\begin{suppfigure}[t]
\begin{center}
\centerline{\includegraphics[width=1\columnwidth]{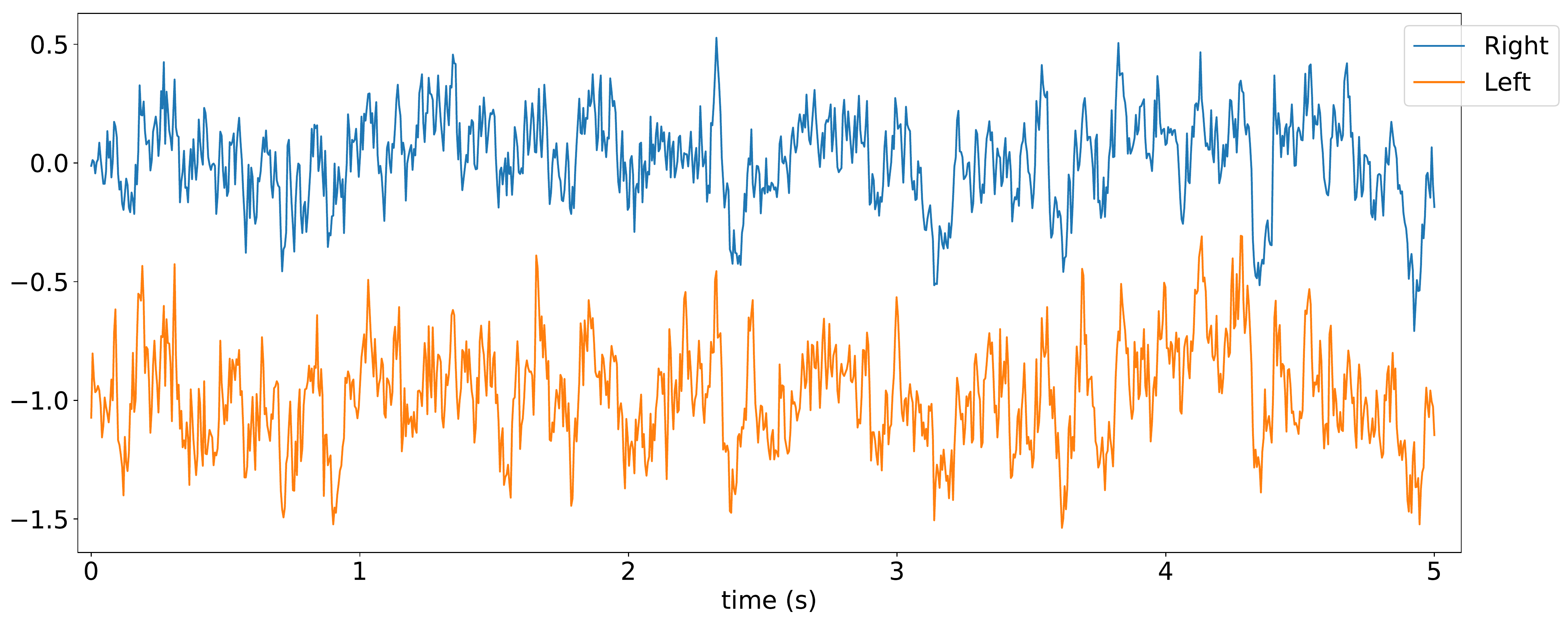}}
\caption{Time series of a normal rat EEG signals from right and left cortical intracranial electrodes. The data is normalized to have 0 mean and standard deviation of 1, and the left signal is plotted with offset for better visualization. Sample rate is 200Hz.}
\label{fig:ratEEG_before}
\end{center}
\vskip -0.3in
\end{suppfigure}

\begin{suppfigure}[t]
\begin{center}
\centerline{\includegraphics[width=1\columnwidth]{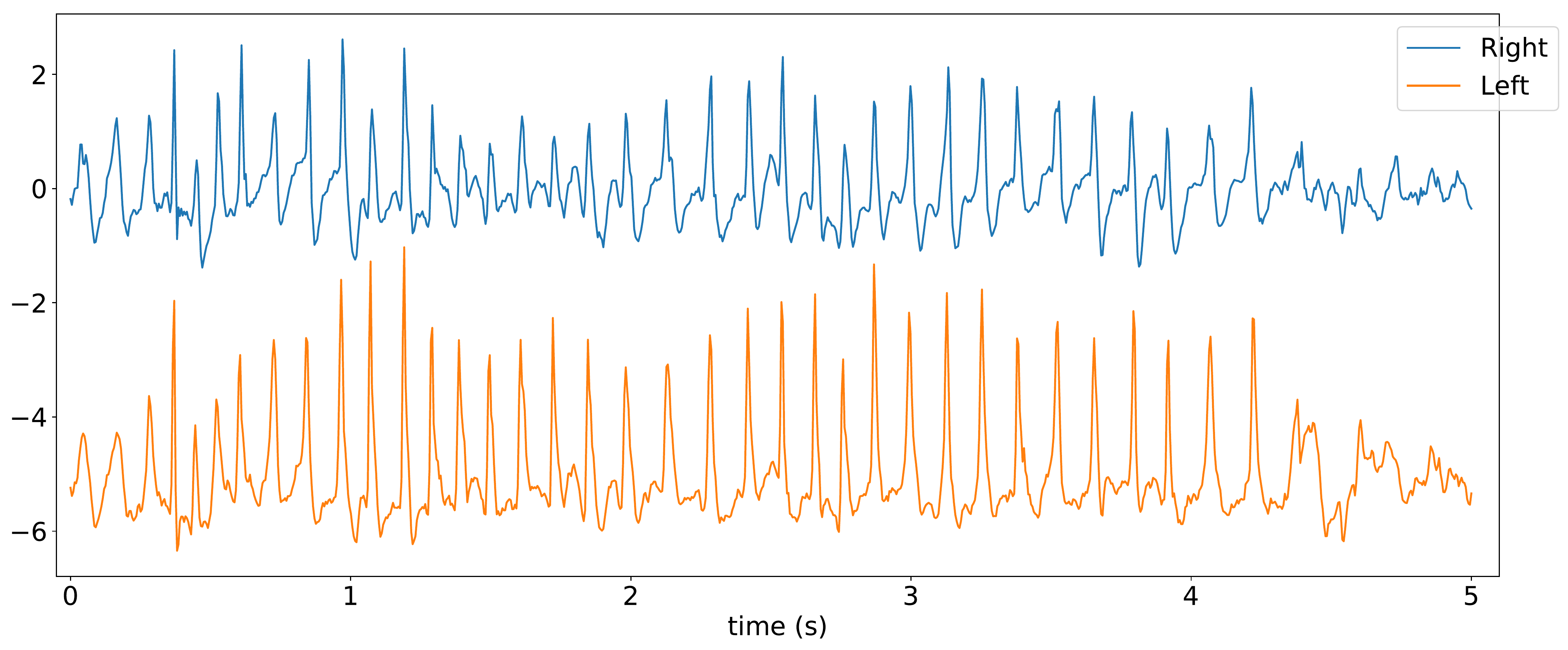}}
\caption{Time series of a rat EEG signals from right and left cortical intracranial electrodes, after lesion. The data is normalized to have 0 mean and standard deviation of 1, and the left signal is plotted with offset for better visualization. Sample rate is 200Hz.}
\label{fig:ratEEG_after_figure}
\end{center}
\vskip -0.3in
\end{suppfigure}

\begin{suppfigure}[t]
\begin{center}
\centerline{\includegraphics[width=0.8\columnwidth]{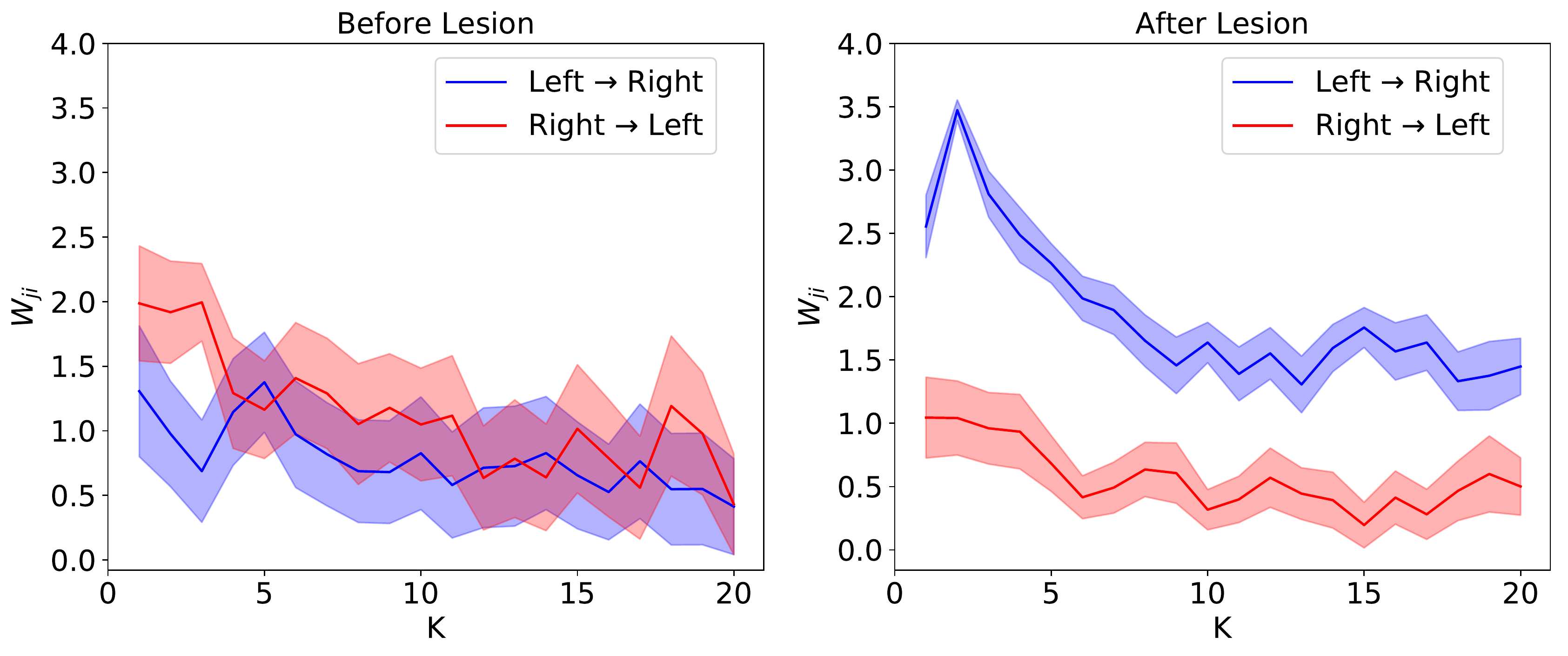}}
\caption{Predictive strength inferred by our method with the EEG datasets, for different maximum time horizon $K$, averaged over 50 initializations of $f_\theta$, for a normal rat (left) and after brain lesion (right).}
\label{fig:ratEEG_W}
\end{center}
\vskip -0.3in
\end{suppfigure}

\begin{suppfigure}
\centering
\begin{subfigure}{.75\linewidth}
\includegraphics[scale=0.55]{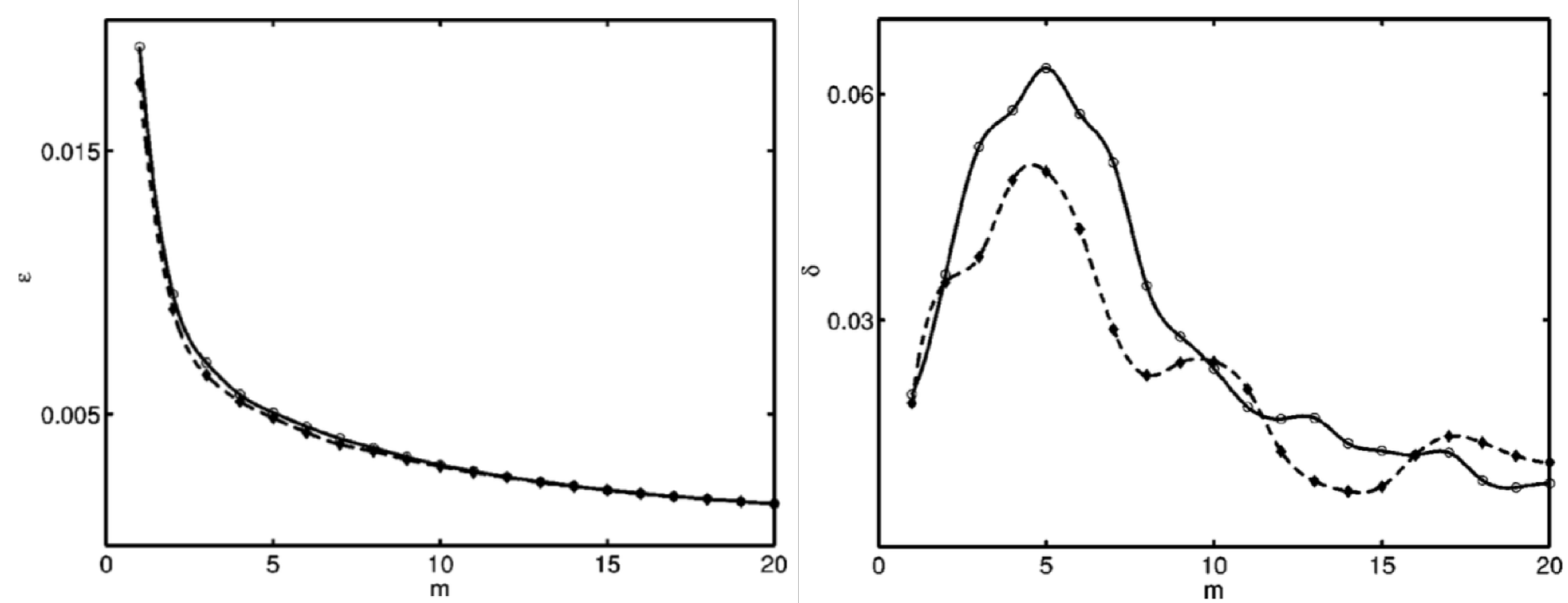}
\caption{}
\end{subfigure}
\begin{subfigure}{.6\linewidth}
\includegraphics[scale=0.55]{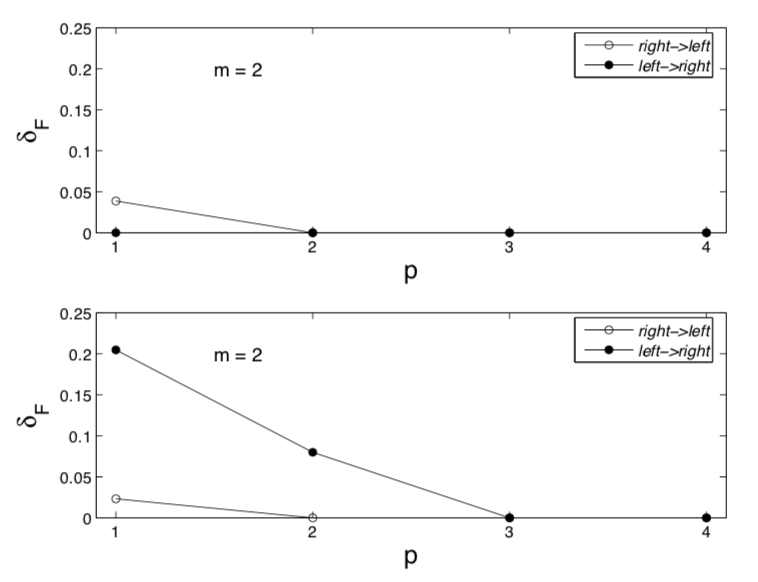}
\caption{}
\end{subfigure}
\caption{Causal indices for the rat EEG dataset with previous methods. (a) By \cite{ancona2004radial}. Left: the variance for the left EEG (open circles) and right EEG (diamonds) vs. time lag $m$ before brain lesion. Right: the causality index after brain lesion. (b) By \cite{marinazzo2008kernel2}. The filtered causality index vs. varying $p$, the order of the inhomogeneous polynomial kernel, before (upper) and after (lower) brain lesion.}
\label{fig:ratEEG_compare}%
\end{suppfigure}

\end{document}